\newtheorem{theorem}{Theorem}[section]
\newtheorem{claim}[theorem]{Claim}
\newtheorem{lemma}[theorem]{Lemma}
\newtheorem{remark}[theorem]{Remark}
\newtheorem{corollary}[theorem]{Corollary}
\newtheorem{definition}[theorem]{Definition}
\newtheorem{proposition}[theorem]{Proposition}
\newtheorem{example}[theorem]{Example}
\DeclareMathOperator*{\E}{\mathbb{E}}
\newcommand{\Ind}[1]{\mathbbm{1}\left[#1\right]}  
\DeclareMathOperator*{\argmin}{argmin}
\newcommand{\cH}{\mathcal{H}}
\newcommand{\cL}{\mathcal{L}}
\newcommand{\fpdisp}{\mathrm{FPR}}
\newcommand{\OPT}{\mathrm{OPT}}
\newcommand{\cD}{\mathcal{D}}
\newcommand{\X}{\mathcal{X}}    
\newcommand{\Y}{\mathcal{Y}}    
\renewcommand{\H}{\mathcal{H}}  
\newcommand{\convH}{\Delta(\H)}   
\newcommand{\D}{\mathcal{D}}    
\newcommand{\cP}{\mathcal{P}}    
\DeclareMathOperator*{\Prob}{\mathbb{P}} 
\newcommand{\plus}{\mathbf{+1}}  
\newcommand{\minus}{\mathbf{-1}}  
\newcommand{\plusa}{\mathbf{+a}}  
\newcommand{\minusa}{\mathbf{-a}}  
\DeclareMathOperator*{\Expectation}{\mathbb{E}}
\newcommand{\Ex}[2]{\Expectation_{#1}\left[#2\right]}
\newcommand{\Reg}{\mathrm{Reg}}
\newcommand{\RE}{\mathrm{RE}}
\newcommand{\FO}{\mathrm{FairCSC}}
\newcommand{\cgeneral}[1]{c^{(#1)}}
\newcommand{\cplus}{\cgeneral{+1}}
\newcommand{\cminus}{\cgeneral{-1}}
\title{Equal Opportunity in Online Classification with Partial Feedback}
\author{
Yahav Bechavod\thanks{School of Computer Science and Engineering, The Hebrew University. Email: \texttt{yahav.bechavod@cs.huji.ac.il}.}
\and 
Katrina Ligett\thanks{School of Computer Science and Engineering, The Hebrew University. Email: \texttt{katrina@cs.huji.ac.il}.}
\and
Aaron Roth\thanks{Department of Computer and Information Science, University of Pennsylvania. Email: \texttt{aaroth@cis.upenn.edu}.}
\and
Bo Waggoner\thanks{Department of Computer Science, University of Colorado. Email: \texttt{bwag@colorado.edu}.}
\and
Zhiwei Steven Wu\thanks{Computer Science and Engineering Department, University of Minnesota. Email: \texttt{zsw@umn.edu}.}
}
\begin{document}

\maketitle

\begin{abstract}
  We study an online classification problem with partial feedback in which individuals arrive one at a time from a fixed but unknown distribution, and must be classified as positive or negative. Our algorithm only observes the true label of an individual if they are given a positive classification. This setting captures many classification problems for which fairness is a concern: for example, in criminal recidivism prediction, recidivism is only observed if the inmate is released; in lending applications, loan repayment is only observed if the loan is granted. We require that our algorithms satisfy common statistical fairness constraints (such as equalizing false positive or negative rates --- introduced as ``equal opportunity'' in \cite{HPS16}) \emph{at every round}, with respect to the underlying distribution. We give upper and lower bounds characterizing the cost of this constraint in terms of the regret rate (and show that it is mild), and give an \emph{oracle efficient} algorithm that achieves the upper bound.
\end{abstract}

\newpage

\section{Introduction}

Many real-world prediction tasks in which fairness concerns arise --- such as online advertising, short-term hiring, lending micro-loans, and predictive policing --- are naturally modeled as online binary classification problems, but with an important twist: feedback is only received for one of the two classification outcomes. Clickthrough is only observable if the advertisement was shown in the first place; worker performance is only observed for candidates who were actually hired; those who are denied a loan never have an opportunity to demonstrate that they would have repaid; only if police troops were dispatched to a precinct are they able to detect unreported crimes. Applying standard techniques for enforcing statistical fairness constraints on the \emph{gathered data} can thus lead to pernicious feedback loops that can lead to classifiers that badly violate these constraints on the underlying distribution. This kind of failure to ``explore'' has been highlighted as an important source of algorithmic unfairness --- for example, in predictive policing settings \cite{LI16,ensign2018runaway,ensign-apples}.

To avoid this problem, it is important to explicitly manage the exploration/exploitation tradeoff that characterizes learning in partial feedback settings, which is what we study in this paper. We ask for algorithms that enforce well-studied statistical fairness constraints across two protected populations (we focus on the ``equal opportunity'' constraint of \cite{HPS16}, which enforces equalized false positive rates or false negative rates, but our techniques also apply to other statistical fairness constraints like ``statistical parity'' \cite{awareness}). In particular, we ask for algorithms that satisfy these constraints (with respect to the unknown underlying distribution) at every round of the learning procedure. The result is that the fairness constraints restrict how our algorithms can \emph{explore}, not just how they can exploit, which makes the problem of fairness-constrained online learning substantially different from in the batch setting. The main question that we explore in this paper is: ``\emph{how much does the constraint of fairness impact the regret bound of learning algorithms?}''

\subsection{Our Model and Results}
In our setting, there is an unknown distribution $\D$ over \emph{examples}, which are triples $(\hat x,a,y) \in \X \times \{-1,1\} \times \{-1,1\}$.
Here $\hat x \in \X$ represents a vector of features in some arbitrary \emph{feature space}, $a \in A = \{\pm 1\}$ is the \emph{group} to which this example belongs (which we also call the \emph{sensitive feature}), and $y \in \Y=\{\pm 1\}$ is a binary label. We write $x$ to denote a pair $(\hat x,a)$ -- the set of all features (including the sensitive one) that the learner has access to.

In each round $t\in [T]$, our learner selects hypotheses from a \emph{hypothesis class} $\H$ consisting of functions $h: \X \times A \to \Y$ recommending an action (or \emph{label}) as a function of the features (potentially including the sensitive feature). We take the positive label to be the one that corresponds to observing feedback (hiring a worker, admitting a student, approving a loan, releasing an inmate, etc.) We allow algorithms that randomize over $\H$. Let $\convH$ be the set of probability distributions over $\H$. We refer to a $\pi \in \convH$ as a \emph{convex combination of classifiers}.

\begin{definition}[False positive rate]
  For a fixed distribution $\D$ on examples, we define the false positive rate (FPR)
  of a convex combination of classifiers $\pi \in \convH$ on group $j \in \{\pm 1\}$ to be
  \begin{align*}
    FPR_j(\pi) &= \Prob(\pi(x)=+1|a=j,y=-1) = \E_{h \sim \pi}\left[\Prob_{(x,y)\sim \D}(h(x)=+1|a=j,y=-1)\right].
  \end{align*}
\end{definition}

We denote the \emph{difference} between false positive
rates between populations as
\begin{align*}
\Delta_{FPR}(\pi) :&= FPR_{1}(\pi) - FPR_{-1}(\pi).
\end{align*}
The fairness constraint we impose on our classifiers in this paper asks that false positive rates be approximately equalized across populations \emph{at every round} $t$. Throughout, analogous results hold for false negative rates. These constraints were called \emph{equal opportunity} constraints in \cite{HPS16}.
\begin{definition}[$\gamma$-equalized rates \cite{HPS16}]
  Fix a distribution $\D$.
  A convex combination $\pi \in \convH$ satisfies the \emph{$\gamma$-equalized false positive rate} (\emph{$\gamma$-EFP}) constraint if $|\Delta_{FPR}(\pi)| \leq \gamma$.
  The   \emph{$\gamma$-equalized false negative rate ($\gamma$-EFN)} constraint is defined analogously.

 We informally use the term \emph{$\gamma$-fair} to refer to such a classifier or combination of classifiers.
\end{definition}

As we will see in Definition~\ref{def:gammaEFPdelta}, we will actually
allow our algorithm to have a tiny probability of ever breaking the
fairness constraint.

\begin{remark}
The sources of unfairness we deal with here are the differential abilities of models in $\cH$ to predict on different populations (which we inherit from the batch setting), and the biased data collection inherent in online partial information settings. We use equal opportunity constraints only as a canonical example of a statistical fairness constraint and do not take the position that it is always the right one. Our techniques also apply to other constraints like statistical parity.
\end{remark}

Note that the fairness constraint is defined with respect to the true underlying distribution $\D$.
One of the primary difficulties we face is that in early rounds, the learner has very little information about $\D$, and yet is required to satisfy the fairness constraint with respect to $\D$.

It is straightforward to see (and a consequence of a more general lower bound that we prove) that a $\gamma$-fair algorithm cannot in general achieve non-trivial regret to the set of $\gamma$-fair convex combinations of classifiers, because of ongoing statistical uncertainty about the fairness level for all non-trivial classifiers. Thus our goal is to minimize our regret to the $\gamma$-fair convex combination of classifiers that has the lowest classification error on $\D$, while guaranteeing that our algorithm only deploys convex combinations of classifiers that guarantee fairness level $\gamma'$ for some $\gamma' > \gamma$. Clearly, the optimal regret bound will be a function of the gap $(\gamma'-\gamma)$, and one of our aims is to characterize this tradeoff.

\paragraph{An initial approach.}
Even absent fairness constraints, the problem of learning from partial feedback is challenging and has been studied under the name ``apple tasting''\cite{appletasting}.
Via standard techniques, it can be reduced to a contextual bandit problem \cite{BGPS10}. Therefore, an initial approach starts with the observation (Lemma~\ref{support_size}) that although the set of ``fair distributions over classifiers'' is continuously large, the ``fair empirical risk minimization'' problem only has a single constraint, and so we may without loss of generality consider distributions over hypotheses $\H$ that have support of size 2. By an appropriate discretization, this allows us to restrict attention to a finite net of classifiers whenever $\H$ itself is finite. From this observation, one could employ a simple strategy to obtain an information theoretic result: pair an ``exploration'' round in which all examples are classified as positive (so as to gather label information and rule out classifiers that substantially violate the fairness constraints on the gathered data), with an ``exploitation'' round in which a generic contextual bandits algorithm like a variant of EXP4 \cite{EXP4,freedman} is run over the surviving finite (but exponentially large) number of empirically fair distributions from the net. This simple approach yields the following bound: For any parameter $\alpha \in [0.25, 0.5]$, there is an algorithm that obtains a regret bound of $O(T^{2\alpha})$ to the best $\gamma$ fair classifier  while satisfying a $\gamma'$-fairness constraint at every round with a gap of $(\gamma'-\gamma) = O(T^{-\alpha})$.

\paragraph{Our results.}
We show that the tradeoff achieved by the inefficient algorithm is tight by proving a lower bound in Section \ref{lower_res}.
In some sense, the computational inefficiency of the simple bandits reduction above is unavoidable, because we measure the regret of our learner with respect to 0/1 classification error, which is computationally hard to minimize, even for very simple classes $\H$ (see, e.g., \cite{hard1,hard2,hard3}). However, we can still hope to give an \emph{oracle efficient algorithm} for our problem. This approach, which is common in the contextual bandits literature, assumes access to an ``oracle'' which can in polynomial time solve the empirical risk minimization problem over $\H$ (absent fairness constraints), and is an attractive way to isolate the ``hard part'' of the problem that is often tractable in practice. Our main result, to which we devote the body of the paper, is to show that access to such an oracle is sufficient to give a polynomial-time algorithm for the fairness-constrained learning problem, matching the simple information theoretically optimal bounds described above. To do this, we use two tools. Our high-level strategy is to apply the oracle efficient stochastic contextual bandit algorithm from \cite{minimonster}. In order to do this, we need to supply it with an offline learning oracle for the set of classifiers that can with high probability be certified to satisfy our fairness constraints given the data so far. We construct an approximate oracle for this problem (given a learning oracle for $\H$) using the oracle-efficient reduction for offline fair classification from \cite{MSR}. We need to overcome a number of technical difficulties stemming from the fact that the fair oracle that we can construct is only an approximate empirical risk minimizer, whereas the oracle assumed in \cite{minimonster} is exact. Moreover, the algorithm from \cite{minimonster} assumes a finite hypothesis class, whereas we need to obtain no regret to a continuous family of distributions over hypotheses. The final result is an oracle-efficient algorithm trading off between regret and fairness, allowing for a regret bound of $O(T^{2\alpha})$ to the best $\gamma$-fair classifier while satisfying $\gamma'$-fairness at every round, with a gap of $\gamma'-\gamma = O(T^{-\alpha})$ for $\alpha \in [0.25,0.5]$.

\begin{figure}[t]
\centering
\begin{tikzpicture}
\fill[fill=red,opacity=0.3]
    (3,3.75)
 -- (1,3.65909)
 -- (1,1.5)
 -- (2,2);

\fill[fill=red,opacity=0.3]
    (-1,3.52272)
 -- (-1,1.28571)
 -- (-1.75,1.5)
 -- (-1.5,3.5);

\fill[fill=green,opacity=0.3]
	(1,1.5)
 -- (1,3.65909)
 -- (-1,3.52272)
 -- (-1,1.28571)
 -- (0,1);

\draw[help lines, color=gray!30, dashed] (-4.9,0) grid (4.9,4.9);
\draw[->,ultra thick] (-5,0)--(5,0) node[right]{$\Delta_{FPR}(\pi)$};
\draw[->,ultra thick] (0,0)--(0,5) node[above]{$\Prob(\pi(x,a)=y)$};
\draw[gray, dashed] (-1,0) -- (-1,5);
\draw[gray, dashed] (1,0) -- (1,5);
\filldraw[black] (-1,0) circle (2pt) node[anchor=north] {$-\gamma$};
\filldraw[black] (1,0) circle (2pt) node[anchor=north] {$\gamma$};
\filldraw[black] (0,1) circle (2pt) node[anchor=east] {$pos$};
\filldraw[black] (0,3) circle (2pt) node[anchor=east] {$neg$};
\filldraw[black] (3,3.75) circle (2pt) node[anchor=west] {$f_1$};
\filldraw[black] (-1.5,3.5) circle (2pt) node[anchor=east] {$f_2$};
\filldraw[black] (-1.75,1.5) circle (2pt) node[anchor=north] {$f_3$};
\filldraw[black] (2,2) circle (2pt) node[anchor=west] {$f_4$};

\draw[gray, thick] (3,3.75) -- (-1.5,3.5);
\draw[gray, thick] (-1.5,3.5) -- (-1.75,1.5);
\draw[gray, thick] (-1.75,1.5) -- (0,1);
\draw[gray, thick] (0,1) -- (2,2);
\draw[gray, thick] (2,2) -- (3,3.75);

\filldraw[black] (1,3.65909) circle (2pt) node[anchor=south] {$OPT$};
\end{tikzpicture}
\caption{An illustration of the feasible accuracy, fairness levels using the set of policies $\convH$ for class $\H = \{pos,neg,f_1,f_2,f_3,f_4\}$. The green+red regions mark all feasible accuracy, fairness levels for convex combinations ($\convH$). The green region marks all feasible accuracy, fairness levels for $\gamma$-EFP convex combinations in $\convH$. $OPT$ marks the optimal $\gamma$-EFP policy in $\convH$. $pos$, $neg$ stand for the constant $+1, -1$ classifiers, respectively.} \label{fig:Convex}
\end{figure}
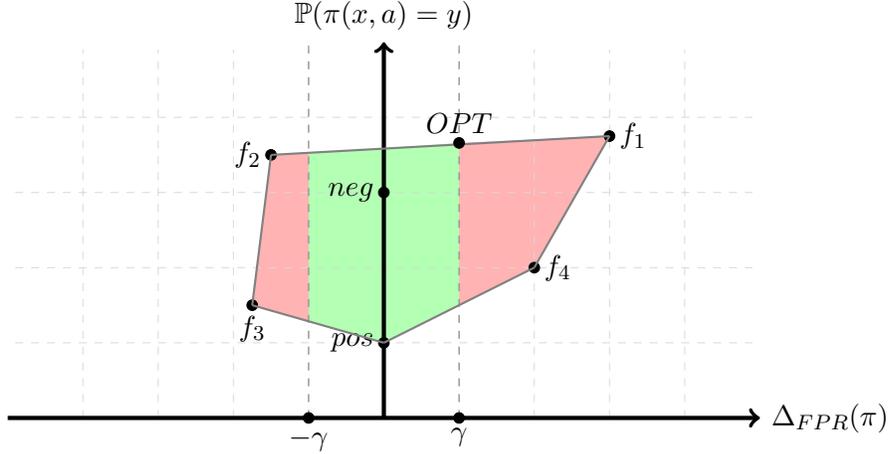

\subsection{Additional Related Work}
We build upon two lines of work in the fair machine learning literature. Much of this literature studies batch classification problems under a variety of statistical fairness constraints, each approximately equalizing a statistic of interest across protected sub-populations: raw classification rates \cite{CV10,KAS11,feldman2015certifying} (\emph{statistical parity} \cite{awareness}), positive predictive value \cite{KMR16,Chou17}, and false positive and false negative rates~\cite{KMR16,Chou17,HPS16} ; see \cite{survey} for more examples. One of the attractions of this family of constraints is that they can generally be enforced in the batch setting without assumptions about the data distribution.

There is also a literature on fair online classification and regression in the contextual bandit setting \cite{JKMR16,JKMNR18,liu2017calibrated}. These papers have studied the achievable regret when the learning algorithm must satisfy a fairness constraint \emph{at every round}, as we require in this paper. However, previous work has demanded stringent \emph{individual} fairness constraints that bind on particular pairs of individuals, rather than just the average behavior of the classifier over large groups (as statistical fairness constraints do). As a result, strong realizability assumptions had to be made in order to derive non-trivial regret bounds (and even in the realizable setting, simple concept classes like conjunctions were shown to necessitate an exponentially slower learning rate when paired with individual fairness constraints \cite{JKMR16}). Our paper interpolates between these two literatures: we ask for \emph{statistical} fairness constraints to be enforced at every round of a learning procedure, and show that in this case, even without any assumptions at all, the effect of the fairness constraint on the achievable regret bound is mild.

Recently, \cite{BGLS18} considered the problem of enforcing statistical fairness in a full information online learning setting, but from a very different perspective. They showed that in the \emph{adversarial} setting, it can be impossible to satisfy equalized false positive and false negative rate constraints averaged over history, even when the adversary is constrained so that each individual classifier in the hypothesis class individually satisfies the constraint. In contrast, they show that it is possible to do this for the equalized error rates constraint. Our setting is quite different: on the one hand, we require that our algorithm satisfy its fairness constraint \emph{at every round}, not just on average over the history, and we work in a partial information setting. On the other hand, we assume that examples are drawn from a distribution, rather than being adversarially chosen.

\section{Additional Preliminaries} \label{sec:definitions}
Throughout the paper, we assume $\plus, \minus \in \H$, where $\plus$ and $\minus$ are the two \emph{constant} classifiers (that is, $\plus(x) = 1$ and $\minus(x) = -1$ for all $x$). In some cases, we will additionally assume $\plusa, \minusa \in \H$, where $\plusa$ and $\minusa$ are the identity function (and its negation) on the sensitive feature (that is, $\plusa(\hat x,a) = a$ and $\minusa(\hat x,a) = -a$ for all $\hat x,a$).

\paragraph{The Online Setting:}
The learner interacts with the environment as follows.\\

\noindent\fbox{%
    \parbox{\textwidth}{%
\centerline{Online Learning in Our Partial Feedback Setting}
\begin{algorithmic}
\FOR {$t=1,...,T$}
	\STATE Learner chooses a convex combination $\pi_t \in \convH$.
	\STATE Environment draws $(x_t,y_t) \sim \mathcal{D}$ independently; learner observes $x_t$.
	\STATE Learner labels the point $\hat{y}_t = h_t(x_t)$, where $h_t \sim \pi_t$.
        \IF{$\hat{y}_t = +1$} \STATE Learner observes $y_t$.
        \ENDIF
\ENDFOR
\end{algorithmic}
    }%
}\\~\\

We illustrate our setting with an example.
\begin{example}
  A lender must accept (label $+1$) or reject (label $-1$) loan applications from individuals who may be either female (group $F$) or male (group $M$).
  After accepted applicants receive a loan, their true quality is observed: $+1$ if they repay the loan, and $-1$ if they do not repay.
  Rejected applicants' true quality is unobserved.
  Suppose every application's group and true quality are drawn independently and uniformly from $\{F,M\} \times \{\pm 1\}$.
  If $h_1$ labels all females' submissions $+1$ and all males' $-1$, then $FPR_F(h_1) = 1$, $FNR_F(h_1) = 0$, $FPR_M(h_1) = 0$, and $FNR_M(h_1) = 1$.
  If $h_2(x) = -h_1(x)$ (accepts if and only if male) and $\pi$ mixes uniformly between $h_1$ and $h_2$, then $\pi$ is equivalent to a random coin toss, and satisfies both $0-EFP$ and $0-EFN$.
\end{example}

\paragraph{Regret:}
We measure a learner's performance using $0$-$1$ loss, $\ell(\hat{y}_t,y_t) = \Ind{\hat{y}_t \neq y_t}$.
Given a class of distributions $\cP$ over $H \subseteq \H$ and a sequence of $T$ examples, the optimal convex combination of hypotheses from $H$ in hindsight is defined as
$\pi^*(\cP) = \argmin_{\pi \in \cP} \sum_{t=1}^T \Expectation_{h \sim \cP}[\ell(h(x_t), y_t)]$.

A learner's (pseudo)-regret  with respect to $\cP$ is
  \[ \text{Regret} = \sum_{t=1}^T \Expectation_{(x_t,y_t) \sim \D} [\ell(h(x_t), y_t)] ~ - ~ \sum_{t=1}^T \Expectation_{(x_t,y_t) \sim \D \ h \sim \pi^*(\cP)} [\ell(h(x_t), y_t)] . \]
In particular, when $\cP = \{\pi \in \Delta(\H) : \text{$\pi$ satisfies $\gamma$-EFP}\}$, we call this the learner's $\gamma$-EFP regret.

Finally, we ask for online learning algorithms that satisfy the following notion of fairness:
\begin{definition}[$\gamma$-EFP($\delta$) online learning algorithm]\label{def:gammaEFPdelta}
An online learning algorithm is said to satisfy $\gamma$-EFP($\delta$) fairness (for $\delta \in [0,1]$) if, with probability $1-\delta$ over the draw of $\{(x_t,a_t,y_t)\}_{t=1}^T \sim \D^T$, simultaneously for all rounds $t \in [T]$: $\pi_t$ satisfies $\gamma$-EFP.
\end{definition}

\paragraph{Cost Sensitive Classification Algorithms:}
We aim to give oracle-efficient online learning algorithms --- that
is, algorithms that run in polynomial time per round, assuming access
to an oracle which can solve the corresponding offline empirical risk
minimization problem. Concretely, we assume oracles for solving
\emph{cost sensitive classification (CSC)} problems over $\H$, which
are defined by a set of examples $x_j$ and a set of weights
$c_j^{-1}, c_j^{+1} \in \mathbb{R}$ corresponding to the cost of a
negative and positive classification respectively.
\begin{definition}
Given an instance of a CSC problem $S = \{x_j, c_j^{-1}, c_j^{+1}\}_{j=1}^n$, a \emph{CSC oracle} $\mathcal{O}$ for $\H$ returns $\mathcal{O}(S) \in \arg\min_{h \in \H} \;  \sum_{j=1}^n \cgeneral{h(x_j)}_j$. From these oracles, we will  construct $\nu$-approximate CSC oracles that may have restricted ranges $\Pi \subseteq \Delta(\H)$. Such oracles return $\mathcal{O}_{\nu}(S) = \pi \in \Pi$ such that $\Expectation_{h \sim \pi}[\sum_{j=1}^n \cgeneral{h(x_j)}_j] \leq  \arg\min_{\pi \in \Pi} \;  \Expectation_{h \sim \pi}[\sum_{j=1}^n \cgeneral{h(x_j)}_j] + \nu$.
\end{definition}

\paragraph{From ``Apple Tasting'' to Contextual Bandits:}
Online classification problems under the feedback model we study were first described as ``Apple Tasting'' problems \cite{appletasting}. The algorithm's loss at each round accumulates according to the following loss matrix: $$L =
\begin{blockarray}{ccc}
 & y = +1 & y = -1 \\
\begin{block}{c(cc)}
  \hat y = +1 & 0 & 1 \\
  \hat y = -1 & 1 & 0  \\
\end{block}
\end{blockarray},$$ but feedback is only observed for positive
classifications (when $\hat y = +1$). This is a different feedback
model than the more commonly studied \emph{contextual bandits}
setting. In that setting, the learner always gets to observe the loss
of the selected action (regardless of a positive or a negative
classification). We will defer the formal description of contextual
bandits to the appendix.

It is nevertheless straightforward to transform the apple tasting setting into the contextual bandits setting (similar observations have been previously made \cite{BGPS10}).

\begin{proposition} \label{prop:reduce-to-bandits} Let $\mathcal{A}$
  be an contextual bandits algorithm that guarantees a regret bound
  $R(T)$ with probability $1-\delta$. There exists a
  transformation that maps feedbacks for apple tasting to feedbacks
  for contextual bandits such that $\mathcal{A}$ gurantees regret
  bound $2 R(T)$ with probability $1-\delta$ when running
  on the transformed feedbacks on any apple tasting instance.
\end{proposition}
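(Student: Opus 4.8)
The plan is to run $\mathcal{A}$ directly as the apple-tasting learner over the two actions $\{+1,-1\}$: at each round $t$ I let $\mathcal{A}$ choose $\hat y_t$, play that label in the apple-tasting instance, and hand $\mathcal{A}$ back a carefully chosen synthetic bandit loss so that its regret guarantee transfers. The one structural obstacle is the feedback asymmetry: in the apple-tasting model we observe $y_t$ \emph{only} when $\hat y_t = +1$, so the loss reported to $\mathcal{A}$ for the action $-1$ must be fixed in advance (it cannot depend on $y_t$), whereas the loss reported for $+1$ may be any function of the observed $y_t$. This dictates the transformation: when $\mathcal{A}$ plays $+1$ we observe $y_t$ and report $\tilde\ell_t(+1) = \Ind{y_t = -1}$; when $\mathcal{A}$ plays $-1$ we report the constant $\tilde\ell_t(-1) = \tfrac12$, which needs no feedback. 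Both values lie in $[0,1]$, so this is a legitimate contextual-bandit loss sequence and $\mathcal{A}$'s assumed bound $R(T)$ applies to it.

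Next I would verify the algebraic identity linking the synthetic bandit loss to the true apple-tasting loss $\ell(+1,y_t) = \Ind{y_t=-1}$, $\ell(-1,y_t) = \Ind{y_t=+1}$. A direct check gives, for both actions $a \in \{+1,-1\}$,
\[
\tilde\ell_t(a) = \tfrac12\,\ell(a, y_t) + \tfrac12\,\Ind{y_t = -1}.
\]
The crucial feature is that the additive term $\tfrac12\Ind{y_t=-1}$ is the \emph{same} for both actions, hence identical for every fixed comparator $h \in \H$ (or distribution $\pi \in \convH$, by taking the expectation over $h \sim \pi$) and for the learner itself. Summing over $t$, the cumulative synthetic loss of any policy equals $\tfrac12$ times its cumulative apple-tasting loss plus the common, policy-independent quantity $\tfrac12\sum_t \Ind{y_t=-1}$, which cancels in any difference of cumulative losses.

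Finally I would chain these to transfer the guarantee. Since the additive term cancels and the slope is $\tfrac12 > 0$, the $\argmin$ over comparators is preserved and, on every realization, the contextual-bandits regret of $\mathcal{A}$ under $\tilde\ell$ is \emph{exactly} $\tfrac12$ times its apple-tasting regret under $\ell$. Consequently the events $\{\text{bandit regret} \le R(T)\}$ and $\{\text{apple-tasting regret} \le 2R(T)\}$ coincide, so the latter holds with probability $1-\delta$ whenever the former does (the same pathwise identity also handles a pseudo-regret version by taking expectations). The only place the factor $2$ enters is the choice of slope: requiring $\tilde\ell_t \in [0,1]$ while forcing $\tilde\ell_t(-1)$ to be a fixed constant caps the achievable slope at $\tfrac12$, and I expect articulating this range constraint — rather than any hard estimate — to be the main point needing care.
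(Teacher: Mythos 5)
Your proposal is correct and is essentially the paper's own proof: your synthetic loss $\tilde\ell_t$ (namely $\Ind{y_t=-1}$ for action $+1$ and the constant $\tfrac12$ for action $-1$) is exactly the paper's transformed matrix $\tilde{L}$ after its final scaling by $0.5$, and both arguments hinge on the same observation that the transformed loss equals an affine function of the true loss with a policy-independent additive term that cancels in regret differences. The factor $2$ arises identically in both: the $[0,1]$ range constraint forces the slope $\tfrac12$, so the bandit regret is half the apple-tasting regret.
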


\paragraph{Baseline approaches.}
Given the reduction above, we can draw on standard methods from
contextual bandits to solve our fair online learning problem.  A
simple baseline approach that is oracle-efficient is to perform
``exploration-then-exploitation'': the learner first ``explores'' by
predicting $+1$ for roughly $T^{2/3}$ rounds, then ``exploits''
what we have learned by deploying the (empirically) best performing
fair policy. This approach would guarantee a sub-optimal regret bound
of $\tilde O(T^{\frac{2}{3}})$ to the best $\gamma$-fair classifier,
while satisfying a $\gamma'$-fairness constraint at every round with a
gap of $(\gamma'-\gamma) = O(T^{-\frac{1}{3}})$.

A more sophisticated approach starts with the observation
(Lemma~\ref{support_size}) that although the set of ``fair
distributions over classifiers'' is continuously large, the ``fair
empirical risk minimization'' problem only has a single constraint,
and so we may without loss of generality consider distributions over
hypotheses $\H$ that have support of size 2. By an appropriate
discretization, this allows us to restrict attention to a finite net
of classifiers whenever $\H$ itself is finite. From this observation,
one could employ a simple strategy to obtain an information theoretic
result: Fix any parameter $\alpha\in[1/4,1/2]$. The learner first
predicts $+1$ for roughly $T^{2\alpha}$ rounds, then uses the
collected data to define a set of fair policies according to the
observed empirical distribution, and lastly runs the EXP4
algorithm~\cite{EXP4,freedman} over the set of fair policies. Such
algorithm obtains a regret bound of $O(T^{2\alpha})$ to the best
$\gamma$-fair classifier, while satisfying a $\gamma'$-fairness
constraint at every round with a gap of
$(\gamma'-\gamma) = O(T^{-\alpha})$. However, this algorithm needs to
maintain a distribution of exponential size, and our goal is to match
its regret rate with an oracle-efficient algorithm.

\section{Oracle-Efficient Algorithm} \label{sec:comp}
Our algorithm proceeds in two phases. First, during the first $T_0$
rounds, the algorithm performs \emph{pure exploration} and always
predicts $+1$ to collect labelled data. Because constant classifiers
exactly equalize the false positive rates across populations, each
exploration round satisfies our fairness constraint. The algorithm
then uses the collected data to form empirical fairness constraints,
which we use to define our construction of a fair CSC oracle, given a
CSC oracle unconstrained by fairness. Then, in the remaining rounds,
we will run an adaptive contextual bandit algorithm that minimizes
cumulative regret, while satisfying the empirical fairness constraint
at every round.

We make two mild assumptions to simplify our analysis and the statement of our final bounds. First, we assume that 
negative examples from each of the two protected groups have constant probability mass:
  $\Pr[a= 1, y=-1], \Pr[a = -1, y=-1] \in \Omega(1)$. Second, we assume that the hypothesis class $\H$ contains the two constant classifiers and the identity function and its negation on the protected attribute: $\{\plus,\minus,\plusa,\minusa\} \subseteq \H$.

Our main theorem is as follows:

\begin{theorem}\label{upper}
  For any $\H$ and data distribution satisfying the two mild
  assumptions above, there exists an oracle-efficient algorithm that
  takes parameters $\delta \in [0,\frac{1}{\sqrt{T}}]$ and
  $\gamma \geq 0$ as input and satisfies $(\gamma + \beta)$-EFP($\delta$)
  fairness and has an expected regret at most
  $\tilde O(\sqrt{T} \ln(|\H|/\delta) )$ with respect to the class of
  $\gamma$-EFP fair policies, where
  $\beta =  O(\sqrt{\ln(|\H|/\delta)}/T^{1/4})$.
\end{theorem}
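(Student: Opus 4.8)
The plan is to run a two-phase algorithm and then stitch together three existing tools: the apple-tasting-to-bandits reduction (Proposition~\ref{prop:reduce-to-bandits}), the oracle-efficient offline fair-classification reduction of \cite{MSR}, and the oracle-efficient stochastic contextual bandit algorithm of \cite{minimonster}. In the first phase I would deploy the constant classifier $\plus$ for $T_0 = \tilde\Theta(\sqrt T)$ rounds. Since $\plus$ equalizes false positive rates exactly, every such round is trivially $0$-EFP, hence $(\gamma+\beta)$-EFP; and because $\plus$ predicts $+1$, the true label is revealed every round, so this phase collects $T_0$ fully labeled examples at a regret cost of at most $T_0 = \tilde O(\sqrt T)$. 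The constant-mass assumption guarantees $\Theta(T_0)$ labeled negatives from each group, so by a Hoeffding/union-bound argument the empirical false positive rate $\widehat{FPR}_j(h)$ of every $h \in \H$ is within $O(\sqrt{\ln(|\H|/\delta)/T_0}) = O(\beta)$ of its true $FPR_j(h)$, uniformly over $\H$, with probability $1-\delta$.

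Second, I would use this data to define an empirical fairness constraint and, from it, an approximate range-restricted fair CSC oracle. Define the restricted policy set $\Pi$ to consist of those convex combinations whose empirical FPR gap is at most $\gamma + \beta$. By Lemma~\ref{support_size} the fair ERM problem carries a single constraint, so it suffices to optimize over support-$2$ mixtures, and a $1$-dimensional $\epsilon$-net on the mixing weight (crossed with pairs in $\H$) discretizes $\Pi$ into a finite set with $\ln|\Pi| = \tilde O(\ln|\H|)$ once $\epsilon = \poly(1/T)$. The reduction of \cite{MSR} then converts the given unconstrained CSC oracle into a $\nu$-approximate CSC oracle with range $\Pi$, by solving the Lagrangian saddle point between a cost-minimizing player (who best-responds through the unconstrained oracle) and a player controlling the multiplier on the single empirical fairness constraint; the constant classifiers in $\H$ supply a strictly feasible fair point, which bounds the multiplier as required. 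The two bookkeeping facts I would establish are: (i) any policy that is truly $\gamma$-EFP has empirical gap $\le \gamma + \beta$, so the benchmark optimum $\pi^*$ lies in $\Pi$ and competing against $\Pi$ costs nothing versus the intended $\gamma$-EFP benchmark; and (ii) every $\pi \in \Pi$ has true gap $\le \gamma + 2\beta$, which after rescaling constants yields the claimed $(\gamma+\beta)$-EFP fairness at every round.

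Third, I would apply Proposition~\ref{prop:reduce-to-bandits} to turn the partial (apple-tasting) feedback into contextual bandit feedback, losing only a factor of $2$, and run the algorithm of \cite{minimonster} over the finite discretized class $\Pi$, feeding it the fair CSC oracle in place of the exact oracle it assumes. Its guarantee yields phase-two regret $\tilde O(\sqrt{T\ln(|\Pi|/\delta)})$ against the best policy in $\Pi$, which under our net lies within $\tilde O(\sqrt T\,\ln(|\H|/\delta))$; adding the $T_0 = \tilde O(\sqrt T)$ exploration regret and the $\poly(1/T)$ net-discretization error gives the stated total. For the \emph{expected} regret, the restriction $\delta \le 1/\sqrt T$ ensures that the probability-$\delta$ failure event (where both the uniform-convergence bound and the bandit high-probability bound may break, and regret can be as large as $T$) contributes at most $\delta T \le \sqrt T$, which is absorbed into the bound.

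The hard part will be the final stitching step: the analysis of \cite{minimonster} crucially assumes an \emph{exact} ERM oracle over a \emph{finite} class, whereas I can only supply a $\nu$-\emph{approximate} oracle whose intended range is a continuous family of distributions. I therefore expect the bulk of the work to be re-deriving the \cite{minimonster} regret guarantee while (a) propagating the per-call $\nu$-suboptimality through its implicit optimization—its action distributions are themselves defined via oracle calls, so oracle error perturbs the sampling distribution and hence the importance-weighted loss estimates—and (b) replacing its finite-class uniform-convergence arguments with ones over the $\epsilon$-net of support-$2$ mixtures. Choosing $\nu$ and $\epsilon$ polynomially small in $1/T$ should render both contributions lower-order, but verifying that these approximation errors do not compound across the $\tilde O(\sqrt T)$ rounds/epochs is the delicate accounting that must be carried out.
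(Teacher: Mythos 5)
Your proposal follows essentially the same approach as the paper: pure exploration with $\plus$ to collect labeled data, a support-2 restricted policy class with empirical fairness constraints, an \cite{MSR}-based fair CSC oracle, a re-derivation of the \cite{minimonster} guarantee tolerating $\nu$-approximate oracle calls with uniform convergence via a net over support-2 mixtures, the apple-tasting reduction, and the $\delta \leq 1/\sqrt{T}$ accounting for the failure event in expected regret. The only cosmetic difference is that the paper runs the bandit algorithm over the continuous class $\Pi$ and uses the $\eta$-cover purely in the analysis, rather than discretizing the class the algorithm plays over, but this does not change the substance of the argument.
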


\begin{remark}
More generally, we can extend Theorem~\ref{upper} to give an algorithm that satisfies $(\gamma + \beta)$-EFP($\delta$) for any $\beta > 0$, and achieves an expected regret at most $\tilde O\left(\frac{\ln(\frac{|\H|}{\delta})}{\beta^2} + \sqrt{T} \ln(|\H|/\delta) \right)$ with respect to the class of $\gamma$-EFP fair policies. 
\end{remark}

\begin{remark}
We state our theorem in what we believe is the most attractive parametric regime: when it can obtain a regret bound of $O(\sqrt{T})$. But it is straightforward, by modifying the length of the exploration round, to obtain a more general tradeoff---a regret bound of $O(T^{2\alpha})$ with respect to the set of $\gamma$-EFP fair policies, while satisfying $(\gamma + O(T^{-\alpha}))$-EFP($\delta$) fairness, for any $\alpha \in [1/4,1/2]$. This tradeoff is tight, as we show in Section \ref{lower_res}.
\end{remark}

\paragraph{Algorithm.}
The outline of our algorithm is as follows.
\begin{enumerate}
    \item Label the first $T_0$ arrivals as $\hat{y}_t = 1$; observe their true labels.
    \item  Based on this data, construct an efficient \emph{FairCSC oracle}.
        The oracle will be given a cost-sensitive classification objective.
        It returns an approximately-optimal convex combination $\pi$ of hypotheses subject to the linear constraint of $(\gamma + T^{-1/4})$-EFP on the empirical distribution of data.
        We show the algorithm can be implemented to always return a member of $\Pi$, defined to be the set of mixtures on $\H$ with support size two whose empirical fairness on the exploration data is at most $\gamma + \tilde{O}(T^{-1/4})$.
    \item Instantiate a bandit algorithm with policy class $\Pi$.
        The bandit algorithm, a modification of \cite{minimonster}, is described in detail in the next sections.
        In order to select its hypotheses, the bandit algorithm makes calls to the FairCSC oracle we implemented above.
    \item For the remaining rounds $t > T_0$, choose labels $\hat{y}_t$ selected by the bandit algorithm and provide feedback to the bandit algorithm via the reduction given by Proposition \ref{prop:reduce-to-bandits}.
\end{enumerate}

\paragraph{Analysis.}
In the remainder of this section, we present our analysis in three
main steps: 
\begin{itemize}
    \item First, we study the empirical fairness constraint given
by the data collected during the exploration phase and give a
reduction from a cost-sensitive classification problem subject to such
fairness constraint to a standard cost-sensitive classification
problem absent the constraint, based on \cite{MSR}. We need to perform
two modifications on the reduction method in~\cite{MSR}. First, we
allow our algorithm to handle fairness constraints defined by a
separate data set that is different from the one defining the cost
objective.  Second, we also provide a fair approximate CSC oracle
that returns a sparse solution, a distribution over $\H$ with support
size of at most 2. This will be useful for establishing uniform
convergence.

\item Next, we present the algorithm run in the second phase: at each round
 $t > T_0$, the algorithm makes a prediction based on a randomized
 policy $\pi_t\in \Delta(\H)$, which is a solution to a feasibility
 program given by \cite{minimonster}. We show how to rely on an approximate fair CSC
 oracle to solve this program efficiently. Consequently, we
 generalize the results of \cite{minimonster} to the setting in which the given oracle
 may only optimize the cost sensitive objective approximately. This may be of independent interest.
 
\item Finally, we bound the deviation between the algorithm's empirical regret and
 true expected regret. This in particular requires uniform
 convergence over the entire class of fair randomized policies, which
 we show by leveraging the sparsity of the fair distributions.
\end{itemize}

We now give the proof of Theorem \ref{upper}, with forward references to needed theorems and lemmas.
\begin{proof}[Proof of Theorem \ref{upper}]
  We set $T_0 = \Theta(\sqrt{T \ln(|\H|/\delta)})$. First,
  Lemma~\ref{support_size1} shows that given our empirical EFP
  constraint, there exists an optimal policy of support size at most
  2. Next, Lemma \ref{lemma:oracle-efp} shows that, with probability
  $1-\delta$ over arrivals $1,\dots,T_0$, all convex combinations
  $\pi \in \Pi$ satisfy $\hat{\gamma}$-EFP for
  $\hat{\gamma} = \gamma + \beta$,
  $\beta = O\left(\sqrt{\ln(|\H|/\delta)}/T^{1/4}\right)$.
  It also implies that
  the optimal $\gamma$-fair policy is in the class.  Theorem
  \ref{reduction} shows that, given a CSC oracle for $\H$, we can
  implement an efficient approximate CSC oracle for this class $\Pi$.
  Theorem \ref{thm:bandit-regret} shows that, given an approximate CSC
  oracle for any class, there is an efficient bandit algorithm that
  deploys policies from this class and achieves expected regret
  $O\left(\ln\left(|\H|T/\delta\right) \sqrt{T}\right)$.

  \textbf{Fairness:} In the first $T_0$ rounds we deploy $\mathbf{+1}$ which is $0$-fair, and in the remaining rounds we deploy policies only from $\Pi$.
  With probability $1-\delta$ over the exploration data, every member of $\Pi$ is $(\gamma+\beta)$-fair.

  \textbf{Regret:} The algorithm's regret is at most $T_0$ plus its regret, on rounds $T_0+1,\dots,T$, to the optimal policy in $\Pi$.
  By Proposition \ref{prop:reduce-to-bandits}, this is at most twice the bandit algorithm's regret on those rounds.
  So our expected regret totals at most $O\left(\ln\left(|\H|T/\delta\right) \sqrt{T}\right)$ to the best policy in $\Pi$.
  With probability $1-\delta$, $\Pi$ contains the optimal $\gamma$-fair classifier; with the remaining probability, the algorithm's regret to the best $\gamma$-fair classifier can be bounded by $T$.
  Choosing $\delta \leq \frac{1}{\sqrt{T}}$ gives the result.
\end{proof}

\subsection{Step 1: Constructing a Fair CSC Oracle From Exploration Data}

Let $S_E$ denote the set of $T_0$ labeled examples
$\{z_i = (x_i, a_i, y_i)\}_{i=1}^{T_0}$ collected from the initial
exploration phase, and let $\cD_E$ denote the empirical distribution
over $S_E$. We will use $\cD_E$ as a proxy for the true distribution
to form an \emph{empirical fairness constraint}. To support the learning
algorithm in the second phase, we need to construct an oracle that
solves CSC problems subject to the empirical fairness constraint.
Formally, an instance of the \emph{FairCSC} problem for the class $\H$
is given by a set of $n$ tuples
$\{(x_j, \cminus_j, \cplus_j)\}_{j=1}^n$ as before, along with a
fairness parameter $\gamma$ and an approximation parameter $\nu$.  We
wish to solve the following fair CSC problem:
\begin{align}
  \min_{\pi \in \Delta(\H)} \; \Ex{h\sim \pi}{ \sum_{j=1}^n \cgeneral{h(x_j)}_j} \quad \mbox{ such that } \quad |\Delta_{FPR}(\pi, \cD_E)|\leq \gamma
\end{align}
where
$\Delta_{FPR}(\pi, \cD_E) = FPR_{1}(\pi, \cD_E) - FPR_{-1}(\pi,
\cD_E)$ and each $FPR_{j}(\pi, \cD_E)$ denotes the false positive rate
of $\pi$ on distribution $\cD_E$. We show a useful structural property
that there always exists a small-support optimal solution; the proof appears in Appendix~\ref{sec:sparsity}.

\begin{lemma} \label{support_size1}
There exists an optimal solution for the FairCSC that is
  a distribution over $\H$ with support size no greater than 2.
\end{lemma}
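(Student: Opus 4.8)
The plan is to exploit the fact that both the CSC objective and the fairness quantity $\Delta_{FPR}(\pi,\cD_E)$ are \emph{linear} functionals of $\pi$. Writing $c(h) = \sum_{j=1}^n \cgeneral{h(x_j)}_j$ for the cost of the deterministic classifier $h$ and $d(h) = \Delta_{FPR}(\delta_h,\cD_E)$ for its fairness difference (where $\delta_h$ is the point mass on $h$), linearity of expectation gives $\Ex{h\sim\pi}{\sum_j \cgeneral{h(x_j)}_j} = \sum_h \pi(h)\,c(h)$ and $\Delta_{FPR}(\pi,\cD_E) = \sum_h \pi(h)\,d(h)$. Hence, associating to each $h$ the point $v(h) = (c(h), d(h)) \in \R^2$, every $\pi \in \convH$ maps to $\sum_h \pi(h)\,v(h)$, which lies in $K := \mathrm{conv}\{v(h) : h \in \H\} \subseteq \R^2$, and the FairCSC problem becomes: minimize the first coordinate over $K$ subject to the second coordinate lying in the slab $[-\gamma,\gamma]$. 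This reduces a problem over the continuously large simplex $\convH$ to a two-dimensional geometric problem.

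First I would take any optimal solution $\pi^*$, with image $p^* = (c^*, d^*)$, where $c^*$ is the optimal objective value and $|d^*| \le \gamma$. By Carath\'eodory's theorem in $\R^2$, $p^*$ is a convex combination of at most three of the points $v(h)$, say $p^* = \sum_{i=1}^3 \lambda_i\, v(h_i)$ with $\lambda_i \ge 0$ and $\sum_i \lambda_i = 1$. If some $\lambda_i$ is already zero we are done, so assume all three are used. It then suffices to solve the restricted problem of minimizing cost over convex combinations supported on $\{h_1,h_2,h_3\}$ subject to the fairness constraint: the restriction of $\pi^*$ to these three hypotheses is feasible for this finite program and attains cost $c^*$, so the restricted optimum has cost at most $c^*$; and since any feasible point of the restricted program lies in $K$ and in the slab, it is feasible, hence optimal, for the original problem.

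The restricted program is a linear program in the three weights $\lambda_1,\lambda_2,\lambda_3$ with exactly one equality constraint ($\sum_i \lambda_i = 1$) and the single two-sided fairness constraint $-\gamma \le \sum_i \lambda_i\, d(h_i) \le \gamma$, together with nonnegativity. A standard basic-feasible-solution (vertex) argument finishes the proof: at a vertex of the feasible polytope, the number of nonzero weights is at most the number of active constraints other than nonnegativity, which here is the simplex equality (always active) plus at most one active side of the fairness band---at most two in total. Taking a basic optimal solution of the restricted LP therefore yields an optimal $\pi$ supported on at most two of $h_1,h_2,h_3$, as claimed.

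The main thing to be careful about is that the argument not secretly assume $\H$ is finite: the Carath\'eodory step is precisely what lets me pass from the possibly infinite class $\H$ (and the potentially non-closed convex hull $K$) to a finite three-point linear program, on which the vertex count is unambiguous, and this also sidesteps any question of whether the minimum over $K$ is attained. A minor point to verify is the two-sided nature of $|\Delta_{FPR}|\le\gamma$: since at most one of $\Delta_{FPR} = \gamma$ and $\Delta_{FPR} = -\gamma$ can bind (both only when $\gamma = 0$, in which case they coincide into a single equality), the count of active non-nonnegativity constraints never exceeds two, so the support bound is preserved.
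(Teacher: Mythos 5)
Your proof is correct and follows essentially the same route as the paper's: both exploit the linearity of the cost and of the fairness functional to embed the problem in $\R^2$, mapping each $h$ to $(c(h),d(h))$ and minimizing the first coordinate over the convex hull subject to a slab constraint $|z_2|\leq\gamma$ on the second. The only difference is the finishing step---the paper observes directly that some optimum lies on an edge of the hull (hence is a mixture of two vertices, i.e.\ two hypotheses), whereas you reach support two via Carath\'eodory followed by a basic-feasible-solution count on the restricted three-variable LP; your version is slightly longer but also more explicit, in particular about the degenerate case $\gamma=0$ where the two sides of the fairness band coincide.
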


We therefore consider the set of sparse convex combinations:
$$\Pi = \{\pi \in \Delta(\H) \mid \mathrm{Supp}(\pi)\leq 2, \quad|\Delta_{FPR}(\pi, \cD_E)|\leq \gamma + \beta\}$$
and focus on algorithms that only deploy policies from $\Pi$ and measure their performance with respect to $\Pi$. For any $\pi\in \Pi$, we will
write $\pi(h)$ to denote the probability $\pi$ places on $h$. Applying a 
standard concentration inequality, we can show (Lemma \ref{lemma:oracle-efp}) that each policy in
$\Pi$ is also approximately fair with respect to the underlying distribution.

We provide a reduction from FairCSC problems to standard CSC problems
as follows: 1) We first apply a standard transformation on the input
CSC objective to derive an equivalent weighted classification problem,
in which each example $j$ has importance weight
$|\cminus_j - \cplus_j|$.  2) We then run the fair classification
algorithm due to~\cite{MSR} that solves the weighted classification
problem \emph{approximately} using a polynomal number of CSC oracle
calls. 3) Finally, we follow an approach similar to that
of~\cite{cotter} to shrink the support size of the solution returned
by the fair classification algorithm down to at most 2, which can be
done in polynomial time.

\begin{theorem}[Reduction from FairCSC to CSC]\label{reduction}
  For any $0 < \nu < \gamma/2$, there exists an oracle-efficient
  algorithm that calls a CSC oracle for $\H$ at most $O(1/\nu^2)$ times
  and computes a solution $\hat \pi\in \Delta(\H)$ that has a support
  size of at most 2, satisfies $\gamma$-EFP, and has total cost
  \[
    \Ex{h\sim \hat\pi}{ \sum_{j=1}^n c_j^{h(x_j, a_j)}} \leq \min_{\pi
      \in \Pi} \Ex{h\sim \pi}{ \sum_{j=1}^n c_j^{h(x_j, a_j)}}
    +\epsilon
  \]
  with $\epsilon = 4\nu \sum_{j=1}^n |\cminus_j - \cplus_j|$.
\end{theorem}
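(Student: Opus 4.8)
The plan is to instantiate the three-step reduction indicated in the text and to track how the error of each step accumulates into the final bound $\epsilon = 4\nu W$, where I write $W := \sum_{j=1}^n |\cminus_j - \cplus_j|$ and $\OPT := \min_{\pi \in \Pi} \Ex{h\sim\pi}{\sum_{j} \cgeneral{h(x_j)}_j}$, which by Lemma~\ref{support_size1} is attained at support $\le 2$. For Step 1, for each example $j$ let $y_j^\star = \argmin_{y\in\{\pm1\}} \cgeneral{y}_j$ and $w_j = |\cminus_j - \cplus_j|$. Then $\sum_j \cgeneral{h(x_j)}_j = \sum_j \cgeneral{y_j^\star}_j + \sum_j w_j \Ind{h(x_j)\neq y_j^\star}$, so up to an additive constant that is identical for every $\pi$ (and hence irrelevant to the optimization) the objective is exactly the $w$-weighted misclassification error. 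This reduction is linear in $\pi$ and fixes the units: a $\rho$-additive error measured in normalized (i.e.\ $[0,1]$-scaled) weighted error becomes a $\rho W$-additive error in the original cost.

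For Step 2, I observe that both the objective and $\Delta_{FPR}(\pi,\cD_E)$ are linear in $\pi$, so FairCSC is a linear program over $\convH$ with the single two-sided constraint $-\gamma \le \Delta_{FPR}(\pi,\cD_E) \le \gamma$. I would form the Lagrangian $L(\pi,\lambda) = \mathrm{cost}(\pi) + \lambda_+(\Delta_{FPR}(\pi,\cD_E)-\gamma) + \lambda_-(-\Delta_{FPR}(\pi,\cD_E)-\gamma)$ with $\lambda=(\lambda_+,\lambda_-)\ge 0$, $\|\lambda\|_1 \le B$, and run the no-regret saddle-point dynamics of \cite{MSR}: the $\lambda$-player uses a no-regret update and the $\pi$-player best-responds. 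The only modification relative to \cite{MSR} is that, because the constraint lives on the separate data set $\cD_E$, the best response is a CSC call on a \emph{combined} instance — the original examples carry their costs $\cgeneral{\cdot}_j$, while each $\cD_E$ example with $y=-1$ contributes a positive-classification cost of $(\lambda_+ - \lambda_-)/n_1$ if $a=1$ and $-(\lambda_+ - \lambda_-)/n_{-1}$ if $a=-1$ (with $n_j$ the number of such examples), since both data sets enter $L$ only through linear statistics of $h$. The regret accounting is otherwise unchanged. Running for $O(1/\nu^2)$ iterations (hence $O(1/\nu^2)$ oracle calls) and averaging the plays yields $\bar\pi$ with $\mathrm{cost}(\bar\pi) \le \OPT + O(\nu W)$ and violation $|\Delta_{FPR}(\bar\pi,\cD_E)| \le \gamma + O(\nu)$ by the standard \cite{MSR} saddle-point argument.

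For Step 3, the averaged solution $\bar\pi$ is supported on the $O(1/\nu^2)$ best-response classifiers $H_{\mathrm{MSR}}$ and is only approximately feasible. I would then solve the finite linear program minimizing cost over distributions supported on $H_{\mathrm{MSR}} \cup \{\plus\}$ subject to the \emph{exact} constraint $|\Delta_{FPR}(\pi,\cD_E)| \le \gamma$. This program is feasible because $\plus$ is perfectly fair ($\Delta_{FPR}(\plus,\cD_E) = 0$, as it predicts $+1$ on everyone), and — exactly as in Lemma~\ref{support_size1}, since it is a linear objective subject to a single binding linear constraint — it admits a basic optimal solution $\hat\pi$ of support at most $2$, computable in polynomial time (the Cotter-style shrinking of \cite{cotter}). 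By construction $\hat\pi$ satisfies $\gamma$-EFP exactly and has support $\le 2$. To bound its cost I would exhibit a feasible competitor over the same support, obtained by starting from $\bar\pi$ and snapping it to the feasibility boundary via interpolation against an iterate of $H_{\mathrm{MSR}}$ lying on the opposite side of the active constraint; since $\bar\pi$'s violation and sub-optimality are both $O(\nu)$ in normalized units, this moves the cost by $O(\nu W)$, and tuning the hidden constants yields $\epsilon = 4\nu W$.

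The main obstacle is reconciling three requirements that pull against one another: the later uniform-convergence argument forces support $\le 2$, the every-round fairness guarantee forces \emph{exact} $\gamma$-EFP, yet the oracle-efficient solver of \cite{MSR} natively delivers only a large-support, \emph{approximately} feasible mixture. The delicate point is the snap-to-boundary repair in Step 3: a naive repair that dilutes $\bar\pi$ with the default fair classifier $\plus$ incurs an extra factor of order $1/\gamma$, whereas the stated bound must be clean in $\nu W$. The leverage that makes a $1/\gamma$-free bound possible is the one-dimensionality of the binding constraint together with the fact that the saddle-point iterates straddle the feasibility boundary, so feasibility can be restored by an exact two-point interpolation among low-Lagrangian-value classifiers rather than by mixing in an arbitrary fair default. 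Verifying that straddling iterates of controlled cost exist, and that the interpolation together with the support-$2$ shrinking jointly preserve both exact feasibility and the $O(\nu W)$ cost guarantee, is the crux of the argument.
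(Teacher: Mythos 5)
Your Step 1 and the overall architecture (cost-to-weighted-classification transformation, MSR saddle point over the two-sided constraint, LP-based support shrinking) match the paper, but there are two genuine gaps, and both trace to the same missing ingredient: the paper's standing assumption that $\H$ contains the group-indicator classifiers $\plusa,\minusa$ (i.e.\ $\mathbf{1}[a=j]$), which have $\Delta_{FPR}=\pm 1$ on any distribution. First, in Step 2 your claim that ``the standard MSR saddle-point argument'' yields violation $\gamma+O(\nu)$ after $O(1/\nu^2)$ oracle calls does not hold as stated: with a constant dual bound $B$ (which is what keeps the iteration count at $O(1/\nu^2)$), the generic saddle-point argument only bounds the violation $\alpha$ via $B\alpha \leq \OPT - err(\hat\pi) + O(\nu) \leq 1 + O(\nu)$, i.e.\ $\alpha = O(1/B) = O(1)$, because absent further structure one can only lower-bound $err(\hat\pi)$ by $\OPT - 1$. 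The paper gets $\alpha \leq 2\nu$ with $B=2$ precisely by invoking the group indicators: if $\hat\pi$ violates by $\alpha$, mixing it with $\mathbf{1}[a=-1]$ at weight $\alpha$ restores feasibility while increasing the normalized error by at most $\alpha$, so $err(\hat\pi) \geq \OPT - \alpha$; combined with the dual deviation $\lambda'_1 = 2$ this pins $\alpha \leq 2\nu$. (The alternative, taking $B = \Theta(1/\nu)$, inflates the number of oracle calls well beyond $O(1/\nu^2)$.)

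Second, in Step 3 your repair to \emph{exact} $\gamma$-EFP relies on an MSR iterate ``lying on the opposite side of the active constraint,'' which you correctly flag as the crux --- but nothing guarantees such an iterate exists; all best responses may lie on the violating side. The paper never needs one: it \emph{tightens} the constraint to $\gamma - 2\nu$ before running MSR (this is exactly where the hypothesis $\nu < \gamma/2$ enters), so that the returned solution's $\leq 2\nu$ violation of the tightened constraint still satisfies the original $\gamma$-EFP exactly; the additional cost of tightening is bounded by $2\nu$ in normalized weighted error by again mixing the untightened optimum $\pi^*$ with $\mathbf{1}[a=-1]$ at weight $2\nu$ --- the indicator's $\Delta_{FPR}=-1$ is what makes the mixing weight $O(\nu)$ rather than the $O(\nu/\gamma)$ you get from diluting with $\plus$ (whose $\Delta_{FPR}=0$). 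So your instinct that a $1/\gamma$-free repair needs a classifier on the far side of the boundary is right, but the guaranteed such classifier is the assumed $\plusa/\minusa \in \H$, not a saddle-point iterate; with that substitution, and with the tightening replacing your post-hoc snap-to-boundary step, your argument closes and the two additive $2\nu$ losses recover the stated $\epsilon = 4\nu\sum_j |\cminus_j - \cplus_j|$.
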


\subsection{Step 2: The Adaptive Learning Phase}\label{feasible}
\paragraph{Overview of bandit algorithm.}
In the second phase, rounds $t > T_0$, we utilize a bandit algorithm
to make predictions.  We now describe the algorithm, which closely
follows the ILOVETOCONBANDITS algorithm by \cite{minimonster} but
with important modifications that are necessary to handle
approximation error in the FairCSC oracle.

At each round $t > T_0$, the bandit algorithm produces a distribution $Q_t$ over policies $\pi$.
Each policy $\pi$ is a convex combination of two classifiers in $\H$ and satisfies approximate fairness.
The algorithm then draws $\pi$ from $Q_t$, draws $h$ from $\pi$, and labels $\hat{y}_t = h(x_t)$.
To choose $Q_t$, the algorithm places some constraints on $Q$ and runs a short coordinate descent algorithm to find a $Q$ satisfying those constraints.
Finally, it mixes in a small amount of the uniform distribution over labels (which can be realized by mixing between $\plus$ and $\minus$).
We will see that the constraints, called the feasibility program, correspond to roughly bounding the expected regret of the algorithm along with bounding the variance in regret of each possible $\pi$.

\paragraph{Feasibility program.}
To describe the feasibility program, we first introduce some
notation. For each $t$, we will write $p_t$ to denote the probability
that prediction $\hat y_t$ is selected by the learner, and $\ell_t$ be
the incurred (contextual bandit) loss given by the transformation in
Proposition~\ref{prop:reduce-to-bandits}.

 for each policy $\pi\in \Pi$, let
\begin{align*}
  \hat L_t(\pi) = \frac{1}{t} \sum_{s=1}^t \ell_s \frac{\Pr[\pi(x_s) = \hat{y}_s]}{p_s}, \qquad
  L(\pi) = \Ex{(x, a, y)\sim\D}{\Ex{\pi}{\mathbf{1}[\pi(x) \neq y]}}
\end{align*}
denote the estimated average loss given by the \emph{inverse
  propensity score} (IPS) estimator and true expected loss for
$\pi$, respectively.  Similarly, let
\begin{align*}
  \widehat \Reg_t(\pi) = \hat L_t(\pi) - \min_{\pi' \in \Pi} \hat L_t(\pi'), \qquad
  \Reg(\pi) = L(\pi) - \min_{\pi'\in \Pi} L(\pi'),
\end{align*}
denote the estimated average regret and the true expected regret. In
order to bound the variance of the IPS estimators, we will ensure that
the learner predicts each label with minimum probability $\mu_t$ at
each round $t$. In particular, given a solution $Q$ for the program
and a minimum probability parameter $\mu_t$, the learner will predict
according to the mixture distribution $Q^{\mu_t}(\cdot \mid x)$ (a 
distribution that predicts $+1$ w.p. $\mu_t$, and predicts according
to $Q$ w.p. $1-\mu_t$):
\[
  Q^{\mu_t}(\hat y \mid x) =\mu_t + (1 - 2\mu_t) \int_{\pi \in \Pi}
  Q(\pi)\Pr[\pi(x) = \hat y] d\pi
\]
Note that this can be represented as a convex combination of
classifiers from $\H$ since we assume that $\plus \in \cH$. We define
for each $\pi\in \Pi$,
$b_t(\pi) = \frac{\widehat \Reg_t(\pi)}{4(e - 2)\mu_t\ln(T)}$, and
also initialize $b_0(\pi) = 0$.

We describe the feasibility program solved at each step. The approach
and analysis directly follow and extend that of \cite{minimonster}.
In that work, the first step at each round is to compute the best
policy so far, which lets us compute $\widehat{\Reg}_t(\pi)$ and
$b_t(\pi)$ for any policy $\pi$. Here, our FairCSC oracle only
computes approximate solutions, and so we can only compute regret
relative to the approximately best policy so far, which leads to
corresponding approximations $\widetilde{\Reg}_t(\pi)$ and
$\tilde{b}_t(\pi)$.  Then, our algorithm solves the same feasibility
program (although a few more technicalities must be handled): given
history $H_t$ (in the second phase) and minimum probability $\mu_t$,
find a probability distribution $Q$ over $\Pi$ such that
\begin{align}
\label{reg}\int_{\pi\in \Pi} Q(\pi) \tilde{b}_{t-1}(\pi)d\pi &\leq 4 \tag{Low regret}\\
\label{var}\forall \pi\in\Pi: \quad \Ex{x\sim H_t}{\frac{1}{Q^{\mu_t}(\pi(x) \mid x)}} &\leq 4 + \tilde{b}_{t-1}(\pi) \tag{Low variance}
\end{align}

Intuitively, the first constraint ensures that the estimated regret
(based on historical data) of the solution is at most
$\tilde O(1/\sqrt{t})$. The second constraint bounds the variance of
the resulting IPS loss estimator for policies in $\Pi$, which in turn
allows us to bound the deviation between the empirical regret and the
true regret for each policy over time. Importantly, we impose a tighter
variance constraint on policies that have lower empirical regret so far,
which prioritizes their regret estimation.

To solve the feasibility program using our FairCSC oracle, we will run
a coordinate descent algorithm (full description in
Section~\ref{feasibleproof}). Over iterations, the algorithm maintains
and updates a vector $Q$ of nonnegative weights that may sum to less
than one; at the end, the remaining probability mass is placed on the
empirically best policy $\hat \pi^t$ (computed using a single call of
FairCSC). At each iteration, the algorithm first checks whether the
current $Q$ violates the regret constraint; if so, the algorithm will
shrink all the weights to meet the regret constraint. If the regret
constraint is satisfied, the algorithm will then find the policy $\pi$
such that its variance constraint is most violated, which can be
identified using a single call of FairCSC oracle by the result
of~\cite{minimonster}. If the constraint violation is above 0, the
algorithm increases the weight $Q(\pi)$. The algorithm halts when all
of the constraints are satisfied. Lastly, the distribution output by this
computation is then mixed with a small amount of the uniform
distribution $\mu_t$ over labels.

In the
following, let $\Lambda_0 = 0$ and for any $t \geq 1$,
\[ \Lambda_t := \frac{\nu}{4(e-2)\mu_t^2 \ln(T)} . \] where $\nu$ is
the approximation parameter of the FairCSC oracle.
\begin{lemma}\label{accuracy}
  Algorithm~\ref{alg:coor} halts in a number of iterations (and oracle
  calls) that is polynomial in $\frac{1}{\mu_t}$, and outputs a weight
  vector $Q$ that is a probability distribution with the following
  guarantee:
  \begin{align*}
    \int_{\pi \in \Pi} Q(\pi) (4 + b_{t-1}(\pi))d\pi &\leq 4 + \Lambda_t  \\
    \forall \pi\in\Pi: \quad \Ex{x\sim H_t}{\frac{1}{Q^{\mu_t}(\pi(x) \mid x)}} &\leq 4 + b_{t-1}(\pi) + \Lambda_t .
  \end{align*}
\end{lemma}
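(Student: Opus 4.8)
The plan is to follow the coordinate-descent analysis of the optimization problem solver in \cite{minimonster}, making two adaptations: replacing sums over a finite policy set by integrals over the continuum $\Pi$, and—crucially—replacing the exact argmax used to find the most-violated constraint by a call to the approximate FairCSC oracle of Theorem~\ref{reduction}. The central object is the potential function $\Phi_t(Q)$ of \cite{minimonster}, a Lagrangian/relative-entropy-type functional of the weight vector $Q$ whose stationarity conditions are exactly the two constraints (Low regret) and (Low variance). I would first record its two structural properties: $\Phi_t$ is bounded below by an absolute constant, and, initialized with all mass on the (approximately) empirically best policy $\hat\pi^t$—for which $b_{t-1}(\hat\pi^t)\approx 0$—its value is $O(\poly(1/\mu_t))$. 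The range of $\Phi_t$ is therefore $O(\poly(1/\mu_t))$.

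Next I would analyze the two moves of Algorithm~\ref{alg:coor}. The rescaling move, which shrinks all weights when the (Low regret) constraint is violated, never increases $\Phi_t$; this is the argument of \cite{minimonster} verbatim and is unaffected by the oracle. The coordinate-ascent move increases $Q(\pi)$ for a policy $\pi$ whose (Low variance) constraint is violated, and the optimal line-search step decreases $\Phi_t$ by at least $\Omega(\mu_t)$ whenever the chosen policy has strictly positive violation—again as in \cite{minimonster}. Since $\Phi_t$ has range $O(\poly(1/\mu_t))$ and each productive step lowers it by $\Omega(\mu_t)$, the number of iterations, and hence FairCSC oracle calls, is $\poly(1/\mu_t)$.

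The heart of the proof—and the main obstacle—is handling the approximate oracle. Finding the most-violated variance constraint means maximizing $g(\pi):=\Ex{x\sim H_t}{1/Q^{\mu_t}(\pi(x)\mid x)}-4-b_{t-1}(\pi)$ over $\pi\in\Pi$. Writing $\kappa:=4(e-2)\mu_t\ln(T)$ so that $b_{t-1}(\pi)=\widehat\Reg_{t-1}(\pi)/\kappa$, and observing that both $\Ex{x}{1/Q^{\mu_t}(\pi(x)\mid x)}$ and $\widehat\Reg_{t-1}(\pi)$ are linear in the per-example decisions of $\pi$, the quantity $\kappa\, g(\pi)$ (up to an additive constant) is a single cost-sensitive objective that I can minimize with the FairCSC oracle. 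By Theorem~\ref{reduction} that oracle has additive error $4\nu\sum_j|\cminus_j-\cplus_j|$; since the variance costs $1/Q^{\mu_t}(\cdot\mid x)$ are bounded by $1/\mu_t$, the weight sum is $O(1/\mu_t)$, so after dividing by $\kappa$ the oracle identifies a policy whose violation is within additive $\Lambda_t=\nu/(4(e-2)\mu_t^2\ln T)$ of $\max_\pi g(\pi)$. The key point is that the algorithm evaluates $g$ exactly on the returned policy and only takes a step when that value is strictly positive; thus the $\Omega(\mu_t)$ per-step drop is genuine and the iteration bound is unaffected, while the approximation costs us only at halting: when the returned policy has $g\le 0$ we have $\max_\pi g(\pi)\le\Lambda_t$, which is exactly the $+\Lambda_t$ slack in the (Low variance) guarantee.

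Finally I would derive the two displayed inequalities. The (Low variance) line follows from the halting condition just described. For the regret line, the rescaling move enforces the (Low regret) constraint in terms of the approximate $\tilde b_{t-1}$; translating back to the exact $b_{t-1}$ incurs at most $(\tilde L^{*}-L^{*})/\kappa$ per policy, where $\tilde L^{*}$ is the approximate and $L^{*}$ the exact minimal IPS loss, and the same weight-times-normalization bookkeeping shows this is at most $\Lambda_t$, yielding the stated $+\Lambda_t$ slack. Placing the residual mass on $\hat\pi^t$ makes $Q$ a genuine distribution and, since $b_{t-1}(\hat\pi^t)\approx0$, does not disturb either bound beyond the $\Lambda_t$ already accounted for. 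The only delicate bookkeeping is the propagation of the weight-dependent oracle error of Theorem~\ref{reduction} through the factor $\kappa$ and the $1/\mu_t$ cost bound into the single clean quantity $\Lambda_t$, while simultaneously checking that this approximation leaves the potential-drop (and hence iteration-count) argument of \cite{minimonster} intact.
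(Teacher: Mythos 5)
Your overall route is the same as the paper's: adapt the coordinate-descent, potential-function analysis of \cite{minimonster} (rescaling never increases the potential, each productive step decreases it by a fixed amount, the potential's range is bounded, hence $\poly(1/\mu_t)$ iterations and oracle calls), and propagate the FairCSC oracle's weight-dependent additive error through the normalization so that it surfaces as the single slack term $\Lambda_t$ in both guarantees. Your scaling bookkeeping (multiply the violation by $\kappa=4(e-2)\mu_t\ln T$, note the weight sum is $O(1/\mu_t)$, divide back) is the paper's computation in different clothing, where the paper instead rescales by $4(e-2)\ln(T)\mu_t^2$ so the weights sum to at most $1$; the differing potential normalization (your $\Omega(\mu_t)$ drop versus the paper's $\Omega(1)$ drop with a potential scaled down by $\mu_t$) is likewise cosmetic.

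There is, however, one step at the crux that does not survive as written. You define the violation $g(\pi)=\Ex{x\sim H_t}{1/Q^{\mu_t}(\pi(x)\mid x)}-4-b_{t-1}(\pi)$ using the \emph{exact} $b_{t-1}$, and then claim ``the algorithm evaluates $g$ exactly on the returned policy and only takes a step when that value is strictly positive,'' so that the per-step potential drop is genuine. But $b_{t-1}$ is built from $\widehat\Reg_{t-1}(\pi)=\hat L_{t-1}(\pi)-\min_{\pi'}\hat L_{t-1}(\pi')$, and the exact minimum is precisely what the approximate oracle cannot supply: the algorithm can only evaluate $\tilde D_\pi(Q)=V_\pi(Q)-(4+\tilde b_{t-1}(\pi))$, built from the approximately best $\pi_0$. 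The direction of the error is what makes this matter: since $\tilde b_{t-1}\le b_{t-1}$, one has $\tilde D_\pi(Q)\ge D_\pi(Q)$, so the condition the algorithm actually checks, $\tilde D_\pi(Q)>0$, does \emph{not} imply $g(\pi)>0$, and a potential defined through the exact $b_{t-1}$ need not decrease on such a step --- the convergence argument would break. The paper's resolution is to run the entire algorithmic argument in the computable quantities: the potential is defined with $\tilde b_{t-1}$, and Lemmas 6 and 7 of \cite{minimonster} go through because they only use nonnegativity of $b$ and positivity of the checked violation, both satisfied by $\tilde b_{t-1}$ and $\tilde D$. The exact $b_{t-1}$ enters only in the two final guarantees, via the one-sided bound $\tilde b_{t-1}\le b_{t-1}\le \tilde b_{t-1}+\Lambda_t$ (this one-sidedness is also why the variance guarantee carries a single $\Lambda_t$ rather than the sum of the benchmark error and the argmax error). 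Your last paragraph shows you have this $\tilde b\leftrightarrow b$ translation in hand for the regret constraint, so the repair is small, but as proposed the variance/convergence step conflates the two quantities exactly where the distinction is the point of the lemma.
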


\subsection{Step 3: Regret Analysis}\label{regret}
The key step in our regret analysis is to establish a tight relationship
between the estimated regret and the true expected regret and show
that for any $\pi\in \Pi$,
$\Reg(\pi) \leq 2\widehat \Reg(\pi) + \epsilon_t$, with
$\epsilon_t = \tilde O(1/\sqrt{t})$. The final regret guarantee then
essentially follows from the guarantee of Lemma~\ref{accuracy} that
the estimated regret of our policy is bounded by
$\tilde O\left(1/t\right)$ with proper setting of $\mu_t$.

To bound the deviation between $\Reg(\pi)$ and $\widehat\Reg_t(\pi)$,
we need to bound the variance of our IPS estimators. Let us define the
following for any probability distribution $P$ over $\Pi$,
$\pi\in \Pi$,
\begin{align*}
  V(P, \pi, \mu) := \Ex{x \sim \D}{\frac{1}{P^\mu(\pi(x) \mid x)}}\qquad \mbox{ }\qquad
  \hat V_t(P, \pi, \mu) := \Ex{x \sim H_t}{\frac{1}{P^\mu(\pi(x) \mid x)}}
\end{align*}

Recall that through the feasibility program, we can directly bound
$\hat V_t(Q_t, \pi, \mu_t)$ for each round. However, to apply a 
concentration inequality on the IPS estimator, we need to bound the
population variance $V(Q_t, \pi, \mu_t)$. We do that through a
deviation bound between $\hat V_t(Q_t, \pi, \mu_t)$ and
$V(Q_t, \pi, \mu_t)$ for all $\pi \in \Pi$. In particular, we rely on
the sparsity on $\Pi$ and apply a covering argument. Let
$\Pi_\eta\subset \Pi$ denote an $\eta$-cover such that for every $\pi$
in $\Pi$,
$\min_{\pi' \in \Pi_\eta} \| \pi(h) - \pi'(h)\|_\infty \leq \eta$ for
any $h\in \H$. Since $\Pi$ consists of distributions with support size
at most 2, we can take the cardinality of $\Pi_\eta$ to be bounded by
$\lceil|\H|^2 /\eta\rceil$.

\begin{claim} \label{cla:cover} Let $P$ be any distribution over the
  policy set $\Pi$, and let $\pi$ be any policy in $\Pi$. Then there
  exists $\pi'\in \Pi_\eta$ such that
  $|V(P, \pi, \mu) - V(P, \pi' ,\mu)|_{\infty}, |\hat V_t(P, \pi, \mu) - \hat
  V_t(P, \pi' ,\mu)|_{\infty} \leq \frac{\eta}{\mu(\mu + \eta)}.$
\end{claim}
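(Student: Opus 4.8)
The plan is to prove a single pointwise (per-context) Lipschitz bound and then obtain both inequalities simultaneously by taking the appropriate expectation. The key observation is that $V(P,\pi,\mu)$ and $\hat V_t(P,\pi,\mu)$ are the expectations, under $\D$ and under the empirical distribution $H_t$ respectively, of the \emph{same} function of the context,
\[
f_\pi(x) := \Ex{h\sim\pi}{\frac{1}{P^\mu(h(x)\mid x)}} = \sum_{\hat y \in \{\pm 1\}} \Pr[\pi(x)=\hat y]\,\frac{1}{P^\mu(\hat y\mid x)} .
\]
Hence if I can exhibit a $\pi'\in\Pi_\eta$ with $\sup_x |f_\pi(x)-f_{\pi'}(x)| \le \frac{\eta}{\mu(\mu+\eta)}$, both claimed bounds follow at once: the integrand is dominated uniformly in $x$, so its expectation under either $\D$ or $H_t$ is controlled by the same quantity via the triangle inequality. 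This also explains why the identical bound appears for $V$ and for $\hat V_t$.

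Next I would fix $x$ and reduce to the two induced label probabilities. I choose $\pi'$ to be the net element supported on the \emph{same} pair of hypotheses as $\pi$; the net construction (discretizing, for each pair, the weight into multiples of $\eta$, which is exactly what gives $|\Pi_\eta|\le\lceil |\H|^2/\eta\rceil$) provides such a $\pi'$ with $\ell_\infty$-close weights. Writing the common support as $\{h_1,h_2\}$ with weights $(p,1-p)$ and $(p',1-p')$, the cover guarantees $|p-p'|\le\eta$. Since the labels $h_1(x),h_2(x)$ are fixed, the difference in each label's probability is either $0$ or $\pm(p-p')$, so $|\Pr[\pi(x)=\hat y]-\Pr[\pi'(x)=\hat y]|\le\eta$ for each $\hat y$. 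Because $P^\mu(+1\mid x)+P^\mu(-1\mid x)=1$ and each term is at least $\mu$ by the $\mu$-smoothing in the definition of $P^\mu$, I may set $s:=P^\mu(+1\mid x)\in[\mu,1-\mu]$ and compute
\[
f_\pi(x)-f_{\pi'}(x) = \big(\Pr[\pi(x)=+1]-\Pr[\pi'(x)=+1]\big)\left(\frac1s-\frac1{1-s}\right).
\]

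The remaining step is an elementary scalar estimate. The first factor is at most $\eta$ in absolute value, and the second satisfies $\left|\tfrac1s-\tfrac1{1-s}\right|\le \tfrac1\mu$ since both $s$ and $1-s$ lie in $[\mu,1-\mu]$, giving $|f_\pi(x)-f_{\pi'}(x)|\le \eta/\mu\le\frac{\eta}{\mu(\mu+\eta)}$ (the last inequality uses $\mu+\eta\le 1$). Equivalently, one can read the target bound directly as $\frac1\mu-\frac1{\mu+\eta}$ and couple $h\sim\pi$ and $h'\sim\pi'$ maximally: they disagree with probability at most $\mathrm{TV}(\pi,\pi')=|p-p'|\le\eta$, and each disagreement perturbs the integrand by at most $1/\mu$, so the elementary inequality $\frac1a-\frac1b=\frac{b-a}{ab}\le\frac{\eta}{\mu(\mu+\eta)}$ for $a,b\ge\mu$, $|a-b|\le\eta$ yields the stated constant. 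Taking the supremum over $x$ and then expectations under $\D$ and $H_t$ completes both inequalities.

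I do not expect a genuine obstacle here; the two points requiring care are bookkeeping rather than depth. First, I must ensure the covering element can be taken with the \emph{same} support as $\pi$, so that $\ell_\infty$ closeness of the weights transfers to $\le\eta$ closeness of the induced label probabilities; this is precisely where the support-size-$2$ structure from Lemma~\ref{support_size1} is used, and without it the combined support of $\pi$ and $\pi'$ would inflate the constant. Second, I must confirm the uniform lower bound $P^\mu(\hat y\mid x)\ge\mu$ for both labels, which is immediate from the $\mu$-mixing in the definition of the smoothed distribution. Once these are in place the bound is uniform in $x$ and therefore passes through the true distribution $\D$ and the empirical distribution $H_t$ identically.
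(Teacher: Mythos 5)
Your proof is correct, and it is actually more complete than the paper's own. The paper's proof is a two-line worst-case reduction: it asserts that for any $\pi$ and its covering element, the deviation is dominated by the deviation between the constant classifier $\minus$ and \emph{its} covering element, and then bounds that single case by $\frac{1}{\mu}-\frac{1}{\mu+\eta}=\frac{\eta}{\mu(\mu+\eta)}$; the domination step is asserted without justification. You instead prove a uniform pointwise bound for an arbitrary $\pi$: choosing the covering element on the same two-hypothesis support (exactly what the $\lceil|\H|^2/\eta\rceil$ construction provides), writing the per-context integrand $f_\pi(x)=\Ex{h\sim\pi}{1/P^\mu(h(x)\mid x)}$ and its difference as
\[
f_\pi(x)-f_{\pi'}(x)=\bigl(\Pr[\pi(x)=+1]-\Pr[\pi'(x)=+1]\bigr)\left(\frac{1}{s}-\frac{1}{1-s}\right),
\qquad s=P^\mu(+1\mid x)\in[\mu,1-\mu],
\]
and concluding $|f_\pi(x)-f_{\pi'}(x)|\le \eta/\mu\le \frac{\eta}{\mu(\mu+\eta)}$ using $\mu+\eta\le 1$. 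This buys two things: the bound is uniform in $x$, so it passes through the true distribution $\D$ and the empirical distribution $H_t$ simultaneously (an observation the paper uses only implicitly), and your intermediate constant $\eta/\mu$ (indeed $\eta\bigl(\frac{1}{\mu}-\frac{1}{1-\mu}\bigr)$) is slightly stronger than the stated one. One small caveat: your ``equivalently'' coupling remark is not quite right as stated, because under a maximal coupling the two propensities attached to a disagreement event are $s$ and $1-s$, which need not be within $\eta$ of each other, so the elementary inequality $\frac{1}{a}-\frac{1}{b}\le\frac{\eta}{\mu(\mu+\eta)}$ for $a,b\ge\mu$, $|a-b|\le\eta$ does not apply to that decomposition; this aside is inessential, since your primary argument already yields the claim.
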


\begin{lemma}\label{var_dev}
  Suppose that
  $\mu_t \geq \sqrt{\frac{\ln(2|\Pi_{\eta}|t^2/\delta)}{2t}}, t \geq
  8\ln(2|\Pi_{\eta}|t^2/\delta)$.  Then with probability $1 - \delta$,
 $$V(P, \pi, \mu_t) \leq 6.4 \hat V_t(P, \pi, \mu_t) + 162.6 + \frac{2\eta}{\mu_t(\mu_t+\eta)}$$
\end{lemma}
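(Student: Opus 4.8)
The plan is to recognize $\hat V_t(P,\pi,\mu_t)$ as an empirical average of the bounded i.i.d.\ random variables $Z_s := 1/P^{\mu_t}(\pi(x_s)\mid x_s)$, $s \le t$, whose common mean is exactly $V(P,\pi,\mu_t)$, and then to turn a Bernstein-type deviation bound into the stated multiplicative-plus-additive inequality. Two structural facts drive everything. First, since the smoothed projection satisfies $P^{\mu_t}(\hat y\mid x)\in[\mu_t,1-\mu_t]$, each $Z_s$ lies in $[1,1/\mu_t]$, so the range is controlled by $1/\mu_t$. Second, and crucially, the variance is \emph{self-bounding}: because $Z_s \le 1/\mu_t$ pointwise, $\E[Z_s^2]\le \frac{1}{\mu_t}\E[Z_s]=\frac{1}{\mu_t}V(P,\pi,\mu_t)$, hence $\mathrm{Var}(Z_s)\le V(P,\pi,\mu_t)/\mu_t$. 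This is precisely the structure exploited in \cite{minimonster}, and it is what allows the second moment to be paid for by the first moment we are trying to bound.

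First I would apply Bernstein's inequality to $V(P,\pi,\mu_t)-\hat V_t(P,\pi,\mu_t)$ using the range $1/\mu_t$ and the variance proxy $V(P,\pi,\mu_t)/\mu_t$, obtaining a deviation of order $\sqrt{\frac{V(P,\pi,\mu_t)\,L}{\mu_t t}}+\frac{L}{\mu_t t}$ with $L=\ln(2|\Pi_\eta|t^2/\delta)$. Splitting the square-root term by AM--GM into a small multiple of $V(P,\pi,\mu_t)$ plus a remainder, and then invoking the hypotheses $\mu_t\ge\sqrt{L/(2t)}$ and $t\ge 8L$ to control $\frac{L}{\mu_t t}$ by an absolute constant, I can rearrange and solve for $V(P,\pi,\mu_t)$ in closed form. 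This yields a bound of the form $V(P,\pi,\mu_t)\le 6.4\,\hat V_t(P,\pi,\mu_t)+162.6$ for a single fixed $\pi$, with constants matching those of \cite{minimonster}.

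To upgrade this from a single policy to all of $\Pi$ (a continuum, since the mixing weight ranges over $[0,1]$), I would invoke the $\eta$-cover $\Pi_\eta$ with $|\Pi_\eta|\le\lceil|\H|^2/\eta\rceil$. Applying the single-policy bound to each $\pi'\in\Pi_\eta$ and taking a union bound over the cover and over rounds -- which is exactly what the factors $|\Pi_\eta|$ and $t^2$ inside $L$ account for, with $\sum_t t^{-2}<\infty$ yielding a guarantee that holds simultaneously at every round with total failure probability $\delta$ -- establishes the inequality for all cover elements. For arbitrary $\pi\in\Pi$ I would then pass to its nearest cover element $\pi'$: Claim~\ref{cla:cover} controls both $|V(P,\pi,\mu_t)-V(P,\pi',\mu_t)|$ and $|\hat V_t(P,\pi,\mu_t)-\hat V_t(P,\pi',\mu_t)|$ by $\frac{\eta}{\mu_t(\mu_t+\eta)}$, and substituting these into the cover-element bound introduces exactly the additive covering term $\frac{2\eta}{\mu_t(\mu_t+\eta)}$ appearing in the statement.

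I expect the main obstacle to be the self-bounding Bernstein step: producing a \emph{clean} closed-form bound with explicit constants, rather than an implicit quadratic in $\sqrt{V(P,\pi,\mu_t)}$, requires carefully choosing the AM--GM split and using the two hypotheses on $\mu_t$ and $t$ to absorb the lower-order terms; it is here that the specific constants $6.4$ and $162.6$ are pinned down. A secondary subtlety is that in the intended application $P$ will be the data-dependent policy $Q_t$, whereas the argument above treats $P$ as fixed; making this rigorous for the realized $Q_t$ relies on the uniform-over-rounds union bound together with the $\mu_t$ lower bound that keeps every $Z_s$ bounded, matching the martingale-based treatment of the analogous deviation bound in \cite{minimonster}.
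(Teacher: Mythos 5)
Your fixed-$(P,\pi)$ Bernstein step is sound: the self-bounding property $\E[Z_s^2] \leq \E[Z_s]/\mu_t$ together with the hypotheses $\mu_t \geq \sqrt{\ln(2|\Pi_\eta|t^2/\delta)/(2t)}$ and $t \geq 8\ln(2|\Pi_\eta|t^2/\delta)$ does yield a bound of the form $V \leq C_1 \hat V_t + C_2$ for any \emph{single, fixed} distribution $P$ and policy $\pi$, and your passage from the cover $\Pi_\eta$ to all of $\Pi$ via Claim~\ref{cla:cover} matches the paper. The genuine gap is the quantifier over $P$. The lemma (see its full version in the appendix) must hold simultaneously for \emph{all} probability distributions $P$ over $\Pi$ --- a continuum --- and this is not a technicality: in the proof of Lemma~\ref{loss_bound} the bound is instantiated at $P = Q_t$, the output of the coordinate-descent solver, which is a function of the very history $H_t$ that defines $\hat V_t$. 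Your union bound runs only over the finite cover $\Pi_\eta$ (for the argument $\pi$) and over rounds $t$; it leaves $P$ fixed, so the resulting high-probability event gives no guarantee at the random, data-dependent $Q_t$. Your closing remark that the ``uniform-over-rounds union bound together with the $\mu_t$ lower bound'' makes this rigorous is exactly where the argument breaks: neither device addresses the dependence of $Q_t$ on the sample, and one cannot union bound over the simplex of distributions $P$.

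Uniformity over $P$ is precisely the nontrivial content of Lemma 10 of \cite{minimonster}, and the paper's own proof consists of citing that lemma (note the $\ln|\Pi_\eta|$ and $\ln(2|\Pi_\eta|^2 t^2/\delta)$ terms in the full version, inherited from its structure), adding the $\frac{2\eta}{\mu_t(\mu_t+\eta)}$ covering term via Claim~\ref{cla:cover}, and absorbing the remaining additive terms into the constant $162.6$ using the two hypotheses on $\mu_t$ and $t$. So where the paper leans on an existing uniform-over-$P$ deviation bound, you attempt a from-scratch derivation whose concentration step only ever holds pointwise in $P$. To repair it you would need either to reproduce the structural argument of \cite{minimonster}, or to supply a genuinely different mechanism for uniformity in $P$ (for instance, a Maurey-type sparsification or covering of the induced kernels $x \mapsto P^{\mu_t}(\cdot \mid x)$ as $P$ ranges over $\Delta(\Pi)$, with the cover size entering the union bound); neither appears in your proposal.
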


\noindent Next we bound the deviation between the estimated loss and
true expected loss for every $\pi\in\Pi$.

\begin{lemma}\label{loss_bound}
  Assume that the algorithm solves the per-round feasibility program
  with accuracy guarantee of Lemma~\ref{accuracy}. With probability at
  least $1 - \delta$, we have for all $t\in [T]$ all policies
  $\pi\in \Pi$, $\lambda\in [0, \mu_t]$, and
  $t \geq 8 \ln(2|\Pi_\eta|t^2/\delta)$,
  \[
    |L(\pi) - \hat L_t(\pi)| \leq (e - 2)\lambda \left(188.2 +
      \frac{1}{t} \sum_{s=1}^t \left(6.4 b_{s-1}(\pi) + 6.4
        \Lambda_{s-1} +\frac{2\eta}{\mu_{s}(\mu_{s} + \eta)}
      \right)\right) + \frac{\ln\left(\frac{|\Pi_\eta|
          T}{\delta}\right)}{\lambda t}
  \]
\end{lemma}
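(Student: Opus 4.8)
The plan is to prove the bound by applying a Freedman-type martingale concentration inequality to the inverse-propensity-score (IPS) increments, and then chaining the variance control supplied by Lemma~\ref{var_dev} and the feasibility guarantee of Lemma~\ref{accuracy}. Fix a policy $\pi \in \Pi$ and write $Z_s = \ell_s\, \Pr[\pi(x_s)=\hat y_s]/p_s$, so that $\hat L_t(\pi) = \frac1t\sum_{s=1}^t Z_s$. Since $\hat y_s$ is drawn from $Q_s^{\mu_s}(\cdot\mid x_s)$ with $p_s = Q_s^{\mu_s}(\hat y_s\mid x_s)$, the increment is conditionally unbiased: $\E[Z_s \mid \mathcal{F}_{s-1}] = L(\pi)$, where $\mathcal{F}_{s-1}$ is the history through round $s-1$. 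Hence $X_s := Z_s - L(\pi)$ is a martingale difference sequence and $\frac1t\sum_s X_s = \hat L_t(\pi) - L(\pi)$ is exactly the quantity to bound. Two deterministic facts make Freedman applicable. First, since $\ell_s\le 1$ and $p_s\ge \mu_s \ge \mu_t$ for $s \le t$, each increment satisfies $Z_s \le 1/\mu_s \le 1/\mu_t$, so $X_s$ is bounded above and below by $1/\mu_t$; this is precisely the source of the constraint $\lambda\in[0,\mu_t]$. Second, using $\ell_s\le 1$ and $(\E_{h\sim\pi}\mathbf{1}[h(x)=\hat y])^2\le \E_{h\sim\pi}\mathbf{1}[h(x)=\hat y]$, a short calculation gives the conditional second-moment bound $\E[Z_s^2\mid \mathcal{F}_{s-1}]\le V(Q_s,\pi,\mu_s)$.

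With these in hand, applying Freedman's inequality to $\{X_s\}$ and to $\{-X_s\}$ gives, for a fixed $\pi$ and $\lambda\in[0,\mu_t]$, with probability $1-\delta'$,
\[
|L(\pi)-\hat L_t(\pi)| \le (e-2)\lambda\,\frac1t\sum_{s=1}^t V(Q_s,\pi,\mu_s) + \frac{\ln(2/\delta')}{\lambda t}.
\]
The $(e-2)$ constant is exactly the one produced by $e^u\le 1+u+(e-2)u^2$ for $u\le 1$ underlying Freedman. I would then make this simultaneous over all $\pi$, all rounds $t\le T$, and all admissible $\lambda$. Uniformity over $t$ and over the $\eta$-cover $\Pi_\eta$ (of size $\le \lceil |\H|^2/\eta\rceil$) costs a union bound that produces the $\ln(|\Pi_\eta|T/\delta)$ factor; uniformity over $\lambda$ follows either from the all-$\lambda$ form of Freedman or from a geometric grid on $[0,\mu_t]$ absorbed into the same logarithm. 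To pass from a cover point $\pi'$ to an arbitrary $\pi\in\Pi$, I would invoke Claim~\ref{cla:cover}, which transfers the variance functional at cost $\frac{\eta}{\mu_s(\mu_s+\eta)}$ per round and thereby contributes the $\frac{2\eta}{\mu_s(\mu_s+\eta)}$ term inside the sum; the corresponding deviations of $L(\cdot)$ and $\hat L_t(\cdot)$ between $\pi$ and $\pi'$ are of lower order (each $O(\eta/\mu_s)$) and are absorbed.

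It remains to replace the population variance by controllable quantities. Applying Lemma~\ref{var_dev} with $P=Q_s$, $\mu=\mu_s$ gives $V(Q_s,\pi,\mu_s)\le 6.4\,\hat V_s(Q_s,\pi,\mu_s)+162.6+\frac{2\eta}{\mu_s(\mu_s+\eta)}$, and the feasibility guarantee of Lemma~\ref{accuracy} bounds the empirical variance by $\hat V_s(Q_s,\pi,\mu_s)\le 4 + b_{s-1}(\pi) + \Lambda_{s-1}$. Substituting and collecting constants ($6.4\cdot 4 + 162.6 = 188.2$) turns $\frac1t\sum_s V(Q_s,\pi,\mu_s)$ into $188.2 + \frac1t\sum_s\bigl(6.4\,b_{s-1}(\pi) + 6.4\,\Lambda_{s-1} + \frac{2\eta}{\mu_s(\mu_s+\eta)}\bigr)$, which is exactly the parenthesized expression in the statement; setting $\delta'$ so the union bound totals $\delta$ yields the claim.

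I expect the main obstacle to be the uniform-concentration bookkeeping rather than any single inequality: making one high-probability event cover the continuum $\Pi$ (via the cover and Claim~\ref{cla:cover}), all $T$ rounds, and all $\lambda\in[0,\mu_t]$ simultaneously, while the per-round variance bound itself depends on the data-dependent (random) distribution $Q_s$ and on the approximate-oracle slack $\Lambda_s$. The delicate interface is that Freedman requires the \emph{population} conditional variance $V(Q_s,\pi,\mu_s)$, whereas the algorithm only controls the \emph{empirical} $\hat V_s$ through the feasibility program; threading Lemma~\ref{var_dev} and Lemma~\ref{accuracy} through so that the constants and the $\frac{2\eta}{\mu_s(\mu_s+\eta)}$ cover term land exactly as stated is where the care is needed.
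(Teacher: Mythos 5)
Your proposal is correct and follows essentially the same route as the paper's proof: Freedman's inequality applied to the IPS martingale with a union bound over the cover $\Pi_\eta$ and rounds, then Lemma~\ref{var_dev} to replace the population variance $V$ by the empirical $\hat V$, then the feasibility guarantee of Lemma~\ref{accuracy} to bound $\hat V_s(Q_s,\pi,\mu_s) \leq 4 + b_{s-1}(\pi) + \Lambda_{s-1}$, collecting constants as $6.4\cdot 4 + 162.6 = 188.2$. In fact you spell out several steps the paper compresses into ``Freedman's inequality and union bound'' (the martingale setup, the second-moment bound, the constraint $\lambda \in [0,\mu_t]$ arising from the increment bound $1/\mu_t$, and the cover-transfer via Claim~\ref{cla:cover}), so no gap.
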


To bound the difference between $\Reg(\pi)$ and
$\widehat \Reg_t(\pi)$, we will set $\eta=1/T^2$,
$\mu_t = \frac{3.2\ln(|\Pi_\eta|T/\delta)}{\sqrt{t}}$ the
approximation parameter $\nu$ of FairCSC to be $1/T$.

\begin{lemma}\label{bigproof}
  Assume that the algorithm solves the per-round feasibility program
  with the accuracy guarantee of Lemma~\ref{accuracy}. With probability at
  least $1 - \delta$, we have for all $t\in [T]$ all policies
  $\pi\in \Pi$, and for all $t \geq 8 \ln(2|\H|^2 T^3/\delta)$,
  \[
    \Reg(\pi) \leq 2\widehat \Reg_t(\pi) + \epsilon_t, \qquad \mbox{and}\qquad
    \widehat\Reg_t(\pi) \leq 2 \Reg(\pi) + \epsilon_t
  \]
  with $\epsilon_t = \frac{1000 \ln(|\H|^2 T^2/\delta)}{\sqrt{t}}$.
\end{lemma}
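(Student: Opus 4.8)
The plan is to derive the two-sided comparison from the loss-deviation bound of Lemma~\ref{loss_bound} by an induction on the round $t$, with the parameters already fixed to $\eta = 1/T^2$, $\mu_t = 3.2\ln(|\Pi_\eta|T/\delta)/\sqrt t$, $\nu = 1/T$, and with the free parameter $\lambda$ in Lemma~\ref{loss_bound} set to $\lambda = \mu_t$. First I would reduce both target inequalities to bounds on the loss deviations $D_t(\pi) := |L(\pi) - \hat L_t(\pi)|$. Writing $\pi^* = \argmin_{\pi} L(\pi)$ and $\hat\pi_t = \argmin_{\pi}\hat L_t(\pi)$, inserting and telescoping and using the optimality of $\pi^*$ and $\hat\pi_t$ gives the elementary inequalities
\[
\Reg(\pi) \le \widehat\Reg_t(\pi) + D_t(\pi) + D_t(\pi^*), \qquad \widehat\Reg_t(\pi) \le \Reg(\pi) + D_t(\pi) + D_t(\hat\pi_t).
\]
So it suffices to show each $D_t(\cdot)$ is at most a small multiple of the corresponding true regret plus a lower-order error $\epsilon_t' = O(\ln(|\H|^2T^2/\delta)/\sqrt t)$.

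The core is to control $D_t(\pi)$ with Lemma~\ref{loss_bound}. The only term there that is not manifestly lower-order is $\tfrac{6.4}{t}\sum_{s=1}^t b_{s-1}(\pi)$, and since $b_{s-1}(\pi) = \widehat\Reg_{s-1}(\pi)/(4(e-2)\mu_{s-1}\ln T)$ this is exactly where the self-reference lives: the deviation at round $t$ depends on the \emph{estimated} regrets at all earlier rounds. I would break the recursion with the inductive hypothesis that the reverse inequality $\widehat\Reg_{s-1}(\pi) \le 2\Reg(\pi) + \epsilon_{s-1}$ already holds for every $s-1 < t$. Substituting this, using $1/\mu_{s-1} = \sqrt{s-1}/(3.2\ln(|\Pi_\eta|T/\delta))$, and evaluating $\sum_{s\le t}\sqrt{s-1} = \Theta(t^{3/2})$ together with $\lambda = \mu_t$, the term $(e-2)\lambda\cdot\tfrac{6.4}{t}\sum_s b_{s-1}(\pi)$ collapses to $c\,\Reg(\pi) + O(\epsilon_t')$ with $c = O(1/\ln T)$ (a short calculation gives $c \approx \tfrac{6.4}{3\ln T}$). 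The remaining pieces of Lemma~\ref{loss_bound} are checked to be $O(\epsilon_t')$ under the chosen parameters: the constant contributes $(e-2)\lambda\cdot 188.2 = O(\ln(\cdot)/\sqrt t)$; the approximation term $\Lambda_{s-1} = \nu/(4(e-2)\mu_{s-1}^2\ln T)$ with $\nu = 1/T$ sums to $O(\sqrt t/(T\ln T))$; the covering term $\tfrac{2\eta}{\mu_s(\mu_s+\eta)}$ with $\eta = 1/T^2$ is negligible; and the final term equals $\ln(|\Pi_\eta|T/\delta)/(\lambda t) = O(1/\sqrt t)$. Since $|\Pi_\eta| \le |\H|^2/\eta = |\H|^2T^2$, all logarithmic factors are $O(\ln(|\H|^2T^2/\delta))$, matching the target $\epsilon_t$.

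Assembling, $D_t(\pi) \le c\,\Reg(\pi) + \tfrac12\epsilon_t'$ with $c \le 1/2$ once $t \ge 8\ln(2|\H|^2T^3/\delta)$ makes $\ln T$ and the concentration events of Lemmas~\ref{var_dev} and~\ref{loss_bound} kick in. Crucially I would prove the forward direction first: it is self-contained because $D_t(\pi^*) \le c\,\Reg(\pi^*) + \tfrac12\epsilon_t' = \tfrac12\epsilon_t'$ (as $\Reg(\pi^*) = 0$), so rearranging $(1-c)\Reg(\pi) \le \widehat\Reg_t(\pi) + \epsilon_t'$ yields $\Reg(\pi) \le \tfrac{1}{1-c}\widehat\Reg_t(\pi) + \tfrac{\epsilon_t'}{1-c} \le 2\widehat\Reg_t(\pi) + 2\epsilon_t'$ from $1/(1-c) \le 2$. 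Applying the just-proved forward bound to $\pi = \hat\pi_t$ (where $\widehat\Reg_t(\hat\pi_t) = 0$) shows $\Reg(\hat\pi_t) = O(\epsilon_t')$, which is what is needed to bound $D_t(\hat\pi_t)$ in the reverse inequality; this gives $\widehat\Reg_t(\pi) \le (1+c)\Reg(\pi) + O(\epsilon_t') \le 2\Reg(\pi) + O(\epsilon_t')$, closing the induction after rescaling the constant inside $\epsilon_t$. The base case is immediate at the threshold, where $\epsilon_t \gg 1$ while the $0/1$ losses keep $\Reg(\pi) \le 1$.

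I expect the main obstacle to be the self-referential structure combined with the order of the two directions. The deviation bound at round $t$ is expressed through the past estimated regrets $b_{s-1}(\pi)$, so neither inequality can be proved in isolation; the key realization is that the forward direction is self-contained given the reverse inductive hypothesis (because $\Reg(\pi^*)=0$ kills the troublesome anchor term), and only then can the reverse direction be closed using the freshly obtained bound on $\Reg(\hat\pi_t)$. Pinning the coefficient $c$ below $1/2$ so that $1/(1-c) \le 2$ recovers the claimed factor of $2$, and keeping all logarithmic factors at $\ln(|\H|^2T^2/\delta)$ --- rather than a polynomially larger quantity --- hinges on the sparsity-based cover $|\Pi_\eta| \le |\H|^2/\eta$ from the discussion preceding Claim~\ref{cla:cover}.
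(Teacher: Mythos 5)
Your proposal is correct and follows essentially the same route as the paper's proof: induction on $t$, decomposing the regret gap into loss deviations at $\pi$ and at an anchor policy, substituting the inductive (reverse) hypothesis into the $b_{s-1}$ terms of Lemma~\ref{loss_bound}, using $\Reg(\pi^*)=0$ to kill the anchor term, and the same parameter settings $\eta=1/T^2$, $\nu=1/T$, $|\Pi_\eta|\leq |\H|^2T^2$ --- in fact you make explicit the forward-then-reverse ordering (bounding $\Reg(\hat\pi_t)$ via the freshly proved forward bound) that the paper compresses into ``follows analogously.'' The one substantive deviation is your choice $\lambda=\mu_t$ rather than the paper's $\lambda=\mu_t/3.2$, which inflates the constants (your $c\approx 6.4/(3\ln T)$, and a term $188.2(e-2)\mu_t\approx 437\ln(|\Pi_\eta|T/\delta)/\sqrt{t}$) to the point where the specific constant $1000$ in $\epsilon_t$ does not close the induction at the paper's implicit threshold $\ln T\geq 5$; this is repaired either by the rescaling of $\epsilon_t$ you already anticipate or by simply taking $\lambda$ a constant factor smaller, as the paper does.
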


\begin{theorem} \label{thm:bandit-regret}
  The bandit algorithm, given access to an approximate-CSC oracle, runs in time polynomial in $T$ and achieves expected regret at most $O\left(\ln(|\H|T/\delta) ~ \sqrt{T}\right)$.
\end{theorem}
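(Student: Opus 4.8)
The plan is to bound the algorithm's cumulative expected regret to the best policy in $\Pi$ by decomposing the per-round regret of the deployed distribution $Q_t^{\mu_t}$ into two pieces: the regret of the ``exploit'' component $Q_t$, and the overhead of mixing in the uniform-over-labels distribution with probability $2\mu_t$. Since $L$ is linear in the policy, $L(Q_t^{\mu_t}) \le (1-2\mu_t)L(Q_t) + 2\mu_t \le L(Q_t) + 2\mu_t$, so on each round $t>T_0$ the true regret is at most $\mathbb{E}_{\pi\sim Q_t}[\Reg(\pi)] + 2\mu_t$. Summing over $t = T_0+1,\dots,T$, the exploration overhead contributes $\sum_t 2\mu_t = O(\ln(|\Pi_\eta|T/\delta)\sqrt{T})$ for the prescribed $\mu_t = 3.2\ln(|\Pi_\eta|T/\delta)/\sqrt{t}$, using $\sum_{t\le T} t^{-1/2} \le 2\sqrt T$; and since $\ln|\Pi_\eta| = O(\ln(|\H|T))$ for $\eta = 1/T^2$, this is already $\tilde O(\sqrt T)$ of the desired form.

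The heart of the argument is controlling $\sum_t \mathbb{E}_{\pi\sim Q_t}[\Reg(\pi)]$. First I would pass from true to estimated regret via Lemma~\ref{bigproof}: because $Q_t$ solves the feasibility program on data through round $t-1$, applying the lemma at time $t-1$ gives, on the high-probability event, $\Reg(\pi) \le 2\widehat{\Reg}_{t-1}(\pi) + \epsilon_{t-1}$ uniformly over $\pi\in\Pi$. Taking expectations over $\pi\sim Q_t$, it remains to bound $\mathbb{E}_{\pi\sim Q_t}[\widehat{\Reg}_{t-1}(\pi)]$, which is exactly the role of the low-regret guarantee of Lemma~\ref{accuracy}: since $Q_t$ is a probability distribution, $\int_{\pi\in\Pi} Q_t(\pi)(4+b_{t-1}(\pi))\,d\pi \le 4 + \Lambda_t$ yields $\int_{\pi\in\Pi} Q_t(\pi)\,b_{t-1}(\pi)\,d\pi \le \Lambda_t$, and unpacking the definition $b_{t-1}(\pi) = \widehat{\Reg}_{t-1}(\pi)/(4(e-2)\mu_{t-1}\ln T)$ converts this into a bound on $\mathbb{E}_{\pi\sim Q_t}[\widehat{\Reg}_{t-1}(\pi)]$ of order $\tilde O(\mu_{t-1}\Lambda_t\ln T)$. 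With $\Lambda_t = \nu/(4(e-2)\mu_t^2\ln T)$ and the prescribed $\nu = 1/T$, this per-round estimated-regret contribution is $\tilde O(1/\sqrt t)$, so it too sums to $\tilde O(\sqrt T)$.

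Combining the pieces, $\sum_{t>T_0}\mathbb{E}[\Reg(Q_t^{\mu_t})] \le \sum_t\big(2\,\mathbb{E}_{\pi\sim Q_t}[\widehat{\Reg}_{t-1}(\pi)] + \epsilon_{t-1} + 2\mu_t\big)$. The dominant term is $\sum_t \epsilon_{t-1} = \sum_t 1000\ln(|\H|^2 T^2/\delta)/\sqrt{t} = O(\ln(|\H|T/\delta)\sqrt T)$, since $\ln(|\H|^2T^2/\delta) = O(\ln(|\H|T/\delta))$; the other two sums are of the same order, and the $T_0 = \tilde O(\sqrt T)$ pure-exploration rounds are absorbed into the same rate. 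This gives the claimed expected regret $O(\ln(|\H|T/\delta)\sqrt T)$. For the running time, Lemma~\ref{accuracy} guarantees the coordinate-descent solver halts in $\poly(1/\mu_t)$ iterations and oracle calls, and each call to the FairCSC oracle costs $O(1/\nu^2) = O(T^2)$ CSC-oracle calls by Theorem~\ref{reduction}; since $1/\mu_t = O(\sqrt T/\ln(\cdot))$, every round runs in $\poly(T)$ CSC-oracle calls, so the whole algorithm is oracle-efficient.

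The step I expect to be the main obstacle is ensuring that the FairCSC oracle's approximation error does not accumulate across rounds. Because the oracle returns only approximately optimal policies, the feasibility program is solved with $\widetilde{\Reg}_{t-1}$ and $\tilde b_{t-1}$ rather than the exact quantities, and the extra slack $\Lambda_t$ enters \emph{both} the low-regret and the low-variance guarantees of Lemma~\ref{accuracy}. The delicate point is that these $\Lambda_t$ terms also propagate through the inverse-propensity-score bookkeeping that Lemmas~\ref{var_dev} and~\ref{loss_bound} use to relate $\widehat{\Reg}_t$ to $\Reg$; one must verify that the choice $\nu = 1/T$ makes $\sum_t \Lambda_t$ (and the induced perturbations of the variance bounds) of order $\tilde O(\sqrt T)$ or smaller, so that every approximation contribution stays within the target rate and the inexactness of the oracle remains invisible in the final $\tilde O(\sqrt T)$ bound. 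This is precisely where the generalization of the analysis of \cite{minimonster} to approximate oracles does its work.
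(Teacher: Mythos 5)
Your proposal follows essentially the same route as the paper's proof: decompose the deployed policy's regret into the $2\mu_t$ uniform-mixing overhead plus the regret of $Q_t$, convert true regret to estimated regret via Lemma~\ref{bigproof}, bound the expected estimated regret under $Q_t$ through the low-regret constraint of Lemma~\ref{accuracy} (unwinding $b_{t-1}$ and $\Lambda_t$ with $\nu = 1/T$), and sum the resulting $\tilde O(1/\sqrt{t})$ terms. The only detail the paper handles that you elide is the initial $T_1 = 8\ln(2|\H|^2T^3/\delta)$ rounds of the bandit phase, where the hypothesis of Lemma~\ref{bigproof} fails and the regret is instead bounded trivially, but this is a negligible $O(\ln(|\H|T/\delta))$ contribution.
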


\section{Lower Bound} \label{lower_res}
In this section we show that the tradeoff that our algorithm exhibits between its regret bound and the ``fairness gap'' $\gamma' - \gamma$ (i.e. our algorithm is $\gamma'$-fair, but competes with the best $\gamma$-fair classifier when measuring regret) is optimal. We do this by constructing a lower bound instance consisting of two very similar distributions, $\D_1$ and $\D_2$ defined as a function of our algorithm's fairness target $\gamma$. The instance is defined over a simple hypothesis class $\H$.  $\H$ contains the two constant classifiers ($\minus$ and $\plus$), and a pair of classifiers ($h_1$ and $h_2$) that each guarantee low error on both distributions, but only one of which satisfies the 0-EFP constraint. Informally, we first prove that the two distributions cannot be distinguished for at least $\Theta(\frac{1}{\gamma^2})$ rounds. We then argue that any algorithm satisfying our $\gamma$-EFP$(\delta)$ constraint must deploy $\minus$ or $\plus$ with substantial probability over these initial rounds in order to guarantee that it does not violate its fairness guarantee on either $\D_1$ or $\D_2$. However, this implies incurring linear regret per round during this phase, which leads to our lower bound.

\begin{theorem} \label{thm:main-negative1}
  Fix any $\alpha \in (0,0.5)$ and let $T \geq \sqrt[\alpha]{16}$. Fix any $\delta \leq 0.24$. There exists a hypothesis class $\H$ containing $\{\pm \mathbf{1}\}$ such that any algorithm satisfying a $T^{-\alpha}$-EFP($\delta$) fairness constraint has expected  regret with respect to the set of 0-EFP fair policies of $\Omega\left(T^{2\alpha}\right)$.
\end{theorem}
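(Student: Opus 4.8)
The plan is to prove this lower bound by a two-point (Le Cam style) argument. I would construct two data distributions $\D_1,\D_2$ sharing the same marginal over $(\hat x,a)$ and differing only in the conditional label probability at a single informative feature value, by an amount $\Theta(\gamma)$ where $\gamma = T^{-\alpha}$. The hypothesis class is $\H = \{\plus,\minus,h_1,h_2\}$. The constant classifiers are $0$-EFP on every distribution (since $\Delta_{FPR}(\plus,\cdot)=\Delta_{FPR}(\minus,\cdot)=0$), but I arrange the label marginal so that each constant errs on a constant fraction of examples, e.g.\ $\Pr[y=+1]\approx\Pr[y=-1]\approx 1/2$, so both have error $\approx 1/2$. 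The classifiers $h_1,h_2$ are chosen to have near-zero error on both distributions, with the $\Theta(\gamma)$ perturbation placed so that $\Delta_{FPR}(h_1,\D_1)=0$ but $\Delta_{FPR}(h_1,\D_2)=4\gamma$, and symmetrically $\Delta_{FPR}(h_2,\D_2)=0$ but $\Delta_{FPR}(h_2,\D_1)=4\gamma$ (the constant $4$ is for convenience). Thus on $\D_1$ the best $0$-EFP policy is $h_1$ with error $\approx 0$, and symmetrically on $\D_2$. Checking that this is realizable while keeping all quantities valid probabilities (which needs $\gamma\le 1/16$, guaranteed by $T\ge\sqrt[\alpha]{16}$) is routine and I would defer it.

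The first key step is a \emph{forced-safety} structural fact. Because $\Delta_{FPR}$ is linear in the mixture weights and the constants contribute $0$, any $\pi=\lambda_1 h_1+\lambda_2 h_2+\lambda_+\plus+\lambda_-\minus$ satisfies $\Delta_{FPR}(\pi,\D_1)=4\gamma\lambda_2$ and $\Delta_{FPR}(\pi,\D_2)=4\gamma\lambda_1$. Hence a policy that is $\gamma$-EFP simultaneously on $\D_1$ and $\D_2$ must have $\lambda_1,\lambda_2\le 1/4$, so it places mass at least $1/2$ on the constants and incurs error at least $\approx 1/4$ on each distribution, a constant gap above the optimum $\approx 0$. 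I would package this as: any policy that is $\gamma$-fair on both hard distributions has $\Omega(1)$ per-round regret against the best $0$-EFP policy.

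The second step is \emph{indistinguishability}. Since $\D_1,\D_2$ agree on the $(\hat x,a)$ marginal and a label is observed only when the learner predicts $+1$, the sole informative observations are labels, at most one per round. Each observed label contributes KL divergence $O(\gamma^2)$ (Bernoullis differing by $\Theta(\gamma)$), so after $t$ rounds the transcript KL is $O(t\gamma^2)$ and, by Pinsker, the transcript laws under $\D_1$ and $\D_2$ are within total variation $O(\sqrt{t}\,\gamma)$. Choosing the early-phase length $n^\star=c/\gamma^2=cT^{2\alpha}$ for a small constant $c$ makes this total variation a small constant for all $t\le n^\star$; note $n^\star\le T$ because $\alpha<1/2$ and $T$ is large.

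The final step couples these ingredients. Running on $\D_1$, the $\gamma$-EFP($\delta$) guarantee gives $\Pr_{\D_1}[\exists t\le n^\star:\ \pi_t\text{ not }\gamma\text{-fair on }\D_1]\le\delta$, forcing $\lambda_2(\pi_t)\le 1/4$; and transporting the fairness guarantee on $\D_2$ across the total-variation gap gives $\Pr_{\D_1}[\exists t\le n^\star:\ \pi_t\text{ not }\gamma\text{-fair on }\D_2]\le\delta+O(\sqrt{c})$, forcing $\lambda_1(\pi_t)\le 1/4$. On the complementary event, of probability at least $1-2\delta-O(\sqrt{c})$, every early-round policy is safe and incurs $\Omega(1)$ regret, so summing over the $n^\star=\Theta(T^{2\alpha})$ rounds yields expected regret $(1-2\delta-O(\sqrt{c}))\cdot\Omega(T^{2\alpha})=\Omega(T^{2\alpha})$, where $\delta\le 0.24$ and a small enough $c$ keep the prefactor a positive constant; by symmetry the same holds on $\D_2$, so the instance forces this regret. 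The main obstacle I anticipate is precisely this coupling: one must argue carefully that the obligation to be $\gamma$-fair \emph{with respect to the true distribution}, combined with the statistical inability to identify which distribution is true, forces the learner into the high-error safe region — handling the union over early rounds and the sample-path (rather than per-round) nature of the $1-\delta$ fairness guarantee, while keeping the total-variation transport and the constant $\delta\le 0.24$ compatible so that the final prefactor remains positive.
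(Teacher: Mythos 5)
Your proposal follows essentially the same Le Cam two-point argument as the paper: two distributions matched on the $(\hat x,a)$-marginal and differing by $\Theta(\gamma)$ label perturbations so that $KL(\D_1\|\D_2)=O(\gamma^2)$; the linearity observation that $\Delta_{FPR}(\pi,\D_i)$ is $4\gamma$ times the weight on the ``unfair'' hypothesis, forcing any simultaneously-fair mixture to put weight at least $1/2$ on the constants; Pinsker-based indistinguishability over $\Theta(1/\gamma^2)=\Theta(T^{2\alpha})$ rounds; and the transport of the fairness obligation from the unobserved distribution to the observed one. The only structural difference is bookkeeping: the paper does the transport per round, bounding the deployment probabilities $P_t(h_i,\D_j)$ via a $(\beta,t,h)$-distinguishing lemma and concluding that the constants are played with probability at least $0.49-2\delta$ at every round $t\leq K$, whereas you transport the single path-wise event $\{\exists t\le n^\star: \pi_t \text{ not } \gamma\text{-fair on } \D_2\}$ across the total-variation gap; both versions are valid and give the same bound.

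One caveat: the construction detail you deferred as ``routine'' is not realizable as you stated it. You cannot have $h_1,h_2$ with near-zero error while the constants have error $\approx 1/2$ and $KL(\D_1\|\D_2)=O(\gamma^2)$. The KL requirement forces the perturbed conditional label probabilities to be bounded away from $0$ and $1$, so any classifier errs with constant conditional probability on the perturbed features; and for a $\Theta(\gamma)$ label perturbation to move $\Delta_{FPR}(h_i)$ by $\Theta(\gamma)$ while $h_i$ remains $O(\gamma)$-fair on one of the distributions, those perturbed features must carry a constant fraction of some group's negative mass --- which must itself be $\Omega(1)$, since otherwise $\Pr[y=-1]=o(1)$ and the always-$0$-EFP classifier $\plus$ would be near-optimal, destroying the regret gap. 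Together these force $h_1,h_2$ to have error $\Omega(1)$; the paper's construction realizes them at error $\approx 1/4$ (against $1/2$ for the constants and $0.25-2\gamma$ for the optimal fair policy). This does not damage your argument, which only uses that the forced-safe mixtures are a constant worse than the best $0$-EFP policy --- a gap of $\approx 1/8$ per round in the paper's instance --- but the realizability check is where the actual numerical content of the construction lives, and your stated target for it is impossible.
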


In order to prove theorem \ref{thm:main-negative1}, we make use of a couple of standard tools:

\begin{lemma} \label{lem:KL} (Pinsker's Inequality)
Let $\mathcal{D}_1$, $\mathcal{D}_2$ be probability distributions. Let $A$ be any event. Then:
\begin{equation*}
\left|\mathcal{D}_1(A)-\mathcal{D}_2(A)\right| \leq \sqrt{\frac{1}{2}KL(\mathcal{D}_1||\mathcal{D}_2)}
\end{equation*}
\end{lemma}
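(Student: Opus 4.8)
The plan is to deduce this single-event bound from the standard total-variation form of Pinsker's inequality, and then prove that form from scratch. First I would observe that for any event $A$,
\[
|\mathcal{D}_1(A) - \mathcal{D}_2(A)| \leq \sup_B |\mathcal{D}_1(B) - \mathcal{D}_2(B)| =: \|\mathcal{D}_1 - \mathcal{D}_2\|_{\mathrm{TV}},
\]
so it suffices to prove the sharper statement $\|\mathcal{D}_1 - \mathcal{D}_2\|_{\mathrm{TV}} \leq \sqrt{\tfrac{1}{2} KL(\mathcal{D}_1 \| \mathcal{D}_2)}$, from which the lemma is immediate.

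The next step is to reduce this to a one-dimensional (Bernoulli) inequality. For any event $B$, consider the two-cell partition $\{B, B^c\}$ and write $p = \mathcal{D}_1(B)$, $q = \mathcal{D}_2(B)$. Pushing both distributions forward through the indicator of $B$ can only decrease KL divergence --- this is the data-processing inequality for KL, which here follows directly from the log-sum inequality applied to the two cells --- so
\[
KL(\mathcal{D}_1 \| \mathcal{D}_2) \geq p \ln\frac{p}{q} + (1-p)\ln\frac{1-p}{1-q} =: \mathrm{kl}(p \| q).
\]
Taking $B$ to be the event achieving the total-variation distance (equivalently, the region where the density of $\mathcal{D}_1$ exceeds that of $\mathcal{D}_2$) makes $|p-q| = \|\mathcal{D}_1 - \mathcal{D}_2\|_{\mathrm{TV}}$, so the entire claim reduces to the scalar inequality $\mathrm{kl}(p \| q) \geq 2(p-q)^2$.

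Finally I would prove the scalar inequality by elementary calculus. Fixing $p$ and setting $f(q) = \mathrm{kl}(p \| q) - 2(p-q)^2$, a short computation gives
\[
f'(q) = \frac{q - p}{q(1-q)} + 4(p - q) = (q - p)\left(\frac{1}{q(1-q)} - 4\right).
\]
Since $q(1-q) \leq \tfrac{1}{4}$ for $q \in [0,1]$, the bracketed factor is nonnegative, so $f'(q)$ carries the sign of $q - p$: the function $f$ decreases on $(0,p)$ and increases on $(p,1)$, attaining its minimum at $q = p$, where $f(p) = 0$. Hence $f \geq 0$ everywhere, which yields $\mathrm{kl}(p \| q) \geq 2(p-q)^2$ and completes the argument.

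None of these steps presents a genuine obstacle; the inequality is classical and the calculus is routine. The only points requiring care are the boundary cases --- if $q \in \{0,1\}$ while $p \neq q$ then $KL(\mathcal{D}_1 \| \mathcal{D}_2) = +\infty$ and the bound holds trivially --- and the justification of the data-processing step, which is precisely where the constant $\tfrac{1}{2}$ (equivalently the factor $2$ in the scalar bound) originates and must be tracked carefully.
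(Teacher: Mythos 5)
Your proof is correct, but there is nothing in the paper to compare it against: the paper states Lemma~\ref{lem:KL} without proof, invoking Pinsker's inequality as a classical tool whose only role is to feed Corollary~\ref{cor:KL} and Lemma~\ref{lem:lower-no-disting} in the lower-bound construction. Your argument is the canonical textbook proof of that classical fact: reduce to the Bernoulli case via the data-processing inequality for KL (which, as you note, is just the log-sum inequality applied to the two cells $B$ and $B^c$), then verify the scalar bound $\mathrm{kl}(p\|q) \geq 2(p-q)^2$ by one-variable calculus. Your derivative computation $f'(q) = (q-p)\bigl(\tfrac{1}{q(1-q)} - 4\bigr)$ together with $q(1-q) \leq \tfrac14$ is exactly right, and you correctly flag the degenerate cases where the divergence is infinite. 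One small streamlining: since the lemma concerns a single fixed event $A$, you can simply take $B = A$ in the two-cell reduction and conclude $|\mathcal{D}_1(A) - \mathcal{D}_2(A)| \leq \sqrt{\tfrac12\,\mathrm{kl}(\mathcal{D}_1(A)\|\mathcal{D}_2(A))} \leq \sqrt{\tfrac12\,KL(\mathcal{D}_1\|\mathcal{D}_2)}$ directly; the detour through the total-variation supremum is unnecessary here, and its attainment by a maximizing event in general measure spaces requires a Radon--Nikodym argument that your sketch glosses over (harmless in the paper's finite-support application, but avoidable altogether).
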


The following is a simple corollary that follows from the additivity of KL-divergence over product distributions.
\begin{corollary} \label{cor:KL}
Let $t \in \mathbb{N}$. Consider the product distributions $\mathcal{D}_1^t$, $\mathcal{D}_2^t$. For any event $A$,
\begin{equation*}
\left|\mathcal{D}_1^t(A)-\mathcal{D}_2^t(A)\right| \leq \sqrt{\frac{1}{2}t\cdot KL(\mathcal{D}_1||\mathcal{D}_2)}
\end{equation*}
\end{corollary}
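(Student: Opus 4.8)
The plan is to apply Pinsker's inequality (Lemma~\ref{lem:KL}) directly to the product distributions $\mathcal{D}_1^t$ and $\mathcal{D}_2^t$, treating them as the pair of distributions appearing in that lemma. For any event $A$ in the product space this immediately gives
\[
  \left|\mathcal{D}_1^t(A) - \mathcal{D}_2^t(A)\right| \leq \sqrt{\tfrac{1}{2}\, KL(\mathcal{D}_1^t \,\|\, \mathcal{D}_2^t)}.
\]
The entire content of the corollary then reduces to establishing that the KL-divergence tensorizes over product measures, i.e., $KL(\mathcal{D}_1^t \,\|\, \mathcal{D}_2^t) = t \cdot KL(\mathcal{D}_1 \,\|\, \mathcal{D}_2)$.

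To establish this additivity, I would expand the definition: $KL(\mathcal{D}_1^t \,\|\, \mathcal{D}_2^t) = \Ex{z \sim \mathcal{D}_1^t}{\ln\!\big(\mathcal{D}_1^t(z)/\mathcal{D}_2^t(z)\big)}$. Because the product densities factorize as $\mathcal{D}_1^t(z) = \prod_{i=1}^t \mathcal{D}_1(z_i)$ and $\mathcal{D}_2^t(z) = \prod_{i=1}^t \mathcal{D}_2(z_i)$, the logarithm of the ratio splits into the sum $\sum_{i=1}^t \ln\!\big(\mathcal{D}_1(z_i)/\mathcal{D}_2(z_i)\big)$. Taking expectations under $\mathcal{D}_1^t$ and using linearity, each summand depends only on its own coordinate $z_i$, whose marginal under $\mathcal{D}_1^t$ is exactly $\mathcal{D}_1$; hence each of the $t$ terms equals $KL(\mathcal{D}_1 \,\|\, \mathcal{D}_2)$, and summing the identical terms yields $t \cdot KL(\mathcal{D}_1 \,\|\, \mathcal{D}_2)$.

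Substituting this identity into the Pinsker bound above yields the claimed inequality. There is no genuine obstacle in this argument; the only point deserving care is the justification that each summand's expectation collapses to the single-coordinate divergence, which holds precisely because the coordinate marginal of the product measure $\mathcal{D}_1^t$ is $\mathcal{D}_1$ and the summand is a function of that coordinate alone. (One may note that Lemma~\ref{lem:KL} applies verbatim to the product space, since a product distribution is itself a probability distribution and $A$ is an arbitrary event, so no special structure on $A$ is needed.)
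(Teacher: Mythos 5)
Your proposal is correct and takes essentially the same approach as the paper, which derives the corollary by applying Pinsker's inequality (Lemma~\ref{lem:KL}) to the product measures together with the additivity of KL-divergence over product distributions. The only difference is that you spell out the tensorization identity $KL(\mathcal{D}_1^t \,\|\, \mathcal{D}_2^t) = t \cdot KL(\mathcal{D}_1 \,\|\, \mathcal{D}_2)$ in full, whereas the paper cites it as a standard fact.
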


Next, for any algorithm $\mathcal{A}$, round $t$, hypothesis $h$, and distribution $\D$, let
  \[ P_t(h,\D) = \Prob\left[\text{$\mathcal{A}$ plays $h$ on round $t$}\right] \]
when given inputs from $\D$.
We say an algorithm $(\beta,t,h)$-distinguishes distributions $\D_1$ and $\D_2$ if
  \[ \left| P_t(h,\D_1) - P_t(h,\D_2) \right| > \beta . \]

\begin{lemma} \label{lem:lower-no-disting}
Let $\D_1, \D_2$ be two probability distributions. No algorithm can $(\beta,t,h)$-distinguish $\D_1$ and $\D_2$ for any $h$ and $t \leq \frac{2\beta^2}{KL(\mathcal{D}_1||\mathcal{D}_2)}$.
\end{lemma}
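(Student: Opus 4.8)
The plan is to recognize that $P_t(h,\D)$ is the probability of a fixed event in the product space $\D^t$ (after integrating out the algorithm's internal coins), and then to invoke Corollary~\ref{cor:KL} directly. First I would observe that, whatever the feedback model, the label $\hat y_t$ --- and in particular the event that $\mathcal{A}$ plays $h$ on round $t$ --- is a deterministic function of the realized examples $(x_1,y_1),\dots,(x_t,y_t)$ together with the algorithm's internal randomness $R$. This holds even under partial (apple-tasting) feedback: the transcript the algorithm actually observes is itself a function of the underlying draws and its own past actions, so unrolling the interaction shows that the round-$t$ action depends only on $(x_s,y_s)_{s\le t}$ and $R$. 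The algorithm may not observe all of $y_1,\dots,y_t$, but the event ``plays $h$ on round $t$'' is still measurable with respect to the full draw from $\D^t$, which is all we need.

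Next I would condition on the internal randomness. For each fixed realization $R=r$, let $A_r$ be the event (a measurable subset of the space of $t$ examples) that the now-deterministic algorithm $\mathcal{A}_r$ plays $h$ on round $t$. Since the examples are drawn i.i.d., Corollary~\ref{cor:KL} applied to the product distributions $\D_1^t,\D_2^t$ and the event $A_r$ yields $|\D_1^t(A_r)-\D_2^t(A_r)|\le \sqrt{\tfrac12\, t\, KL(\D_1||\D_2)}$, with the same bound holding uniformly over $r$. Writing $P_t(h,\D_i)=\E_r[\D_i^t(A_r)]$ and applying the triangle inequality together with Jensen,
\[
 |P_t(h,\D_1)-P_t(h,\D_2)| \;\le\; \E_r\big|\D_1^t(A_r)-\D_2^t(A_r)\big| \;\le\; \sqrt{\tfrac12\, t\, KL(\D_1||\D_2)}.
\]

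Finally, substituting the hypothesis $t\le \tfrac{2\beta^2}{KL(\D_1||\D_2)}$ bounds the right-hand side by $\sqrt{\tfrac12\cdot \tfrac{2\beta^2}{KL(\D_1||\D_2)}\cdot KL(\D_1||\D_2)}=\beta$, so $|P_t(h,\D_1)-P_t(h,\D_2)|\le\beta$, which is incompatible with the strict inequality $>\beta$ required for $(\beta,t,h)$-distinguishing. The only genuine obstacle is the measurability bookkeeping in the first step: one must argue cleanly that partial feedback does not break the reduction to a single event over the full product draw $\D^t$. Once the round-$t$ action is expressed as a function of $\big((x_s,y_s)_{s\le t},R\big)$, everything else is a direct application of Corollary~\ref{cor:KL} and a one-line substitution.
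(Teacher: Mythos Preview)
Your proof is correct and follows essentially the same route as the paper: both reduce the claim to a direct application of Corollary~\ref{cor:KL} on the product distributions $\D_1^t$ and $\D_2^t$, then substitute the hypothesis on $t$ to obtain the bound $\beta$. The paper's argument is a two-line contradiction that simply asserts ``this defines an event $A$'' with $|\D_1^t(A)-\D_2^t(A)|>\beta$; you do the same thing but are considerably more careful about two technicalities the paper leaves implicit---namely, that under apple-tasting feedback the round-$t$ action is still measurable with respect to the full draw from $\D^t$, and that the algorithm's internal randomness can be handled by conditioning on $R=r$ and averaging (equivalently, by folding $R$ into the product space with zero KL contribution). These are exactly the right clarifications, and nothing in your argument diverges from the paper's intended line.
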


\begin{proof}
Assume towards a contradiction that there exists an algorithm that $(\beta,t,h)$-distinguishes $\D_1$ and $\D_2$ for some $h$ and $t \leq \frac{2\beta^2}{KL(\mathcal{D}_1||\mathcal{D}_2)}$. This defines an event $A$ such that
\begin{equation*}
|\D_1^t(A) - \D_2^t(A)| > \beta \geq \sqrt{\frac{1}{2}tKL(\D_1||\D_2)}
\end{equation*}
which contradicts corollary \ref{cor:KL}.
\end{proof}

With these tools in hand, we are ready to prove the lower bound (Theorem \ref{thm:main-negative1}):

\begin{proof} [Proof of Theorem \ref{thm:main-negative1}]
Fix any $\alpha \in (0,0.5)$ and let $T \geq \sqrt[\alpha]{16}$. Denote $\gamma = T^{-\alpha}$. Fix any $\delta \leq 0.24$.

Define the following distributions over (X,A,Y):

$\mathcal{D}_1$ given by:
\begin{table}[H]
\centering
\begin{tabular}{ll|lllll}
                                                 &                & $x_1$          & $x_2$          & $x_3$   & $x_4$ \\ \hline

                        & $\Prob[(x,a)]$     & $1/8$           & $1/8$           & $1/8$   & $1/8$ \\

\multirow{-2}{*}{$A=-1$} & $\Prob[y=1|(x,a)]$ & $0.5+4\gamma$ & $0.5-4\gamma$ & $1$     & $0$ \\ \hline

                         & $\Prob[(x,a)]$     & $1/8$           & $1/8$           & $1/8$   & $1/8$ \\

\multirow{-2}{*}{$A=+1$} & $\Prob[y=1|(x,a)]$ & $0.5-4\gamma$ & $0.5+4\gamma$ & $1$     & $0$
\end{tabular}
\end{table}

$\mathcal{D}_2$ given by:
\begin{table}[H]
\centering
\begin{tabular}{ll|lllll}
                         &                    & $x_1$           & $x_2$           & $x_3$   & $x_4$ \\ \hline

                         & $\Prob[(x,a)]$     & $1/8$           & $1/8$           & $1/8$   & $1/8$ \\

\multirow{-2}{*}{$A=-1$} & $\Prob[y=1|(x,a)]$ & $0.5+4\gamma$ & $0.5-4\gamma$ & $1$     & $0$ \\ \hline

                         & $\Prob[(x,a)]$     & $1/8$           & $1/8$           & $1/8$   & $1/8$ \\

\multirow{-2}{*}{$A=+1$} & $\Prob[y=1|(x,a)]$ & $0.5+4\gamma$ & $0.5-4\gamma$ & $1$     & $0$
\end{tabular}
\end{table}

The available hypotheses $\mathcal{H} = \{\minus,\plus,h_1,h_2\}$ are defined as:
\begin{table}[H]
\centering
\begin{tabular}{cc|ccccc}
                       		&        	& $x_1$ & $x_2$ & $x_3$ & $x_4$ \\ \hline
\multirow{2}{*}{$\minus$}  	& $A=-1$ 	& $-1$  & $-1$  & $-1$  & $-1$  \\
                       		& $A=+1$  	& $-1$  & $-1$  & $-1$  & $-1$ \\ \hline
\multirow{2}{*}{$\plus$}  	& $A=-1$ 	& $+1$  & $+1$  & $+1$  & $+1$ \\
                       		& $A=+1$  	& $+1$  & $+1$  & $+1$  & $+1$ \\ \hline
\multirow{2}{*}{$h_1$} 		& $A=-1$ 	& $+1$  & $-1$  & $+1$  & $-1$  \\
                       		& $A=+1$  	& $+1$  & $-1$  & $+1$  & $-1$ \\ \hline
\multirow{2}{*}{$h_2$} 		& $A=-1$		& $+1$  & $-1$  & $+1$  & $-1$   \\
                       		& $A=+1$  	& $-1$  & $+1$  & $+1$  & $-1$
\end{tabular}
\end{table}

The performance of the hypotheses in $\mathcal{H}$ on the two distributions is given by:
On $\mathcal{D}_1$:
\begin{table}[H]
\centering
\begin{tabular}{l|l|l}
      		& $L_{\mathcal{D}}^{0-1}(h)$     & $\Delta_{FPR}(h) $     \\ \hline
$\minus$  	& $0.5$         		& $0$         \\ \hline
$\plus$   	& $0.5$           	& $0$         \\ \hline
$h_1$ 		& $0.25$          	& $4\gamma$ \\ \hline
$h_2$ 		& $0.25-2\gamma$ 	& $0$         \\
\end{tabular}
\end{table}

On $\mathcal{D}_2$:
\begin{table}[H]
\centering
\begin{tabular}{l|l|l}
      		& $L_{\mathcal{D}}^{0-1}(h)$      & $\Delta_{FPR}(h) $    \\ \hline
$\minus$  	& $0.5$           	& $0$         \\ \hline
$\plus$   	& $0.5$           	& $0$         \\ \hline
$h_1$ 		& $0.25-2\gamma$ 	& $0$         \\ \hline
$h_2$ 		& $0.25$          	& $4\gamma$ \\
\end{tabular}
\end{table}

Note that on both distributions, $h_1$ and $h_2$ both have substantially lower error than the two constant classifiers, but only one of which satisfies the $\gamma$-fairness constraint --- and which one of them it is depends on whether the underlying distribution is $\D_1$ or $\D_2$. Note also that one of them always satisfies a 0-fairness constraint, and so sets the benchmark for 0-EFP regret. The main fact driving our lower bound is that until the algorithm can reliably distinguish $\D_1$ from $\D_2$, it must  place substantial weight on the constant classifiers, incurring high regret.

We first establish that the two distributions are hard to distinguish by showing that the KL-divergence between $\mathcal{D}_1$, $\mathcal{D}_2$ is bounded by $O(\gamma^2)$:
\begin{align*}
KL(\mathcal{D}_1||\mathcal{D}_2) &= \frac{2}{8}\left(\frac{1+8\gamma}{2}\ln\left(\frac{1+8\gamma}{1-8\gamma}\right)+\frac{1-8\gamma}{2}\ln\left(\frac{1-8\gamma}{1+8\gamma}\right)\right)\\
&= \gamma \ln\left(\frac{\frac{1+8\gamma}{1-8\gamma}}{\frac{1-8\gamma}{1+8\gamma}}\right)\\
&= \gamma \ln\left(\left(\frac{1+8\gamma}{1-8\gamma}\right)^2\right)\\
&= 2\gamma \ln \left(\frac{1+8\gamma}{1-8\gamma}\right)\\
&= 2\gamma \ln \left(1+\frac{16\gamma}{1-8\gamma}\right)\\
&\leq 2\gamma\frac{16\gamma}{1-8\gamma}\\
&= \frac{64\gamma^2}{2(1-8\gamma)}\\
&\leq 64\gamma^2
\end{align*}

Let $\mathcal{A}$ be a $\gamma$-EFP($\delta$) algorithm. Let $K = \frac{0.01^2}{32\gamma^2}$ (and note that, for $\alpha \in (0,0.5)$, $K = \frac{0.01^2}{32\gamma^2} = \frac{0.01^2 T^{2\alpha}}{32} < \frac{0.01^2 T}{32} \leq T$).
  Let $t \leq K$ (note that the number of samples observed by time $t$ is $t' \leq t$); then by lemma \ref{lem:lower-no-disting},
  $$P_t(h_1,\D_2) \leq P_t(h_1,\D_1) + 0.01$$
  $$P_t(h_2,\D_1) \leq P_t(h_2,\D_2) + 0.01$$

Observe that any convex combination $\pi$ of classifiers deployed under $\D_1$ fails to satisfy the $\gamma$-EFP constraint unless it puts weight less than $1/4$ on $h_1$. Similarly,  any convex combination $\pi$ of classifiers deployed under $\D_2$ fails to satisfy the $\gamma$-EFP constraint unless it puts weight less than $1/4$ on $h_2$. Since by definition, a $\gamma$-EFP$(\delta)$ algorithm deploys only $\gamma$-EFP policies on any distribution it is deployed on except with probability $\delta$, we have that for all $t \in [T]$
  $$P_t(h_1,\D_1) \leq \frac{1}{4} + \delta$$
  $$P_t(h_2,\D_2) \leq \frac{1}{4} + \delta$$

And thus
  $$P_t(h_1,\D_2) \leq P_t(h_1,\D_1) + 0.01 = 0.25 + 0.01 + \delta = 0.26 + \delta$$
  $$P_t(h_2,\D_1) \leq P_t(h_2,\D_2) + 0.01 = 0.25 + 0.01 + \delta = 0.26 + \delta$$

Hence on either distribution, we have,
$$\Prob[\text{$\mathcal{A}$ deploys $\plus$ or $\minus$}~on~round~t] \geq 1 - (0.25 + \delta) - (0.26 + \delta) = 0.49 - 2\delta$$

  The best performing 0-EFP policy on $\D_1$ is $h_2$, while on $\D_2$ it is $h_1$. Both of these induce expected per-round loss of less than $\frac{1}{4}$. Since the expected per round loss of either $\plus$ or $\minus$ is $\frac{1}{2}$ on both distributions, if $\plus$ or $\minus$ are deployed with constant probability, the expected per-round regret incurred is a constant bounded away from zero. As a result, the expected 0-EFP regret of $\mathcal{A}$ is at least $\Omega(K) = \Omega\left(\frac{1}{\gamma^2}\right)$.

The result is that any $T^{-\alpha}$-EFP($\delta$) algorithm must have expected 0-EFP regret of $\Omega(T^{2\alpha})$.
\end{proof}

\section*{Acknowledgments}
We thank Nati Srebro for a conversation leading to the question we
study here. We thank Michael Kearns for helpful discussions at an
early stage of this work.  YB and KL were funded in part by Israel
Science Foundation (ISF) grant 1044/16, the United States Air Force
and DARPA under contract FA8750-16-C-0022, and the Federmann Cyber
Security Center in conjunction with the Israel national cyber
directorate.  AR was funded in part by NSF grant CCF-1763307 and the
United States Air Force and DARPA under contract FA8750-16-C-0022.
ZSW was funded in part by a Google Faculty Research Award, a
J.P. Morgan Faculty Award, a Mozilla research grant, and a Facebook
Research Award. Part of this work was done while KL and ZSW were
visiting the Simons Institute for the Theory of Computing, and BW was
a postdoc at the University of Pennsylvania's Warren Center and at
Microsoft Research, New York City. Any opinions, findings and
conclusions or recommendations expressed in this material are those of
the authors and do not necessarily reflect the views of JP Morgan, the
United States Air Force and DARPA.

\bibliographystyle{plain}
\bibliography{equalOpp}

\newpage
\appendix

\section{Proof of Proposition~\ref{prop:reduce-to-bandits}}

We briefly recall the contextual bandits setting below, for an arbitrary loss function:
\\~\\
\noindent\fbox{%
    \parbox{\textwidth}{%
\centerline{Online Learning in the Contextual Bandits Setting}
\begin{algorithmic}
\FOR {$t=1,...,T$}
	\STATE Learner chooses a convex combination $\pi_t \in \convH$.
	\STATE Environment draws $(x_t,y_t) \sim \mathcal{D}$ independently, learner observes $x_t$.
	\STATE Learner labels the point $\hat{y}_t = h_t(x_t)$, where $h_t \sim \pi_t$.
    \STATE Learner observes loss $\ell(\hat{y}_t, y_t) \in [0,1]$.
\ENDFOR
\end{algorithmic}
    }%
}
~\\

\begin{proof}[Proof of Proposition~\ref{prop:reduce-to-bandits}]
  Consider the following transformed loss matrix:
  $$
  \tilde{L} =
  \begin{blockarray}{ccc}
   & y = +1 & y = -1 \\
  \begin{block}{c(cc)}
    \hat y = +1 & 0 & 2 \\
    \hat y = -1  & 1 & 1  \\
  \end{block}
  \end{blockarray}
  $$
  Given an online learning with partial feedback problem, we instantiate the bandit algorithm and always play the action it recommends.
  We then provide the algorithm with its feedback $\tilde{L}_{\hat{y}^t,y^t}$.
  This is possible because if $\hat{y}^t = +1$ then we observe $y^t$, and if not then the feedback is $1$ regardless of the unobserved value of $y^t$.
  For a sequence of arrivals $S = \{(x^t,y^t)\}_{t=1}^T$, let $m(S)$ be the number of arrivals with $y^t = -1$.
  Let $L(\pi,S) = \sum_{t=1}^t L_{\hat{y}^t,y^t}$ and similarly for $\tilde{L}(\pi,S)$.
  Then we have for all $\pi,S$ that $\tilde{L}(\pi,S) = L(\pi,S) + m(S)$.
  In other words, on each round where $y^t=1$, a prediction experiences the same loss under $L$ and under $\tilde{L}$; and on each round where $y^t=-1$, the loss is exactly one larger in the bandit setting.
  This difference does not depend on the prediction of the hypothesis, therefore every policy's total loss under the bandit loss is exactly $m(S)$ larger than under the original loss.

  It follows that our algorithm's regret is exactly equal to the bandit algorithm's.
  Finally, a bookkeeping note: in order that losses be bounded in $[0,1]$, we must repeat the above argument using $0.5\tilde{L}$ in place of $\tilde{L}$, which simply scales the bandit algorithm's regret by $0.5$ relative to our algorithm's.
\end{proof}

\section{Missing Proofs for Section~\ref{sec:comp}}

\subsection{Proof of  Lemma~\ref{support_size1}}\label{sec:sparsity}
In this subsection, we establish a useful structural property for the
general problem minimizing linear loss function subject to fairness
constraints. This in turn provides a proof for
Lemma~\ref{support_size}. In particular, given a hypothesis class
$\H$, and a training set of labelled samples $S$, vectors
$a, b\in \mathbb{R}^{|\H|}$, consider the problem:
\begin{equation*}
\begin{aligned}
  & \min_{x \in \Delta(|\H|)}
    & & a^\intercal x \\
    & \text{subject to}
    & & b^\intercal x \leq \gamma \\
    & & & b^\intercal x \geq -\gamma
\end{aligned}
\end{equation*}

Note that both the problem of weighted classification or
cost-sensitive classification can be viewed as an instantiation of the
linear program defined above. The sparsity in the solution will be
useful in our analysis.
\begin{theorem}\label{support_size}
  In the linear program above, there exists an optimal solution that
  is a distribution over $\H$ with support size no greater than 2.
\end{theorem}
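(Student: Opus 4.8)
The plan is to use the standard fact that a linear program over a compact polytope attains its optimum at a vertex, together with a count of how many constraints can be simultaneously active. The feasible region here is $\Delta(|\H|) \cap \{x : |b^\intercal x| \le \gamma\}$, which is closed and bounded, so the continuous linear objective $a^\intercal x$ is minimized at some $x^*$; it then suffices to show $x^*$ can be taken to have at most two nonzero coordinates. The conceptual reason is that, apart from the coordinate constraints $x_i \ge 0$, the only active constraints available are the normalization equality $\mathbf{1}^\intercal x = 1$ and at most one of the two fairness inequalities $b^\intercal x \le \gamma$, $-b^\intercal x \le \gamma$ (both can bind only if $\gamma = 0$, in which case they coincide). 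A vertex in $\mathbb{R}^{|\H|}$ requires $|\H|$ linearly independent active constraints; with at most two of them non-coordinate, at least $|\H| - 2$ of the inequalities $x_i \ge 0$ must be active, forcing support size at most two.

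Rather than invoke vertex theory as a black box, I would prove this directly by a support-reduction induction, which is self-contained and pins down the algorithmic content. Suppose $x^*$ is optimal with support $S = \mathrm{Supp}(x^*)$ of size $k \ge 3$. I look for a nonzero direction $d$ supported on $S$ satisfying the two homogeneous linear conditions $\mathbf{1}^\intercal d = 0$ (so normalization is preserved) and $b^\intercal d = 0$ (so the value $b^\intercal x$, and hence feasibility of the fairness constraint, is preserved). These are two equations in the $k \ge 3$ unknowns $\{d_i\}_{i \in S}$, so a nonzero solution $d$ exists. Because $\mathbf{1}^\intercal d = 0$ and $d \neq 0$, the vector $d$ has both a strictly positive and a strictly negative entry, so there are finite thresholds $t_- < 0 < t_+$ at which $x^* + t d$ first reaches the boundary of the nonnegative orthant; for every $t \in [t_-, t_+]$ the point $x^* + t d$ lies in $\Delta(|\H|)$ and satisfies $|b^\intercal(x^* + t d)| = |b^\intercal x^*| \le \gamma$, hence is feasible.

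It remains to use optimality to move along this feasible segment without increasing the objective. Since the objective $a^\intercal(x^* + t d) = a^\intercal x^* + t\,(a^\intercal d)$ is linear in $t$ and the whole segment $[t_-, t_+]$ (which contains $0$ in its interior) is feasible, optimality of $x^*$ forces $a^\intercal d = 0$: otherwise moving in the sign of $t$ opposite to $a^\intercal d$ would strictly decrease $a^\intercal x$, a contradiction. Consequently $x^* + t_+ d$ has the same (optimal) objective value, is feasible, and has strictly smaller support, since at $t = t_+$ some coordinate of $S$ vanishes while no coordinate outside $S$ becomes nonzero. Iterating drives the support down to at most $2$. The one point requiring care — and the step I would treat as the main obstacle — is the bookkeeping around the fairness constraint: enforcing $b^\intercal d = 0$ (rather than a one-sided condition) is what guarantees the entire segment stays feasible regardless of whether the fairness constraint is slack or tight at $x^*$, and it is precisely the availability of only this single effective fairness equation, alongside normalization, that leaves enough freedom ($k \ge 3$ unknowns, two equations) to find the reducing direction.
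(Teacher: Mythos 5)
Your proof is correct, and it takes a genuinely different route from the paper's. The paper's proof projects each hypothesis $h$ to the two-dimensional point $\phi(h)=(a_h,b_h)$, rewrites the program as minimizing $z_1$ over $\mathrm{conv}(\{\phi(h)\})$ intersected with the slab $|z_2|\le\gamma$, and observes that an optimal point can be taken on an edge of that polygon, hence is a convex combination of at most two vertices, i.e.\ of at most two hypotheses. You instead work in the original $|\H|$-dimensional space and run a support-reduction argument: given an optimum with support $k\ge 3$, the two homogeneous conditions $\mathbf{1}^\intercal d=0$ and $b^\intercal d=0$ leave a nonzero direction $d$ supported on the current support; feasibility along the whole segment (your choice of $b^\intercal d=0$ correctly handles both the slack and tight cases of the fairness constraint), linearity, and optimality force $a^\intercal d=0$, so sliding to the boundary kills a coordinate at no cost, and induction finishes. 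The trade-off: the paper's embedding is shorter and makes the geometry transparent (though its key step---that the optimum lies on an edge of $\mathrm{conv}(A)$ rather than merely on the boundary of the intersected region---is asserted rather than argued), while your argument is self-contained and explicitly constructive. In fact, your reduction step is essentially the algorithmic support-shrinking (computation of a basic solution) that the paper later invokes in Step 3 of its reduction (Theorem~\ref{reduction}) via an LP/ellipsoid subroutine, so your proof doubles as a justification of that procedure; it also generalizes immediately to $k$ linear constraints, giving support size $k+1$ by the same dimension count.
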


\begin{proof}
  Consider the following embedding of $\H$ in $\mathbb{R}^2$:
  $\forall h\in\H: \phi(h) = (a_h, b_h)$. Let
  $A = \{{\phi(h)}\mid \pi \in \H\}$.  Then the optimization problem
  can be written as the following problem over the convex hull
  $\mathrm{conv}(A)$:
\begin{equation*}
\begin{aligned}
& \underset{(z_1,z_2) \in \mathrm{conv}(A)}{\text{minimize}}
& & z_1 \\
& \text{subject to}
& & z_2 \leq \gamma \\
& & & z_2 \geq -\gamma
\end{aligned}
\end{equation*}

Note there exists an optimal solution $z^*$ that lies on an edge of
the polytope defined by $\mathrm{conv}(A)$. This means $z^*$ is either
a vertex of $\mathrm{conv}(A)$ or can be written as a convex
combination of two vertices of $\mathrm{conv}(A)$, say $z'$ and
$z''$. In the former case, $z^*$ can be induced by a single hypothesis
$h^*\in \H$, and in the latter case we know there exist
$h', h''\in \H$ such that $z' = \phi(h')$ and $z'' = \phi(h'')$. This
means the optimal solution $z^*$ can be induced by a convex
combination of hypotheses.
\end{proof}

Then the result of Lemma~\ref{support_size} follows immediately.

\subsection{Proof of Theorem~\ref{reduction}}

As mentioned, a standard concentration inequality immediately implies:
\begin{lemma} \label{lemma:oracle-efp}
  With probability $1 - \delta$, as long as $T_0 \geq c \sqrt{T\ln(|\H|/\delta)}$ for some universal constant $c > 0$, we have the following.
 First, every policy in $\Pi$ satisfies
  $\gamma + 2\beta$-EFP, and second, every support-$2$ $\gamma$-EFP policy is in $\Pi$, for $\beta = O\left(\sqrt{\ln(|\H|/\delta)}/T^{1/4}\right)$.
\end{lemma}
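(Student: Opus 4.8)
The plan is to reduce both assertions to a single uniform-convergence statement: with probability $1-\delta$, for every convex combination $\pi \in \Delta(\H)$ (in particular every support-$2$ policy),
\[
  \bigl|\Delta_{FPR}(\pi,\D) - \Delta_{FPR}(\pi,\cD_E)\bigr| \leq \beta .
\]
Granting this, both parts follow immediately from the triangle inequality. For the first part, if $\pi \in \Pi$ then by definition $|\Delta_{FPR}(\pi,\cD_E)| \leq \gamma + \beta$, so $|\Delta_{FPR}(\pi,\D)| \leq \gamma + 2\beta$, i.e.\ $\pi$ is $(\gamma+2\beta)$-EFP. For the second part, if $\pi$ has support size at most $2$ and is $\gamma$-EFP on $\D$, i.e.\ $|\Delta_{FPR}(\pi,\D)| \leq \gamma$, then $|\Delta_{FPR}(\pi,\cD_E)| \leq \gamma + \beta$, so $\pi \in \Pi$.

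To prove the uniform bound I would exploit that $FPR_j(\cdot,\D)$ and $FPR_j(\cdot,\cD_E)$ are both \emph{linear} in $\pi$ (each is an expectation over $h\sim\pi$ of a fixed per-hypothesis quantity), hence so is $\Delta_{FPR}$. Consequently it suffices to control the deviation for single hypotheses and then average: for any $\pi$,
\[
  \bigl|\Delta_{FPR}(\pi,\D) - \Delta_{FPR}(\pi,\cD_E)\bigr|
  \leq \Ex{h\sim\pi}{\bigl|\Delta_{FPR}(h,\D) - \Delta_{FPR}(h,\cD_E)\bigr|}
  \leq \max_{h\in\H}\sum_{j\in\{\pm1\}}\bigl|FPR_j(h,\D) - FPR_j(h,\cD_E)\bigr| .
\]
So the lemma comes down to showing $|FPR_j(h,\D) - FPR_j(h,\cD_E)| \leq \beta/2$ for every $h\in\H$ and $j\in\{\pm1\}$. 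Here I would use that during the exploration phase the learner always predicts $+1$ and therefore observes every label, so $\cD_E$ is an i.i.d.\ sample of fully labeled examples and each empirical FPR is an unbiased estimate of its population counterpart.

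The hard part is that $FPR_j$ is a \emph{conditional} rate, computed only over the random subset of exploration examples with $a=j,\,y=-1$, so the number of relevant samples $n_j$ is itself random. I would handle this in two stages. First, condition on the cell counts: given that $n_j$ examples land in the cell $\{a=j,\,y=-1\}$, those examples are i.i.d.\ draws from $\D(\cdot\mid a=j,y=-1)$ and $\Ind{h(x)=+1}$ is a fixed $\{0,1\}$ statistic, so Hoeffding plus a union bound over the $2|\H|$ pairs $(h,j)$ gives $|FPR_j(h,\D)-FPR_j(h,\cD_E)|=O(\sqrt{\ln(|\H|/\delta)/n_j})$ simultaneously, with probability $1-\delta/2$. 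Second, I would lower-bound $n_j$ using the first mild assumption that $\Pr[a=j,y=-1]=\Omega(1)$: since $\E[n_j]=\Omega(T_0)$ and $T_0=\Theta(\sqrt{T\ln(|\H|/\delta)})$ is large, a multiplicative Chernoff bound gives $n_j=\Omega(T_0)=\Omega(\sqrt{T\ln(|\H|/\delta)})$ for both groups with probability $1-\delta/2$. Combining, the per-FPR deviation is $O\!\left((\ln(|\H|/\delta))^{1/4}/T^{1/4}\right)$, which is at most $\beta/2$ for $\beta=O(\sqrt{\ln(|\H|/\delta)}/T^{1/4})$ and a suitable choice of the universal constant $c$ in $T_0\ge c\sqrt{T\ln(|\H|/\delta)}$. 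A union bound over the two failure events then yields the claim with probability $1-\delta$.
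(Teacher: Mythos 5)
Your proposal is correct and follows essentially the same route the paper intends: the paper gives no explicit proof, asserting only that "a standard concentration inequality immediately implies" the lemma, and your argument — linearity of $\Delta_{FPR}$ in $\pi$ to reduce to per-hypothesis deviations, Hoeffding conditioned on the cell counts with a union bound over $(h,j)$ pairs, and a Chernoff lower bound on the counts $n_j$ via the paper's assumption that $\Pr[a=j, y=-1] \in \Omega(1)$ — is precisely the standard argument being invoked, with the arithmetic ($n_j = \Omega(\sqrt{T\ln(|\H|/\delta)})$ giving deviation $O((\ln(|\H|/\delta))^{1/4}/T^{1/4}) \leq \beta/2$) worked out correctly.
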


Recall that we collect a set of $T_0$ labeled examples
$\{z_i = (x_i, a_i, y_i)\}_{i=1}^{T_0}$ during the initial exploration
phase, and let $\cD_E$ denote the corresponding empirical
distribution. Recall that $\H$ is a hypothesis class defined over both
the features and the protected group memberships. We assume that $\H$
contains a constant classifier (which implies that there is at least
one fair classifier to be found, for any distribution). To simplify
notation, we consider hypotheses that labels each example with either
0 or 1.

Suppose that we are given a cost-sensitive classification
instance $(X_j, C_j^{1}, C_j^{0} )$. We would like to compute a
distribution over classifiers from $\cH$ that minimizes total cost
subject to the false positive rate fairness constraint. In particular,
consider the following {\em fair cost-sensitive classification (CSC)}
problem:
\begin{align}
  &  \min_{\pi \in \Delta(\cH)}\; \Ex{h\sim \pi}{ \sum_{j=1}^n (C_j^{1} h(X_j) + C_j^0 (1 - h(X_j)))}\\
  \mbox{such that } \forall j\in \{\pm 1\} \qquad &
                                                    \fpdisp_j(\pi) - \fpdisp_{-j}(\pi) \leq \gamma.\label{love}
\end{align}
$\fpdisp_j(\pi) = \Ex{h\sim \pi}{\fpdisp_j(h)}$. We will write
$\OPT_C$ to denote the objective value at the optimum for the problem,
that is the minimum cost achieved by a $\gamma$-EFP policy over
distribution $\cD_E$.

Equivalently, we can consider optimizing the following objective
function:
\[
  \min_{\pi \in \Delta(\cH)} \; \Ex{h\sim \pi}{ \sum_{j=1}^n
    W_i\,\mathbf{1}\{h(X_j) \neq Y_j \}}
\]
where each $W_j = |C_j^0 - C_j^1|$, $Y_j = 1$ if $C_j^0 > C_j^1$ and
$Y_j=0$ otherwise. To reduce the problem further to the same
formulation of~\cite{MSR}, we consider objective with normalized
weights
\[
  \min_{\pi \in \Delta(\cH)} \; \Ex{h\sim \pi}{ \sum_{j=1}^n
    w_j\,\mathbf{1}\{h(X_j) \neq Y_j \}}
\]
such that each $w_j = W_j / (\sum_j W_j)$. To simplify notation, we
will write
$err(h, \cP) = \sum_{j=1}^n w_j\,\mathbf{1}\{h(X_j) \neq Y_j \}$, and
$\OPT$ to denote optimal objective subject to $\gamma$-EFP.

For each of the fairness constraint in ~\eqref{love}, we will
introduce a dual variable $\lambda_j\geq 0$. This allows us to define
the partial Lagrangian of the problem:
\[
  \cL(\pi, \lambda) = \Ex{h\sim \pi}{err(h, \cP)} + \sum_{j\in \{\pm 1\}}
  \lambda_j \left(\fpdisp_j(\pi) - \fpdisp_{-j}(\pi) - \gamma\right)
\]
By strong duality, we have
\[
  \OPT = \min_{g\in \Delta(\cH)} \max_{\lambda\in \mathbb{R}_+^2}
  \cL(g, \lambda) = \max_{g\in \Delta(\cH)} \min_{\lambda\in
    \mathbb{R}_+^2} \cL(g, \lambda).
\]
where $\OPT$ is the optimal objective value of the ERM problem.

\cite{MSR} provide an oracle-efficient algorithm for finding a
$\nu$-\emph{approximate saddle point} $(\hat g, \hat \lambda)$ of the Lagrangian:
\begin{align*}
\cL(\hat \pi, \hat \lambda) &\leq \cL(g, \hat \lambda) + \nu \qquad \mbox{for all } g\in\Delta(\cH)\\
\cL(\hat \pi, \hat \lambda) &\geq \cL(\hat \pi, \lambda) - \nu \qquad \mbox{for all } \lambda\in \Lambda
\end{align*}
In their result, the algorithm restricts the dual space to be
$\Lambda = \{\|\lambda\|_1\leq B \mid\lambda \in \mathbb{R}^2_+\}$ for
some sufficiently large constant $B$. Their convergence rate and
approximation parameter both depend on such $C$. We show that under
the assumption that $\cH$ constains the two classifiers
$\mathbf{1}[a = j]$ for all $j\in\{\pm 1\}$, it is sufficient to set
$C = 2$, and thus restrict the dual space to be
\[
  \Lambda = \{\|\lambda\|_1\leq 2\mid\lambda \in \mathbb{R}^2_+\}
\]
Consequently, we can use their algorithm to find a $\nu$-approximate
saddle point with only $O\left(\frac{1}{\nu^2}\right)$ number of calls
to the oracle CSC$(\cH)$.

\begin{lemma}[Follows from Theorem 1 of~\cite{MSR}]
  There is an oracle-efficient algorithm that computes a
  $\nu$-approximate saddle point for the restricted Lagrangian with
  $\Lambda \{\|\lambda\|_1\leq 2\mid\lambda \in \mathbb{R}^2_+\}$,
  using $O\left(1/\nu^2 \right)$ calls to a CSC oracle over $\cH$.
\end{lemma}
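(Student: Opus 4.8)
The plan is to derive the lemma directly from Theorem~1 of \cite{MSR}: their algorithm produces a $\nu$-approximate saddle point of the restricted Lagrangian using a number of CSC-oracle calls that is polynomial in the dual radius $B$ and scales as $1/\nu^2$ in the accuracy. Hence, once I verify that taking $B=2$ loses nothing --- that is, that the unrestricted saddle point already lives inside $\Lambda = \{\lambda \in \mathbb{R}^2_+ : \|\lambda\|_1 \le 2\}$ --- the $O(1/\nu^2)$ oracle-call bound follows immediately for this constant radius. So the entire content of the lemma reduces to bounding the optimal dual multiplier $\lambda^*$ by a universal constant \emph{independent of $\gamma$}.

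To bound $\lambda^*$, I would first collapse the two dual coordinates into the single parameter $\mu = \lambda_1 - \lambda_{-1}$: the constraint terms of $\cL$ equal $\mu\,\Delta_{FPR}(\pi) - \gamma(\lambda_1+\lambda_{-1})$, so the dual function is $g(\lambda) = V(\mu) - \gamma(\lambda_1+\lambda_{-1})$ where $V(\mu) := \min_{\pi \in \Delta(\cH)}\big[\,\Ex{h\sim\pi}{err(h,\cP)} + \mu\,\Delta_{FPR}(\pi)\,\big]$. For any fixed $\mu$, the $-\gamma(\lambda_1+\lambda_{-1})$ term makes the max-player strictly prefer the smallest feasible value $\lambda_1+\lambda_{-1} = |\mu|$, so without loss of generality $\|\lambda^*\|_1 = |\mu^*|$ and it suffices to bound $|\mu^*|$. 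The key step --- and the only place the assumption $\mathbf{1}[a=j] \in \cH$ is used --- is the estimate $V(\mu) \le 1 - |\mu|$: for $\mu \ge 0$ the min-player can play $\mathbf{1}[a=-1]$, which realizes $\Delta_{FPR} = -1$ and has normalized error at most $1$, giving $V(\mu) \le 1 - \mu$; symmetrically $\mathbf{1}[a=1]$ (with $\Delta_{FPR} = +1$) handles $\mu \le 0$. Since strong duality gives $\OPT = \max_\mu[V(\mu) - \gamma|\mu|] \ge V(0) = \min_\pi \Ex{h\sim\pi}{err(h,\cP)} \ge 0$, evaluating at the optimizer yields $0 \le \OPT \le 1 - (1+\gamma)|\mu^*|$, whence $\|\lambda^*\|_1 = |\mu^*| \le 1/(1+\gamma) < 2$.

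The main obstacle is obtaining a bound on $\|\lambda^*\|_1$ that does not blow up as $\gamma \to 0$. A generic Slater-type argument --- lower-bounding the dual via a strictly feasible $\pi_0$ with constraint margin $\rho$ --- only gives $\|\lambda^*\|_1 \le \Ex{h\sim\pi_0}{err(h,\cP)}/\rho$, and the best margin for the perfectly-fair mixture is $\rho = \gamma$, producing the useless bound $1/\gamma$. The fix is precisely to exploit that $\mathbf{1}[a=\pm1]$ let the min-player realize the two \emph{extreme} constraint values $\Delta_{FPR} = \pm 1$ at error cost at most $1$, which caps the marginal payoff of increasing $\lambda$ and delivers the $\gamma$-independent bound above. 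With $\|\lambda^*\|_1 < 2$ in hand, restricting the dual domain to $\Lambda$ preserves the saddle-point value, so invoking Theorem~1 of \cite{MSR} with $B=2$ computes a $\nu$-approximate saddle point of the restricted Lagrangian in $O(1/\nu^2)$ CSC-oracle calls, as claimed.
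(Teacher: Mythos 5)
Your proposal is correct, but it justifies the lemma along a genuinely different route than the paper does. The paper treats this lemma as a purely computational statement about the \emph{restricted} Lagrangian: Theorem~1 of \cite{MSR} computes a $\nu$-approximate saddle point over any dual ball $\{\|\lambda\|_1\le B,\ \lambda\in\mathbb{R}^2_+\}$ with oracle complexity scaling as $B^2/\nu^2$, so fixing the constant $B=2$ immediately gives $O(1/\nu^2)$ calls --- no property of the optimal dual multiplier is invoked at this stage. The question you spend your proof on (whether radius $2$ ``loses nothing'') is answered by the paper only in the \emph{subsequent} lemmas, and by a different mechanism: there it is shown directly that any $\nu$-approximate saddle point of the restricted Lagrangian has error at most $\OPT+2\nu$ and constraint violation at most $2\nu$, via a perturbation argument that plays the dual vector $\lambda'$ with $\lambda'_1=2$ and mixes $\hat\pi$ with $\mathbf{1}[a=-1]$ to restore feasibility. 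Your alternative --- using the extreme classifiers to get $V(\mu)\le 1-|\mu|$ and strong duality to conclude $\|\lambda^*\|_1\le 1/(1+\gamma)<2$ --- is sound (modulo the shared technicality that $\cD_E$ must contain negative examples of both groups so that $\Delta_{FPR}(\mathbf{1}[a=\pm 1])=\pm 1$ is well defined, which the paper's $\Omega(1)$ mass assumption guarantees), and it is conceptually cleaner in that it shows the restriction is lossless with a $\gamma$-independent margin; notably both arguments use the assumption $\plusa,\minusa\in\cH$ in an essential way. What your route does not yet deliver is the quantitative guarantee the paper actually needs downstream --- bounds on the error and constraint violation of an \emph{approximate} restricted saddle point --- though your margin $2-1/(1+\gamma)\ge 1$ is exactly the slack such an argument would exploit, so your approach could be completed into a full substitute for the paper's.
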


Moreover, the approximate saddle point provides an approximate
solution for our problem.

\begin{lemma}
  Suppose that the class $\cH$ contains the two classifiers
  $\mathbf{1}[a = j]$ for all $j$ and that $(\hat \pi, \hat \lambda)$
  is a $\nu$-approximate saddle point of the Lagrangian. Then the
  distribution $\hat \pi$ satisfies
  \[
    err(\hat \pi, \cP) \leq \OPT +2 \nu, \qquad \mbox{and } \qquad
    \forall j\in \{\pm 1\} \qquad \fpdisp_j(\hat g) -
    \fpdisp_{-j}(\hat g) \leq \gamma + 2\nu.
  \]
\end{lemma}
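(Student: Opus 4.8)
The plan is to turn the two defining inequalities of a $\nu$-approximate saddle point into the two desired conclusions by substituting well-chosen primal points $g$ into the first inequality and dual points $\lambda$ into the second. Throughout, write $c_j(\pi) = \fpdisp_j(\pi) - \fpdisp_{-j}(\pi) - \gamma$ for the slack in the $j$-th constraint, so that $\cL(\pi,\lambda) = \Ex{h\sim\pi}{err(h,\cP)} + \lambda_1 c_1(\pi) + \lambda_{-1}c_{-1}(\pi)$, and record the structural identity $c_1(\pi) + c_{-1}(\pi) = -2\gamma \le 0$, which shows at most one of the two constraints can ever be violated.

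For the optimality claim I would sandwich $\cL(\hat\pi,\hat\lambda)$. Let $g^\star$ be the $\gamma$-EFP optimal distribution, so $\Ex{h\sim g^\star}{err(h,\cP)}=\OPT$ and $c_j(g^\star)\le 0$ for both $j$. The first saddle inequality with $g=g^\star$ gives $\cL(\hat\pi,\hat\lambda)\le \cL(g^\star,\hat\lambda)+\nu \le \OPT+\nu$, where the last step uses $c_j(g^\star)\le 0$ together with $\hat\lambda\ge 0$. The second saddle inequality with $\lambda=0\in\Lambda$ gives $\cL(\hat\pi,\hat\lambda)\ge \cL(\hat\pi,0)-\nu = \Ex{h\sim\hat\pi}{err(h,\cP)}-\nu$. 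Chaining these yields $\Ex{h\sim\hat\pi}{err(h,\cP)}\le \OPT+2\nu$, the first claim.

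For feasibility, suppose some constraint is violated, say $c_1(\hat\pi)>0$ (the case $c_{-1}(\hat\pi)>0$ is symmetric), and I must show $c_1(\hat\pi)\le 2\nu$. Applying the second saddle inequality with the dual point $\lambda=(2,0)$ (all mass on the violated coordinate) produces both a lower bound $\cL(\hat\pi,\hat\lambda)\ge \Ex{h\sim\hat\pi}{err(h,\cP)}+2c_1(\hat\pi)-\nu$ and, after substituting $c_{-1}=-c_1-2\gamma$ and using $\hat\lambda_1+\hat\lambda_{-1}\le 2$, the companion facts $\hat\lambda_{-1}\le \nu/(2c_1(\hat\pi))$ and $\hat\lambda_1\ge 2-\nu/c_1(\hat\pi)$ --- intuitively, an approximate saddle must drive the dual essentially to the corner $(2,0)$. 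This is exactly where the hypothesis $\mathbf{1}[a=j]\in\cH$ enters: I apply the first saddle inequality with the primal point $g=\mathbf{1}[a=-1]$, the classifier realizing the most negative achievable slack $c_1(\mathbf{1}[a=-1])=-1-\gamma$, giving $\cL(\hat\pi,\hat\lambda)\le err(\mathbf{1}[a=-1]) + \hat\lambda_1(-1-\gamma)+\hat\lambda_{-1}(1-\gamma)+\nu$, whose term $\hat\lambda_1(-1-\gamma)\approx -2$ is strongly negative. Combining the lower and upper bounds, and inserting $\Ex{h\sim\hat\pi}{err}\ge 0$, $err(\mathbf{1}[a=-1])\le 1$, and the derived bounds on $\hat\lambda_1,\hat\lambda_{-1}$, collapses to a quadratic inequality in $c_1(\hat\pi)$ of the shape $2c_1^2 + (1+2\gamma-2\nu)c_1 - \tfrac{\nu(3+\gamma)}{2}\le 0$, whose positive root is at most $\tfrac{\nu(3+\gamma)}{2(1+2\gamma-2\nu)}\le 2\nu$.

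The delicate step is this feasibility bound. The textbook best-response argument --- comparing $\max_{\lambda\in\Lambda}\cL(\hat\pi,\lambda)$ against $\OPT+\nu$ --- only yields a violation of order $1/2$ (the reciprocal of the dual-norm bound), which is far weaker than the claimed $2\nu$. Obtaining the sharp $O(\nu)$ rate forces the combination above: one must use the \emph{extreme} classifier $\mathbf{1}[a=-1]$ on the primal side (a constant classifier only reaches slack $-\gamma$ and is too weak), so that $\hat\lambda_1(-1-\gamma)$ is of order $-2$, while simultaneously controlling the off-coordinate multiplier $\hat\lambda_{-1}$ from the dual best response. The remainder is routine constant-tracking through the quadratic; one should verify that the regime $0<\nu<\gamma/2$ assumed in Theorem~\ref{reduction} keeps $1+2\gamma-2\nu>0$, so that the bound $c_1(\hat\pi)\le 2\nu$ holds uniformly in $\gamma$.
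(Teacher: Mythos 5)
Your proof is correct. The optimality half is essentially the paper's own argument: both you and the paper chain the two saddle-point inequalities, comparing against $\lambda=\mathbf{0}$ on the dual side and the optimal feasible policy on the primal side, to obtain $err(\hat\pi,\cP)\le\OPT+2\nu$. The feasibility half, however, takes a genuinely different route. The paper also starts from the dual corner $\lambda'=(2,0)$, which gives $err(\hat\pi,\cP)\le \OPT+2\nu-2\alpha$ for violation $\alpha$, but it then uses $\mathbf{1}[a=-1]$ quite differently: it forms the explicit mixture $\pi'=(1-\alpha)\hat\pi+\alpha\,\mathbf{1}[a=-1]$, verifies that $\pi'$ is $\gamma$-EFP feasible (the mixing exactly cancels the violation), and concludes $err(\hat\pi,\cP)\ge err(\pi',\cP)-\alpha\ge \OPT-\alpha$; comparing this lower bound with the upper bound cancels $\OPT$ and yields $\alpha\le 2\nu$ by an entirely linear argument. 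You instead never reference $\OPT$ in the feasibility step: you plug $\mathbf{1}[a=-1]$ directly into the first saddle inequality as a primal comparator, localize the dual to the corner via $\hat\lambda_{-1}\le \nu/(2c_1)$ and $\hat\lambda_1\ge 2-\nu/c_1$ (both of which do follow from the $(2,0)$ comparison together with $\|\hat\lambda\|_1\le 2$ and $c_1+c_{-1}=-2\gamma$), and close with a quadratic inequality in the violation $c_1(\hat\pi)$; your constants check out (keeping the factor $1-\gamma$ gives exactly $\nu(3+\gamma)/2$), and $\nu<\gamma/2$ indeed yields $c_1(\hat\pi)\le\tfrac{3}{2}\nu\le 2\nu$. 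Both proofs turn on the same two ingredients --- the corner dual $(2,0)$ and the extreme classifier $\mathbf{1}[a=-1]\in\cH$ --- and both correctly diagnose why the generic saddle-point argument only bounds the violation by roughly the reciprocal of the dual norm bound, i.e.\ about $1/2$. What the paper's mixture trick buys is brevity: no dual bookkeeping and no quadratic, just feasibility of $\pi'$ plus a one-line cancellation. What your version buys is a feasibility bound that is self-contained (it never invokes the optimal feasible policy) and that makes explicit the dual localization $\hat\lambda\approx(2,0)$, which is the structural reason the sharp $O(\nu)$ rate is attainable.
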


\begin{proof}
  Let $\pi^*$ be the optimal feasible solution for the fair ERM problem.
  First, by the definition of approximate saddle point, we know that
\begin{align*}
  err(\hat \pi, \cP)& = \cL(\hat \pi, \mathbf{0})\\
& \leq \max_{\lambda \in
    \Lambda} \cL(\hat \pi, \lambda) \\
& \leq \cL(\hat \pi, \hat \lambda) +
  \nu\\
& \leq \min_{\pi\in \Delta(\cH)} \cL(\pi, \hat \lambda) + 2\nu \\
& \leq \cL(\pi^* , \hat \lambda) + 2\nu = \OPT + 2\nu
\end{align*}
where the equality follows from the fact that
$\cL(\pi^*, \hat \lambda) = \OPT$.

Next, we will bound the fairness constraint violations.  Suppose
without loss of generality that the following fairness constraint is
violated: $\fpdisp_1(\hat \pi) - \fpdisp_{-1}(\hat \pi)= \gamma + \alpha$
for some $\alpha \geq 0$. Let $\lambda'\in \Lambda$ such that
$\lambda'_1 = 2$. Then
\[
  \cL(\hat \pi, \hat \lambda) + \nu \geq \cL(\hat \pi, \lambda') =
  err(\hat \pi, \cP) + 2\alpha
\]
\noindent Thus, by the assumption of approximate saddle point,
\[
  err(\hat \pi, \cP) \leq \cL(\hat \pi, \hat \lambda) + \nu - 2\alpha \leq
  \cL(\pi^*, \hat \lambda) + 2\nu - \alpha = \OPT + 2\nu - 2\alpha.
\]

Now consider a distribution $\pi'$ that is defined as the mixture of
\[
\pi' = (1 - \alpha) \hat \pi + \alpha \mathbf{1}[a = -1].
\]
This means
\begin{align*}
  \fpdisp_1(\pi') &= (1 - \alpha) \fpdisp_1(\hat \pi) + \alpha \fpdisp_1(\mathbf{1}[a = -1]) = (1 - \alpha) \fpdisp_1(\hat \pi) \\
  \fpdisp_{-1}(\pi') &= (1 - \alpha) \fpdisp_{-1}(\hat \pi) + \alpha \fpdisp_{-1}(\mathbf{1}[a = -1]) = (1 - \alpha) \fpdisp_{-1}(\hat \pi) + \alpha
\end{align*}
It follows that
\[
  \fpdisp_1(\pi') - \fpdisp_{-1}(\pi') = (1 - \alpha) (\gamma + \alpha) - \alpha \leq \gamma
\]
which implies that $\pi'$ is a feasible solution for the fair ERM
problem. This implies that
\[
err(\hat \pi, \cP) \geq err(\pi', \cP) - \alpha \geq \OPT - \alpha
\]

Thus, we have $\OPT + 2\nu - 2\alpha \geq \OPT - \alpha$, which
implies that $\alpha \leq 2\nu$. This completes the proof.
\end{proof}

To facilitate our analysis, we would like a solution $\hat \pi$ that
satisfies the fairness cosntraint without any violation. To achieve
this, we simply tighten the constraint by an amount of $2\nu$ and
compute the $\nu$-approximate saddle point for the tightened
Lagrangian, replacing $\gamma$ with $\gamma'= \gamma - 2\nu$. We also
need ensure such tightening of the constraint does not severely
increase the resulting error.

\begin{lemma}[Bound on additional error from tightening.]
  Suppose that $\gamma > 2\nu$.  Let $\OPT'$ be the objective value at
  the optimum for the tighted optimization problem:
\begin{align*}
  &\min_{\pi \in \Delta(\cH)} \; \Ex{h\sim \pi}{ \sum_{j=1}^n
    w_j\,\mathbf{1}\{h(X_j) \neq Y_j \}}\\
  \mbox{such that } \forall j\in \{\pm 1\} \qquad &
                                                    \fpdisp_j(\pi) - \fpdisp_{-j}(\pi) \leq \gamma - 2\nu
\end{align*}
Then as long as that the class $\cH$ contains the two classifiers
$\mathbf{1}[a = j]$ for both $j\in\{\pm 1\}$,
$\OPT' - \OPT \leq 2\nu$.
\end{lemma}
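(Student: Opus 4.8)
The plan is to take the optimal solution $\pi^*$ of the untightened fair ERM problem (so $err(\pi^*,\cP) = \OPT$ and $|\fpdisp_1(\pi^*) - \fpdisp_{-1}(\pi^*)| \le \gamma$) and perturb it into a feasible solution $\pi'$ for the tightened problem whose error exceeds $\OPT$ by at most $2\nu$. The perturbation will mix in a small amount of one of the group-indicator classifiers $\mathbf{1}[a=j]$. The key point is that these two classifiers are maximally unfair in opposite directions: writing $d(\pi) = \fpdisp_1(\pi) - \fpdisp_{-1}(\pi)$, one computes $\fpdisp_1(\mathbf{1}[a=-1]) = 0$ and $\fpdisp_{-1}(\mathbf{1}[a=-1]) = 1$, so $d(\mathbf{1}[a=-1]) = -1$, and symmetrically $d(\mathbf{1}[a=1]) = +1$. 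Hence mixing a $\rho$-fraction of $\mathbf{1}[a=-1]$ into $\pi^*$ drives $d$ toward $-1$, and because the lever arm spans a full unit, a mixing weight of order $\nu$ will suffice to shrink $|d|$ by $2\nu$.

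Concretely, I would assume without loss of generality that $d(\pi^*) \ge 0$ (the case $d(\pi^*) < 0$ is symmetric, mixing in $\mathbf{1}[a=1]$ instead). If $d(\pi^*) \le \gamma - 2\nu$, then $\pi^*$ is already feasible for the tightened problem and $\OPT' \le \OPT$, so it remains to treat $d(\pi^*) \in (\gamma-2\nu,\gamma]$. I would set $\pi' = (1-\rho)\pi^* + \rho\,\mathbf{1}[a=-1]$; by linearity of each $\fpdisp_j$ in the mixture, $d(\pi') = (1-\rho)d(\pi^*) - \rho$. Choosing $\rho$ so that $d(\pi') = \gamma - 2\nu$ gives $\rho = \frac{d(\pi^*) - (\gamma - 2\nu)}{d(\pi^*) + 1}$; since the numerator is at most $2\nu$ (using $d(\pi^*) \le \gamma$) and the denominator is at least $1$, this yields $\rho \le 2\nu$.

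It then remains to verify feasibility and the error bound. For feasibility, by construction $d(\pi') = \gamma - 2\nu$, and since $\gamma > 2\nu$ this value lies in $[0,\gamma-2\nu]$, so both tightened constraints $\pm(\fpdisp_1(\pi') - \fpdisp_{-1}(\pi')) \le \gamma - 2\nu$ hold. For the error, $err$ is linear in the mixture and every classifier satisfies $err \in [0,1]$ (the weights $w_j$ are normalized), so $err(\pi',\cP) - \OPT = \rho\bigl(err(\mathbf{1}[a=-1],\cP) - \OPT\bigr) \le \rho \le 2\nu$. Since $\pi'$ is feasible for the tightened problem, $\OPT' \le err(\pi',\cP) \le \OPT + 2\nu$, which is the claim.

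The argument is almost entirely linearity and boundedness; the only genuine content is the choice of which group-indicator to mix in and the observation that the denominator $d(\pi^*)+1 \ge 1$ keeps the required mixing weight $\rho$ at most $2\nu$. The main thing I would be careful about is not overshooting the opposite fairness constraint while perturbing, which is exactly where the hypothesis $\gamma > 2\nu$ enters: it guarantees that the tightened threshold $\gamma - 2\nu$ is nonnegative, so stopping at $d(\pi') = \gamma - 2\nu \ge 0$ automatically satisfies the lower constraint as well.
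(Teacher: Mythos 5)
Your proof is correct and takes essentially the same route as the paper: perturb the un-tightened optimum $\pi^*$ by mixing in a small amount of a group-indicator classifier $\mathbf{1}[a=j]$, then use linearity of $\fpdisp_1 - \fpdisp_{-1}$ and boundedness of the normalized error to conclude $\OPT' \leq \OPT + 2\nu$. If anything, your execution is more careful than the paper's: the paper mixes a fixed weight $2\nu$ into $\pi^*$ unconditionally, which can overshoot the lower tightened constraint when $2\nu < \gamma < 4\nu$ and $\fpdisp_1(\pi^*)-\fpdisp_{-1}(\pi^*)$ is near zero, whereas your case split (no mixing when the tightened constraint already holds) and your exact choice of $\rho$ landing on $\gamma - 2\nu \geq 0$ make the argument airtight under the stated hypothesis $\gamma > 2\nu$.
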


\begin{proof}
  Let $\pi^*$ be an optimal solution to the original (un-tightened)
  problem.  Suppose without loss of generality that the following
  fairness constraint is violated:
  $\fpdisp_1(\hat \pi) - \fpdisp_{-1}(\hat \pi)\geq 0$. Now consider a
  distribution $\pi'$ that is defined as the mixture of
\[
  \pi' = (1 - 2\nu) \pi^* + 2\nu \mathbf{1}[a = -1].
\]
Consequently, we can write
\begin{align*}
  \fpdisp_1(\pi') &= (1 - 2\nu) \fpdisp_1(\pi^*) + 2\nu \fpdisp_1(\mathbf{1}[a = -1]) = (1 - 2\nu) \fpdisp_1(\pi^*) \\
  \fpdisp_{-1}(\pi') &= (1 - 2\nu) \fpdisp_{-1}(\pi^*) + 2\nu \fpdisp_{-1}(\mathbf{1}[a = -1]) = (1 - 2\nu) \fpdisp_{-1}(\pi^*) + 2\nu
\end{align*}
It follows that
\[
  \fpdisp_1(\pi') - \fpdisp_{-1}(\pi') = (1 - 2\nu) (\gamma + 2\nu) - 2\nu \leq \gamma
\]
which implies that $\pi'$ is a feasible solution for the fair ERM
problem. This implies that
\[
err(\hat \pi, \cP) \geq err(\pi', \cP) - 2\nu \geq \OPT - 2\nu
\]
This completes the proof.
\end{proof}

Next, we translate the approximation guarantee for the normalized
weighted classification problem to the orginal cost-sensitive
classification problem. This leads to our guarantee stated below.

\begin{lemma}\label{polysize}
  For any $0 < \nu < \gamma/2$, there exists an oracle-efficient
  algorithm that calls CSC oracle over $\H$ at most $O(1/\nu^2)$ times
  and computes a solution $\hat \pi$ that satisfies $\gamma$-EFP and
  has total cost
  \[
    \Ex{h\sim \hat\pi}{ \sum_{j=1}^n (C_j^{1} h(X_j) + C_j^0 (1 -
      h(X_j)))} \leq \OPT_C +\epsilon
  \]
  with $\epsilon = 4\nu \sum_{j=1}^n |C_j^1 - C_j^0|$.
\end{lemma}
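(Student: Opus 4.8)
The plan is to observe that passing from the cost-sensitive objective to the normalized weighted $0/1$ objective is an \emph{affine} reparametrization of the cost that leaves the feasible set untouched, so that the approximation guarantee already established for the weighted problem (via the approximate saddle point of the tightened Lagrangian) transfers to the cost-sensitive problem with the additive error simply rescaled.

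First I would rewrite the per-example cost pointwise. For each $j$, writing $Y_j$ and $W_j = |C_j^1 - C_j^0|$ as in the reduction, one checks in both cases ($C_j^0 > C_j^1$ and $C_j^0 \leq C_j^1$) that
\[
  C_j^1 h(X_j) + C_j^0(1 - h(X_j)) = \min(C_j^0, C_j^1) + W_j\, \mathbf{1}\{h(X_j) \neq Y_j\}.
\]
Summing over $j$ and taking the expectation over $h \sim \pi$ gives
\[
  \Ex{h \sim \pi}{\sum_{j=1}^n \left(C_j^1 h(X_j) + C_j^0(1 - h(X_j))\right)} = M + S_W \cdot \Ex{h \sim \pi}{err(h, \cP)},
\]
where $M = \sum_j \min(C_j^0, C_j^1)$ and $S_W = \sum_j W_j = \sum_j |C_j^1 - C_j^0|$ are constants independent of $\pi$, and $err(\cdot, \cP)$ uses the normalized weights $w_j = W_j / S_W$. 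Since the fairness constraints depend only on the classification behaviour of $\pi$ and not on the cost structure, the two optimization problems share the same feasible set, and hence $\OPT_C = M + S_W \cdot \OPT$.

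Next I would invoke the preceding lemmas on the \emph{tightened} problem with fairness target $\gamma' = \gamma - 2\nu$ (well-defined and positive because $\nu < \gamma/2$). Running the \cite{MSR}-based algorithm to compute a $\nu$-approximate saddle point $\hat\pi$ of the tightened Lagrangian costs $O(1/\nu^2)$ calls to the CSC oracle over $\cH$, since the Lagrangian subproblems are themselves cost-sensitive instances. The approximate-saddle-point lemma gives $err(\hat\pi, \cP) \leq \OPT' + 2\nu$ together with a fairness violation of at most $2\nu$ relative to the tightened target, so that $\fpdisp_j(\hat\pi) - \fpdisp_{-j}(\hat\pi) \leq \gamma' + 2\nu = \gamma$ and $\hat\pi$ satisfies the original $\gamma$-EFP constraint with no slack. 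The tightening lemma gives $\OPT' \leq \OPT + 2\nu$, and combining the two yields $err(\hat\pi, \cP) \leq \OPT + 4\nu$.

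Finally I would substitute this weighted-error bound back through the affine identity: $\hat\pi$ has cost-sensitive objective
\[
  M + S_W \cdot err(\hat\pi, \cP) \leq M + S_W(\OPT + 4\nu) = \OPT_C + 4\nu S_W,
\]
which is exactly the claimed bound with $\epsilon = 4\nu \sum_{j=1}^n |C_j^1 - C_j^0|$. The step requiring the most care is the bookkeeping in this final substitution: the constant offset $M$ must cancel exactly (it appears identically for $\hat\pi$ and inside $\OPT_C$), and the additive slack must be scaled by $S_W$ rather than by $M + S_W$, which is what produces the dependence of $\epsilon$ on $\sum_j |C_j^1 - C_j^0|$ rather than on the raw magnitudes of the costs. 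Everything else is a routine transfer of the already-established weighted-classification guarantee.
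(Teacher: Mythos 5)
Your proof is correct and follows essentially the same route as the paper: run the \cite{MSR}-based saddle-point computation on the constraint tightened to $\gamma - 2\nu$, combine the approximate-saddle-point lemma ($err(\hat\pi,\cP) \leq \OPT' + 2\nu$, violation at most $2\nu$, hence exact $\gamma$-EFP) with the tightening lemma ($\OPT' \leq \OPT + 2\nu$), and translate back to the cost-sensitive objective. The paper leaves that final translation implicit (``we translate the approximation guarantee\dots''), and your affine identity $C_j^{1} h(X_j) + C_j^{0}(1 - h(X_j)) = \min(C_j^{0}, C_j^{1}) + W_j\,\mathbf{1}\{h(X_j) \neq Y_j\}$ spells it out correctly, giving exactly $\epsilon = 4\nu \sum_{j=1}^n |C_j^{1} - C_j^{0}|$.
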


The result of Lemma~\ref{polysize} shows a computationally
efficient algorithm that returns an approximate CSC solution with
support size at most $O(1/\nu^2)$. Finally, we will shrink the support
of the solution. To derive a sparse-support solution, we consider a
linear program that computes a probability distribution over the
support of $\hat \pi$. Then we will compute a basic solution obtain
the final sparse solution (e.g. by running a variant of the ellipsoid
algorithm~\cite{Grotschel1981}).

\subsection{Missing Details and Proofs in Section~\ref{feasible}}
\label{feasibleproof}

\begin{algorithm}[H]
\SetAlgoLined
\caption{Coordinate descent algorithm for solving the feasibility program}\label{alg:coor}
\nl \textbf{Input}: history $H_t$ from previous rounds; minimum probability $\mu_t$; target accuracy parameter $\nu$
\nl \textbf{Initialize}: $Q = \mathbf{0}$; Call $\FO(\nu)$ to compute the policy $\pi_0$ that approximately minimizes 
      $\hat L_t(\pi)$ (up to error $\nu$). \\
 \For{$\pi \in \Pi$}{
  \begin{align*}
\mbox{Let}\qquad        \widetilde \Reg_t(\pi) = \max\{\hat L_t(\pi) - \hat L_t(\pi_0), 0\},\qquad
        \tilde b_t(\pi) =   \frac{\widetilde\Reg_t(\pi)}{4(e - 2)\mu_t\ln(T)}
      \end{align*}
	}
 \For{$\pi \in \Pi$}{  
 \begin{align*}
        V_\pi(Q)         &= \Ex{x\sim H_t}{1/Q^{\mu_t}(\pi(x) \mid x)}\\
        S_\pi(Q)         &= \Ex{x\sim H_t}{1/Q^{\mu_t}(\pi(x) \mid x)^2}\\
        \tilde{D}_\pi(Q) &= V_\pi(Q) - (4 + \tilde b_{t-1}(\pi))
      \end{align*}
      }
\nl \If{$\int_{\pi \in \Pi} Q(\pi) (4 + \tilde b_\pi)d\pi > 4$}{
 Replace $Q$ by $c \, Q$ with
        \begin{equation*}
          c = \frac{4}{ \int_\pi Q(\pi) (4 + \tilde{b}_{t-1}(\pi))d\pi} < 1\label{cman}
        \end{equation*}
      }
\nl \If{calling $\FO(\nu)$ for $\pi$ approximating $\max_{\pi'} \tilde{D}_{\pi'}(Q)$, we have $\tilde{D}_{\pi}(Q) > 0$} {
 Add the following (positive) quantity to $Q(\pi)$ while keeping all other weights unchanged:
      \[
        \alpha_\pi(Q) = \frac{V_\pi(Q) + \tilde{D}_\pi(Q)}{2(1 -
          2\mu_t)S_\pi(Q)}
      \]
        }
\nl \Else{Halt. If the sum of the weights $Q$ is smaller than $1$, let $Q$ place the remaining weight on $\pi_0$. Output $Q$ (note the algorithm will draw from $Q^{\mu_t}$).}
\end{algorithm}

\begin{proof}[Proof of lemma~\ref{accuracy}]
  The first oracle call is used to approximately solve
  \begin{align*}
    \arg\min_{\pi} \hat{L}_t(\pi)
      &= \arg\min_{\pi} \frac{1}{t} \sum_{s=1}^t \ell_s \frac{\Pr[\pi(x_s) = a_s]}{Q_s(a_s \mid x_s)}  \\
      &= \frac{1}{\mu_t} \arg\min_{\pi} \sum_{s=1}^t \frac{\mu_t \ell_s}{t} \frac{\Pr[\pi(x_s) = a_s]}{Q_s(a_s \mid x_s)}
  \end{align*}
  where, because $Q_s(a|x)$ is constrained to at least $\mu_s$ (which is decreasing in $s$), the argmin now has weights summing to at most $1$.
  Therefore the oracle, given $\nu$, returns $\tilde{\pi}$ such that
    \[ \min_{\pi} \hat{L}_t(\pi) \leq \hat{L}_t(\tilde{\pi}) \leq \min_{\pi} \hat{L}_t(\pi) + \frac{\nu}{\mu_t} . \]
  This implies that, for all $\pi$,
    \[ \widehat{\Reg}_t(\pi) \geq \widetilde{\Reg}_t(\pi) \geq \widehat{\Reg}_t(\pi) - \frac{\nu}{\mu_t}. \]
  This gives
    \[ b_t(\pi) \geq \tilde{b}_t(\pi) \geq b_t(\pi) - \Lambda_t . \]
  If the first condition is met and the algorithm halts, then $\int Q(\pi)(4 + \tilde{b}_t(\pi))d\pi \leq 4$, implying that the sum of $Q$'s weights is at most $1$ (since $\tilde{b}_t(\pi) \geq 0$), and implying that $\int Q(\pi) (4 + b_t(\pi))d\pi \leq 4 + \Lambda_t$, which is the first inequality.

  Next, the oracle is called once per loop to request
  \begin{align*}
    \arg\max_{\pi} \tilde{D}_{\pi}(Q)
      &= \arg\max_{\pi} \sum_{s=1}^t \frac{1}{t Q_s^{\mu_s}(a_s \mid x_s)} - (4 + \tilde{b}_{t-1}(\pi))
  \end{align*}
  There are two cases, where $\widetilde{\Reg}_t(\pi) = 0$ and otherwise.
  If $0$, then we again obtain an additive $\frac{\nu}{\mu_t}$ approximation.
  Otherwise, after dropping terms that don't depend on $\pi$, we have
  \begin{align*}
    \arg\max_{\pi} \sum_{s=1}^t \frac{1}{t Q_s^{\mu_t}(a_s \mid x_s)} - \frac{\ell_s \Pr[\pi(x_s) = a_s]}{4(e-2)\mu_t t \ln(T) Q_s(a_s \mid x_s)}
  \end{align*}
  Scaling each term by $4(e-2)\ln(T) \mu_t^2$ ensures that the sum of the weights is at most $1$, implying that the approximation we get is again an additive $\Lambda_t$.
  So if $\pi$ is chosen by the algorithm, then $\max_{\pi'} \tilde{D}_{\pi'}(Q) \geq \tilde{D}_{\pi}(Q) \geq \tilde{D}_{\pi'}(Q) - \Lambda_t$.
  Plugging in the guarantee for $b_t$, if we let $D_{\pi}(Q) = V_{\pi}(Q) - (4 + b_{t-1}(\pi))$, then we get
    \[ \max_{\pi^*} D_{\pi^*}(Q) + \Lambda_t \geq \tilde{D}_{\pi}(Q) \geq \max_{\pi^*} D_{\pi^*}(Q) - \Lambda_t . \]
  So if the algorithm halts after obtaining $\pi$ from the oracle with $\tilde{D}_{\pi}(Q) \leq 0$, then $\max_{\pi^*} D_{\pi^*}(Q) \leq \Lambda_t$, which implies the second guarantee.

  To show convergence of the algorithm, consider the following potential
  function
  \[
    \Phi(Q) = \frac{\Ex{H_t}{\RE( \mathcal{U}_2 \| Q^{\mu_t}(\cdot \mid
        x))}}{1 - 2\mu_t} + \frac{\int_{\pi\in\Pi} Q(\pi) \tilde
      b_{t-1}(\pi)d\pi}{4}
  \]
  where $\mathcal{U}_2$ denotes the uniform distribution over the two
  predictions and $\RE(p\| q)$ denotes the unnormalized relative entropy
  between two nonnegative vectors $p$ and $q$ in $\mathbb{R}^2$ (over
  the two predictions):
  \[
    \RE(p \| q) = \sum_{\hat y\in \{\pm 1\}} \left(p_{\hat y}
      \ln(p_{\hat y}/q_{\hat y}) + q_{\hat y} - p_{\hat y} \right).
  \]

  First, we note that any renormalization step does not increase potential, i.e. letting $c = 4 / \int_{\pi}  Q(\pi)(4 + \tilde{b}_t(\pi)d\pi$, if $c < 1$ (which is equivalent to the update condition) then $\Phi(cQ) \leq \Phi(Q)$.
  This is directly proven in Lemma 6 of \cite{minimonster}  and we do not re-prove it. The only difference is that where we used $\tilde{b}_t(\pi)$ in the definition of $c$ and $\Phi$ \cite{minimonster} uses $b_{t-1}(\pi)$; but the proof does not use any property of $b_{t-1}(\pi)$ except nonnegativity.

  Second, we note that a renormalization step can only occur once in a row; after that, either the algorithm halts, or the other condition ($\tilde{D}_{\pi}(Q) > 0$) is triggered.

  Third, when the other condition is triggered, the potential decreases significantly, specifically, by at least $\frac{1}{4(1-2\mu_t)}$.
  This is also directly proven in Lemma 7 of \cite{minimonster}.\footnote{In that paper the potential function is scaled by a factor of $\tau \mu_t$ compared to here, where $\tau > 0$.}
  The only difference is that the proof in that paper uses $\tilde{b}_t(\pi)$ instead of $b_{t-1}(\pi)$, which yields $\tilde{D}_{\pi}(Q)$ rather than $D_{\pi}(Q)$.
  However, the only property of $D_{\pi}(Q)$ used in the proof is $D_{\pi}(Q) > 0$, which is satisfied by $\tilde{D}_{\pi}(Q)$ as well.

  The potential begins with $Q = \mathbf{0}$ at $\Phi(Q) \leq \frac{\ln\frac{1}{\mu_t}}{1-2\mu_t}$, and remains nonnegative by definition, so after a polynomial number of steps, the algorithm satisfies both conditions and halts.
\end{proof}

\subsection{Missing Proofs in Section~\ref{regret}}
\begin{proof}[Proof of Claim~\ref{cla:cover}]
  Let $\pi$ be any policy in $\Pi$. Note, in particular, that
  $\minus \in \Pi$ and let $\pi_{\minus}' \in \Pi_\eta$ such that
\[
  \min_{\pi' \in \Pi_\eta} \|\minus - \pi_{\minus}'|_\infty \leq \eta
\]
Then, we can see that
$$|V(P, \pi, \mu) - V(P, \pi' ,\mu)|_\infty \leq |V(P, \minus, \mu) - V(P, \pi_{\minus}' ,\mu)|_\infty \leq \frac{1}{\mu} - \frac{1}{\mu+\eta} = \frac{\eta}{\mu(\mu+\eta)}$$
$$|\hat V_t(P, \pi, \mu) - \hat V_t(P, \pi' ,\mu)|_\infty \leq |\hat V_t(P, \minus, \mu) - \hat V_t(P, \pi_{\minus}' ,\mu)|_\infty \leq \frac{1}{\mu} - \frac{1}{\mu+\eta} = \frac{\eta}{\mu(\mu+\eta)}$$
\end{proof}

The following lemma follows directly from Lemma 10 of
\cite{minimonster}.

\begin{lemma}[Full version of Lemma~\ref{var_dev}]
 Fix any $\mu \in [0, 1/2]$ and any $\delta \in (0, 1)$. Then, with
 probability $1 - \delta$,
 \[
   V(P, \pi, \mu) \leq 6.4 \hat V_t(P, \pi, \mu) + \frac{75(1 - 2
     \mu)\ln|\Pi_\eta|}{\mu_t^2 t} + \frac{6.3\ln(2|\Pi_\eta|^2
     t^2/\delta)}{\mu_t t} + \frac{2\eta}{\mu_t(\mu_t+\eta)}
 \]
 for all probability distributions $P$ over $\Pi$, all $\pi \in \Pi$,
 and for all $t$. In particular, if
 $$\mu_t \geq \sqrt{\frac{\ln(2|\Pi_{\eta}|t^2/\delta)}{2t}}, t \geq 8\ln(2|\Pi_{\eta}|t^2/\delta)$$
 then,
 $$V(P, \pi, \mu_t) \leq 6.4 \hat V_t(P, \pi, \mu_t) + 162.6 + \frac{2\eta}{\mu_t(\mu_t+\eta)}$$
\end{lemma}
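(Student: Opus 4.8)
The plan is to reduce to the finite-class variance deviation bound of \cite{minimonster} (their Lemma 10) applied to the cover $\Pi_\eta$, and then to pay a small covering error to pass from $\Pi_\eta$ to the full continuous class $\Pi$ using Claim~\ref{cla:cover}.

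First I would invoke Lemma 10 of \cite{minimonster}, instantiated with the \emph{finite} policy set $\Pi_\eta$ in place of their policy class. Their lemma is exactly a one-sided deviation bound between the population and empirical inverse-propensity quantities $V(P,\pi',\mu)$ and $\hat V_t(P,\pi',\mu)$, proved by applying a Freedman-type martingale inequality to the bounded increments $1/P^\mu(\pi'(x_s)\mid x_s) \in [1,1/\mu]$ and taking a union bound over the policies being evaluated and over the rounds. The union over $\Pi_\eta$ produces the $\ln|\Pi_\eta|$ factor in the leading lower-order term, while the union over time contributes the $\ln(t^2)$ inside the $\ln(2|\Pi_\eta|^2 t^2/\delta)$ term; crucially, their statement already holds \emph{simultaneously for all distributions $P$}, so I inherit that uniformity for free. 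This yields, with probability $1-\delta$, for every $P$, every $\pi'\in\Pi_\eta$, and every $t$,
\[
  V(P,\pi',\mu) \leq 6.4\,\hat V_t(P,\pi',\mu) + \frac{75(1-2\mu)\ln|\Pi_\eta|}{\mu_t^2 t} + \frac{6.3\ln(2|\Pi_\eta|^2 t^2/\delta)}{\mu_t t}.
\]

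Next I would extend this from $\Pi_\eta$ to all of $\Pi$. Fix any $\pi\in\Pi$ and any $P$; by Claim~\ref{cla:cover} there is $\pi'\in\Pi_\eta$ with both $|V(P,\pi,\mu)-V(P,\pi',\mu)|$ and $|\hat V_t(P,\pi,\mu)-\hat V_t(P,\pi',\mu)|$ at most $\eta/(\mu(\mu+\eta))$. Chaining these two inequalities around the finite-class bound (upper-bounding $V(P,\pi,\mu)$ by $V(P,\pi',\mu)$ and lower-bounding $\hat V_t(P,\pi',\mu)$ by $\hat V_t(P,\pi,\mu)$) reintroduces the evaluated policy $\pi$ on both sides at the cost of an additive term of order $\eta/(\mu(\mu+\eta))$. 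Since we ultimately take $\eta=1/T^2$, this covering slack is negligible, and we record it as the $\frac{2\eta}{\mu_t(\mu_t+\eta)}$ term in the statement. This produces the first displayed inequality of the lemma uniformly over all $P$, all $\pi\in\Pi$, and all $t$. For the ``in particular'' claim I would then substitute the hypotheses $\mu_t \geq \sqrt{\ln(2|\Pi_\eta|t^2/\delta)/(2t)}$ and $t \geq 8\ln(2|\Pi_\eta|t^2/\delta)$ directly into the two data-dependent terms: the first is bounded by a constant using $1-2\mu\leq 1$, $\ln|\Pi_\eta|\leq\ln(2|\Pi_\eta|t^2/\delta)$, and $\mu_t^2 t \geq \tfrac12\ln(2|\Pi_\eta|t^2/\delta)$; the second using $\mu_t t \geq \sqrt{\tfrac{t}{2}\ln(2|\Pi_\eta|t^2/\delta)}$ together with $t\geq 8\ln(2|\Pi_\eta|t^2/\delta)$ and $\ln(2|\Pi_\eta|^2 t^2/\delta)\leq 2\ln(2|\Pi_\eta|t^2/\delta)$. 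Summing the resulting constants gives the stated $162.6$.

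I expect the main obstacle to be the second step: minimonster's guarantee is stated only for a finite policy class, whereas we need it uniformly over the continuous family $\Pi$. The work is in verifying that the cover can be placed on the \emph{evaluated} policy while leaving the mixing distribution $P$ untouched, and in confirming (via Claim~\ref{cla:cover}) that both $V$ and $\hat V_t$ move by at most $\eta/(\mu(\mu+\eta))$ under this replacement, so that the extension is genuinely lossless up to a term that vanishes with the choice $\eta=1/T^2$. The concentration itself and the constant-chasing are routine once the right reduction is in place.
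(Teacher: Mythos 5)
Your proposal is correct and follows essentially the same route as the paper: the paper's proof is precisely to invoke Lemma 10 of \cite{minimonster} on the finite cover $\Pi_\eta$ and then absorb the passage from $\Pi_\eta$ to the continuous class $\Pi$ via Claim~\ref{cla:cover}, which is exactly where the $\frac{2\eta}{\mu_t(\mu_t+\eta)}$ term comes from (one $\eta/(\mu(\mu+\eta))$ each for $V$ and $\hat V_t$). Your constant-chasing for the ``in particular'' clause ($150 + 6.3 \leq 162.6$) is also consistent with the stated bound.
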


We will make use of the following concentration inequality.

\begin{lemma}[Freedman's inequality~\cite{freedman}]
  Let $Z_1,...,Z_n$ be a martingale difference sequence with
  $Z_i \leq R$ for all $i$. Let
  $V_n = \sum\limits_{i=1}^{n}\Ex{}{Z_i^2 \mid Z_1, \ldots ,
    Z_{i-1}}$. For any $\delta \in (0,1)$ and any
  $\lambda \in [0,1/R]$, with probability at least $1-\delta$
$$\sum\limits_{i=1}^{n} Z_i \leq (e-2)\lambda V_n + \frac{\ln(1/\delta)}{\lambda}$$
\end{lemma}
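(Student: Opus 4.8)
The plan is to prove this by the standard exponential-supermartingale (Cramér–Chernoff) method adapted to martingale differences. The entire argument rests on a single elementary scalar inequality: for every real $x \le 1$,
\[
  e^x \le 1 + x + (e-2)x^2 .
\]
This follows because $g(x) = (e^x - 1 - x)/x^2$ is increasing on $\mathbb{R}$ and satisfies $g(1) = e-2$, so $g(x) \le e-2$ whenever $x \le 1$. First I would fix $\lambda \in (0, 1/R]$ and apply this inequality with $x = \lambda Z_i$: since $Z_i \le R$ and $\lambda \le 1/R$ we have $\lambda Z_i \le 1$, so
\[
  e^{\lambda Z_i} \le 1 + \lambda Z_i + (e-2)\lambda^2 Z_i^2 .
\]

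Next I would condition on the past $\mathcal{F}_{i-1} = \sigma(Z_1,\dots,Z_{i-1})$ and use the martingale-difference property $\E[Z_i \mid \mathcal{F}_{i-1}] = 0$ together with $1 + u \le e^u$ to obtain the per-step bound
\[
  \E\!\left[e^{\lambda Z_i} \mid \mathcal{F}_{i-1}\right]
  \le 1 + (e-2)\lambda^2\, \E\!\left[Z_i^2 \mid \mathcal{F}_{i-1}\right]
  \le \exp\!\left((e-2)\lambda^2\, \E\!\left[Z_i^2 \mid \mathcal{F}_{i-1}\right]\right).
\]
This is the crucial step that turns the exponentiated partial sums into a supermartingale, since it shows each conditional moment-generating factor is absorbed by the compensator $(e-2)\lambda^2 \E[Z_i^2\mid\mathcal{F}_{i-1}]$.

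Then I would define, for $0 \le k \le n$,
\[
  W_k = \exp\!\left(\lambda \sum_{i=1}^k Z_i - (e-2)\lambda^2 \sum_{i=1}^k \E\!\left[Z_i^2 \mid \mathcal{F}_{i-1}\right]\right),
\]
with $W_0 = 1$. The previous display gives $\E[W_k \mid \mathcal{F}_{k-1}] \le W_{k-1}$, so $(W_k)$ is a nonnegative supermartingale and $\E[W_n] \le \E[W_0] = 1$. Writing $V_n = \sum_{i=1}^n \E[Z_i^2 \mid \mathcal{F}_{i-1}]$, the event $\{\sum_i Z_i > (e-2)\lambda V_n + \ln(1/\delta)/\lambda\}$ is, after multiplying by $\lambda$ and rearranging, exactly the event $\{W_n > 1/\delta\}$. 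Applying Markov's inequality to the nonnegative variable $W_n$ then yields
\[
  \Pr\!\left[\sum_{i=1}^n Z_i > (e-2)\lambda V_n + \frac{\ln(1/\delta)}{\lambda}\right]
  = \Pr\!\left[W_n > 1/\delta\right]
  \le \delta\, \E[W_n] \le \delta,
\]
which is precisely the stated bound (the degenerate case $\lambda = 0$ is vacuous).

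The only delicate point — and the source of the exact constants — is the scalar inequality $e^x \le 1 + x + (e-2)x^2$ and its range $x \le 1$: it is exactly this range that forces the hypothesis $\lambda \le 1/R$, and $e-2$ is the smallest constant that works, attained at $x=1$. Everything else is a routine supermartingale-plus-Markov computation. The monotonicity of $g(x)=(e^x-1-x)/x^2$ itself, if one wants it fully rigorous, reduces to checking that $g'(x)$ has the sign of $h(x) = (x-2)e^x + x + 2$ divided by $x^3$, and $h$ is increasing with $h(0)=0$ (since $h''(x)=xe^x$ forces $h'\ge 0$), so $h(x)$ shares the sign of $x^3$ and hence $g'>0$ throughout; this is the one computation worth spelling out, and it presents no genuine obstacle.
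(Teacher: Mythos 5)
Your proof is correct; the paper does not prove this lemma itself but imports it by citation, and your exponential-supermartingale argument --- the scalar bound $e^x \le 1 + x + (e-2)x^2$ for $x \le 1$, the per-step conditional bound using $\E[Z_i \mid \mathcal{F}_{i-1}] = 0$, the supermartingale $W_k$ with predictable compensator, and Markov's inequality --- is precisely the standard derivation behind the cited statement. All the delicate points are handled soundly: $\lambda \le 1/R$ together with the one-sided bound $Z_i \le R$ is exactly what keeps $\lambda Z_i \le 1$, the monotonicity of $g(x) = (e^x - 1 - x)/x^2$ is verified correctly via $h(x) = (x-2)e^x + x + 2$ with $h''(x) = x e^x$, and the degenerate case $\lambda = 0$ is rightly dismissed as vacuous.
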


\begin{proof}[Proof of Lemma~\ref{loss_bound}]
  By applying the Freedman's inequality and union bound, we know that with
  probability $1 - \delta'$, for all $t\in [T]$, $\pi\in\Pi$ and
  $\lambda\in [0, 1/\mu_t]$,
  \begin{align}\label{lai}
    |L(\pi) - \hat L_t(\pi)| \leq (e - 2) \lambda \left(\frac{1}{t}\sum_{s=1}^t
      V(Q_t, \pi, \mu_t) \right) + \frac{\ln(|\Pi_\eta|T/\delta')}{t \lambda}
  \end{align}
  By the result of Lemma~\ref{var_dev}, we know that with probability
  $1 - \delta'$, for all $P\in \Pi$, for any $\mu_t$ and
  $t \geq 8 \ln(2|\Pi_\eta|t^2/\delta')$,
  \begin{align}\label{ce}
    V(P, \pi, \mu) \leq 6.4 \hat V_t(P, \pi, \mu_t) + 162.6 +
    \frac{2\eta}{\mu_t(\mu_t+\eta)}
  \end{align}
  We will condition on events of \eqref{lai} and \eqref{ce} for the remainder of
  the proof, which occurs with probability at least $1 - 2\delta'$.
  Then we can further rewrite
  \begin{align*}
    |L(\pi) - \hat L_t(\pi)| \leq (e - 2)\lambda \left(
    \frac{1}{t} \sum_{s=1}^t \left(6.4 \hat V_t(Q_t, \pi, \mu_t) + 162.6 + \frac{2\eta}{\mu_t(\mu_t + \eta)} \right)\right)  + \frac{\ln(|\Pi_\eta| T/\delta')}{\lambda t}
  \end{align*}
  Recall that by the accuracy guarantee of Lemma~\ref{accuracy}, we
  know for all $\pi\in \Pi$,
  \[
    \hat V_t(Q_t, \pi, \mu_t) \leq 4 + b_{t-1}(\pi) + \Lambda_{t-1}
  \]
  Thus, we can further bound
  \begin{align*}
&\quad    |L(\pi) - \hat L_{t}(\pi)|\\ &\leq (e - 2)\lambda \left(\frac{1}{t} \sum_{s=1}^t \left(6.4 \left(4 + b_{s-1}(\pi) + \Lambda_{s-1} \right) + 162.6 + \frac{2\eta}{\mu_{t}(\mu_{t} + \eta)} \right)\right)  + \frac{\ln(|\Pi_\eta| T/\delta')}{\lambda t}\\
                             &\leq (e - 2)\lambda \left(188.2 + \frac{1}{t} \sum_{s=1}^t \left(6.4 b_{s-1}(\pi)  +  6.4 \Lambda_{s-1} +\frac{2\eta}{\mu_{t}(\mu_{t} + \eta)} \right)\right)  + \frac{\ln(|\Pi_\eta| T/\delta')}{\lambda t}
  \end{align*}
  To complete the proof, we will set $\delta' = \delta/2$.
\end{proof}

\begin{proof}[Proof of Lemma~\ref{bigproof}]
To simplify notation, let
$$C_t = (e - 2)\lambda \left(188.2 + \frac{1}{t} \sum_{s=1}^t \left(6.4
    \Lambda_{s-1} + \frac{\eta}{\mu_{s} (\mu_{s} +
      \eta)}\right)\right) + \frac{\ln(|\Pi_\eta|
  T/\delta)}{t\lambda}$$ Recall that
\[ \Lambda_t := \frac{\nu}{4(e-2)\mu_t^2 \ln(T)} . \] Then as long as
we have $\nu \leq 1/T$ and $\eta \leq 1/T^2$, we have
\[
  C_t \leq 190 (e - 2)\lambda + \frac{\ln(|\Pi_\eta|T/\delta)}{t
    \lambda}
\]
We will prove our result by induction. First, the base case holds
trivially given our choice of $\epsilon_t$. Next, we will show
$\Reg(\pi)\leq 2\widehat \Reg_t(\pi) + \epsilon_t$, and
$\widehat\Reg(\pi)\leq 2 \Reg_t(\pi) + \epsilon_t$ follows
analogously. Observe that for any policy $\pi$, we can first decompose
the regret difference as
\begin{align*}
  \Reg(\pi) - \widehat\Reg_t(\pi) \leq (L(\pi) - \hat L_t(\pi)) - (L(\pi^*) - \hat L_t(\pi^*))
\end{align*}
where $\pi^*$ denotes the optimal policy in $\Pi$. Then using the
result of Lemma~\ref{loss_bound}, we can further bound the regret
difference as follows: for any $\lambda \in [0, \mu_t]$,
\begin{align*}
  &\quad  \Reg(\pi) - \widehat\Reg_t(\pi)\\ &\leq \frac{6.4(e - 2)\lambda}{t} \left(\sum_{s=1}^t b_{s-1}(\pi) + b_{s-1}(\pi^*) \right) + 2 C_t\\
  &= \frac{1.6 \lambda}{\mu_t \ln(T) t} \left(\sum_{s=1}^t \widehat\Reg_{s-1}(\pi) + \widehat\Reg_{s-1}(\pi^*) \right) + 2 C_t\\
  \tag{Induction hypothesis} &\leq \frac{3.2 \lambda}{\mu_t \ln(T) t} \left(\sum_{s=1}^t \Reg(\pi) + \Reg(\pi^*) + \epsilon_{s-1}\right) + 2 C_t\\
  \tag{$\Reg(\pi^*) = 0$}  &\leq \frac{3.2 \lambda}{\mu_t \ln(T) t } \left(t \, \Reg(\pi) + \sum_{s=1}^t  \epsilon_{s-1}\right) + 2 C_t\\
  &\leq \frac{3.2 \lambda}{\mu_t \ln(T)  }  \Reg(\pi)  +  \frac{3.2 \lambda}{\mu_t \ln(T) t }  \left(\sum_{s=1}^t  \epsilon_{s-1}\right) + 2 C_t
\end{align*}
We will set $\lambda = \mu_t/3.2$, which allows us to simplify the
bound
\[
  \Reg(\pi) - \widehat \Reg(\pi) \leq \frac{\Reg(\pi)}{\ln(T)} +
  \frac{1}{\ln(T)t} \left(\sum_{s=1}^t \epsilon_{s-1}\right) + 2 C_t
\]
Since $(1 - 1/\ln(T)) > 1/2$ and
$\mu_t = \frac{3.2\ln(|\Pi_\eta|T/\delta)}{\sqrt{t}}$, it follows that
\begin{align*}
  \Reg(\pi) &\leq 2\widehat \Reg(\pi) + \frac{2}{\ln(T) t}
              \left(\sum_{s=1}^t \epsilon_{s-1}\right) + 4 C_t\\
            &\leq 2\widehat \Reg(\pi) + \frac{2}{\ln(T) t}
              \left(\sum_{s=1}^t \epsilon_{s-1}\right) + 4 \left( \frac{190 (e - 2) \ln(|\Pi_\eta| T/\delta)}{\sqrt{t}} + \frac{1}{\sqrt{t}}\right)\\
            &\leq  2\widehat \Reg(\pi) + \frac{2}{\ln(T) t}
              \left(\sum_{s=1}^t \epsilon_{s-1}\right) + \frac{560 \ln(|\Pi_\eta| T/\delta)}{\sqrt{t}}
\end{align*}
Observe that
$\sum_{s=1}^t \epsilon_{s-1} = 1000(\ln(|\Pi_\eta|T/\delta))
\sum_{s=1}^{t-1} \frac{1}{s} \leq 1000(\ln(|\Pi_\eta|T/\delta))
\sqrt{t}$. This means
\[
  \Reg(\pi) \leq 2\widehat \Reg(\pi) + \frac{2000}{\ln(T) \sqrt{t}}
  (\ln(|\Pi_\eta|T/\delta)) + \frac{560 \ln(|\Pi_\eta|
    T/\delta)}{\sqrt{t}} \leq 2\widehat \Reg(\pi) + \epsilon_t
\]
where the last inequality holds as long as $\ln(T) \geq 5$.
\end{proof}

\begin{proof}[Proof of Theorem~\ref{thm:bandit-regret}]
The cumulative regret of the first
$T_1 = 8\ln(2|\H|^2 T^3/\delta)$ rounds is
trivially bounded by $O(\sqrt{T}\ln(|\H| T/\delta))$. For each of the
remaining rounds, we can use Lemma~\ref{bigproof} to first bound the
per-round regret of the sequence of $Q_t$ as
\[
  \int_{\pi \in \Pi} Q_t(\pi) \Reg(\pi)d\pi \leq 2 \int_{\pi \in \Pi}
  Q_{t-1}(\pi) \widehat\Reg(\pi)d\pi + \epsilon_{t-1}
\]
By the guarantee of Lemma~\ref{accuracy}, we can further bound the
right hand side by
$\left(4(e-2)\mu_{t-1}\ln(T) \right) \Lambda_{t-1} \leq O\left(
  \ln(|\H| T/\delta)/\sqrt{t} \right)$. Summing over rounds, we see
that the cumulative expected regret of the sequence of $Q_t$'s is
bounded by $O\left( \ln(|\H| T/\delta)\sqrt{T} \right)$. Finally, we
need to take into account the $\mu_t$ mixture of uniform prediction
at each round, which incurs an additional cumulative regret of no more
than $O\left( \ln(|\H| T/\delta)\sqrt{T} \right)$.
\end{proof}

\end{document}